\newtheorem{definition}{Definition}
\newtheorem{lemma}{Lemma}
\newtheorem{corollary}{Corollary}
\newtheorem{theorem}{Theorem}
\newtheorem{assumption}{Assumption}
\DeclarePairedDelimiter{\norm}{\lVert}{\rVert}
\DeclarePairedDelimiter{\ceil}{\lceil}{\rceil}
\title{Learning with Safety Constraints: \\Sample Complexity of Reinforcement Learning for Constrained MDPs}
\author{Aria HasanzadeZonuzy, Archana Bura,
Dileep Kalathil and
Srinivas Shakkottai \thanks{Authors are with the Department of Electrical and Computer Engineering, Texas A\& M University, Texas, USA. {\small Email:\{azonuzy, dileep.kalathil, sshakkot\}@tamu.edu}}}
\date{}
\begin{document}

\maketitle

\begin{abstract}

Many physical systems have underlying safety considerations that require that the policy employed ensures the satisfaction of a set of constraints.  The analytical formulation usually takes the form of a Constrained Markov Decision Process (CMDP).%, where the constraints are some function of the occupancy measure generated by the policy.
We focus on the case where the CMDP is unknown, and RL algorithms obtain samples to discover the model and compute an optimal constrained policy.  Our goal is to characterize the relationship between safety constraints and the number of samples needed to ensure a desired level of accuracy---both objective maximization and constraint satisfaction---in a PAC sense.  We explore two classes of RL algorithms, namely, (i) a generative model based approach, wherein samples are taken initially to estimate a model, and (ii) an online approach, wherein the model is updated as samples are obtained.  Our main finding is that compared to the best known bounds of the unconstrained regime, the sample complexity of constrained RL algorithms are increased by a factor that is logarithmic in the number of constraints, which suggests that the approach may be easily utilized in real systems. % We verify our findings with simulations on canonical CMDPs.  

\end{abstract}

\section{Introduction}
\label{sec:intro}

Markov Decision Processes (MDPs) are used to model a variety of systems for which stationary control policies are appropriate.  In many cyber-physical systems (algorithmically controlled physical systems) restrictions may be placed on functions of the probability with which states may be visited.  For example, in power systems, the frequency must be kept within tolerable limits, and allowing it to go outside these tolerances often might be unsafe.  Similarly, in communication systems the number of transmissions that may be made in a time interval is limited by an average radiated power constraint due to interference and human safety considerations.  The number of constraints can be large, since they can represent physical limitations (e.g., communication or transmission link capacities), performance requirements (per-flow packet delays, tolerable frequencies) and so on.  The Constrained-MDP (CMDP) framework is used to model such circumstances~\cite{altman}.      %Here, the idea is that any stationary policy will generate an occupancy measure over the states, and the constraint must be evaluated over this occupancy measure.  It is also well known that randomization is needed in optimal policies for CMDP problems.  

In this paper, our objective is to design simple algorithms to solve CMDP problems under an unknown model. Whereas the goal of a typical model-based RL approach would take as few samples as possible to quickly determine the optimal policy, minimizing the number of samples taken is even more important in the CMDP setting.  This because constraints are violated during the learning process, and it might be critical to keep the number of such violations as low as possible due to safety considerations mentioned earlier, and yet ensure that the system objectives are maximized.  Hence, determining how the joint metrics of objective maximization and safety violation evolve over time as the model becomes more and more accurate is crucial to understand the efficacy of a proposed RL algorithm for CMDPs.

\paragraph{Main Contributions:}  Our goal is to analyze the sample complexity of solving CMDPs to a desired accuracy with a high probability in both objective and constraints in the context of finite horizon (episodic) problems.  We focus on two figures of merit pertaining to objective maximization and constraint satisfaction in a probably-approximately-correct (PAC) sense.  
%sense, namely, (i) the number of time-steps that elapse before the objective is maximized,  and (ii) the number of time-steps that elapse before the constraint is satisfied.  Together, they form a means by which we can quantify the performance of different approaches towards CMDP.  
Our main contributions are as follows:\vspace{0.05in}\\
(i) We develop two model-based algorithms, namely, (i) a generative approach that obtains samples initially then creates a model, and (ii) an online approach in which the model is updated as time proceeds.  In both cases, the estimated model might have no solution, and we utilize a confidence-ball around the estimate to ensure that a solution may be found with high probability (assuming that the real model has a solution).\vspace{0.05in}\\
(ii) The algorithms follow the general pattern of model construction or update, followed by a solution using linear programming (LP) of the CMDP generated in this manner, with the addendum that the LP is extended to account for the fact that a search is made over the entire ball of models given the current samples. This procedure not only contributes to optimism as \cite{efroni}, but also guarantees feasibility of the solution. \vspace{0.05in}\\
(iii) We develop PAC-type sample complexity bounds for both algorithms, accounting for both objective maximization and constraint satisfaction.  %The approach essentially attempts to determine the concentration rates of the models and the consequent impacts on the metrics concerned.  
The general intuition is that the model accuracy should be higher than in the unconstrained case and, our main finding agrees with this intuition. Furthermore, comparing our main results with lower bounds on sample complexity of MDPs~\cite{azar,ucfh}, we discover that the increase in the sample complexity is by a logarithmic factor in the number of constraints and a size of state space. However, there are no lower bound results for CMDPs to the best of our knowledge. %\vspace{0.05in}\\

As mentioned above, the number of constraints in cyber-physical systems can be large.   Our result indicating logarithmic scaling with the number of constraints indicates that the number of constraints is not a major concern in solving unknown CMDPs via RL, hence indicating that the practicality of applying the constrained RL approach to cyber-physical systems applications. 

%It therefore suggests that problems where the original MDP has an acceptable sample complexity, which is polynomial in the number of state action pairs and the horizon, the constrained version is also likely to be nearly as easy to solve, hence extending the RL approach to the space of safety-constrained systems.

\paragraph{Related Work:}

Much work in the space of CMDP has been driven by problems of control, and many of the algorithmic approaches and applications have taken a control-theoretic view~\cite{altman,altman2002applications,borkar2005actor,borkar2014risk,singh2018throughput,singh2014fluctuation}.  The approach taken is to study the problem under a known model, and showing asymptotic convergence of the solution method proposed. There are also studies on constrained partially observable MDPs such as \cite{value-iteration-1, value-iteration-2}. Both of these works propose algorithms based on value iteration requiring solving linear program or constrained quadratic program.

Extending CMDP approaches to the context on an unknown model has also mostly focused on asymptotic convergence \cite{bhatnagar2012online,chow2018lyapunov,tessler2018reward,paternain2019constrained} under Lagrangian methods to show zero eventual duality gap. \cite{ipo} also proposes an algorithm based on Lagrangian method, but proves that this algorithm achieves a small eventual gap. On the other hand empirical works built on Lagrangian method has also been proposed \cite{modiano}.

A parallel theme has been related to the constrained bandit case, wherein the the underlying problem, while not directly being an MDP, bears a strong relation to it.  Work such as \cite{badanidiyuru2013bandits,wu2015algorithms,amani2019linear} consider such constraints, either in a knapsack sense, or on the type of controls that may be applied in a linear bandit context.

Closest to our theme are parallel works on CMDPs.  For instance, \cite{zheng2020constrained} and \cite{icml2} present results in the context of unknown reward functions, with either a known stochastic or deterministic transition kernel.  Other work~\cite{icml1} focuses on asymptotic convergence, and so does not provide an estimate on the learning rate.  Finally, \cite{efroni} explores algorithms and themes similar to ours, but focuses on characterizing objective and constrained regret under different flavors of online algorithms, which can be seen as complementary to or work.  Since there is no direct relation between regret and sample complexity~\cite{ubev}, applying their regret approach to our setting gives relatively weak sample complexity bounds.  Our discovery of a general principle of logarithmic increase in sample complexity with the number of constraints also distinguishes our work.

% Our work can be seen as the complement to these, and provides a sample complexity view point.

% ~\cite{zheng2020constrained}

% where the reward is unknown~\cite{zheng2020constrained} and on regret analysis of CMDPs~\cite{efroni,regret2}.  In particular, \cite{efroni} explores algorithms and themes similar to ours, but focuses on characterizing objective and constrained regret under different flavors of online algorithms.  Our work can be seen as the complement to these, and provides a sample complexity view point. Besides, we know that regret and sample complexity results are not directly related for MDP framework \cite{ubev}. Thus, there is no evidence of direct relation between regret and sample complexity in CMDP context to our best knowledge. Furthermore, our discovery of a general principle of logarithmic increase in sample complexity with the number of constraints also distinguishes our work.

% Other theoretic works which are similar to ours are \cite{icml1} and \cite{icml2}. \cite{icml1} designs an RL algorithm suitable to our framework, but provides convergence result while satisfying the set of constraints. Thus, \cite{icml1} lacks analysis on how fast their algorithm is. On the other hand, \cite{icml2} proposes another RL algorithm and analyzes the sample complexity of it. However, their framework is applicable to environments with deterministic transitions which could be interpreted as a special case of our work.

\section{Notation and Problem Formulation}
\label{sec:not}

\paragraph{Notation and Setup:} We consider a general finite-horizon CMDP formulation. There are a set of states $S$ and set of actions $A.$ The reward matrix is denoted by $r,$ under which $r(s, a)$ is the reward for any state-action pair $(s, a).$  We assume that there are $N$ constraints. We use $c$ to denote the cost matrix, where $c(i, s, a)$ is the immediate cost incurred by the $i^{th}$ constraint in $(s, a)$ where $i \in \{1, \dots, N \}.$  Also, the vector $\bar{C}$ is used to denote the value of the constraints (i.e., the bound that must be satisfied). The probability of reaching another state $s'$ while being at state $s$ and taking action $a$ is determined by transition kernel $P(s'|s,a).$ At the beginning of each horizon, we begin from a fixed initial state $s_0.$ As the CMDP has a finite horizon, the length of each horizon, or episode, is considered to be a fixed value $H.$  Hence, the CMDP is defined by the tuple $M = \langle S, A, P, r, c, \bar{C}, s_0, H \rangle .$ %These criteria are briefed in Assumption \ref{a:cmdp}.

\begin{assumption}
\label{a:cmdp}
We assume $S$ and $A$ are finite sets with cardinalities $|S|$ and $|A|.$  Further, we assume that the immediate reward $r(s,a)$ is taken from the interval $[0, 1]$ and immediate cost lies in $[0, 1].$  We also make an assumption that there are $N$ constraints which for each $i \in \{ 1, \dots, N \}, \bar{C}_i \in [0, \bar{C}_{\max}].$
\end{assumption}

Next, to choose an action from $A$ at time-step $h,$ we define a policy $\pi$ as a mapping from state-action space $S \times A$ to set of probability vectors defined over action space, i.e. $\pi : S \times A \rightarrow [0, 1]^{|A|}.$ So $\pi(s, \cdot, h)$ is a probability vector over $A$ at time-step $h.$ Also, $a \sim \pi(s, \cdot, h)$ means that action $a$ is chosen according to policy $\pi$ while being at state $s$ at time-step $h.$

%stationary\footnote{Here, stationary means that the policy does not change over episodes. However, it can be a function of the time-step within the episode.}

When policy $\pi$ is fixed, the underlying Markov Decision Process turns into a Markov chain.  The transition kernel of this Markov chain is $P_{\pi},$ which can be viewed as an operator. The operator $P_{\pi} f (s) = \mathbb{E} [f(s_{h+1})| s_h = s] = \sum_{s' \in S} P_{\pi}(s'|s) f(s')$ takes any function $f : S \to \mathbb{R}$ and returns the expected value of $f$ in the next time step. For convenience, we define the multi-step version $P_{\pi}^h f (s) = P_{\pi} P_{\pi} \dots P_{\pi} f,$ which is repeated $h$ times. Further, we define $P_{\pi}^{-1}$ and $P_{\pi}^0$ as the identity operator.

We consider cumulative finite horizon criteria for both the objective function and the constraint functions with identical horizon $H.$  We define the value function of state $s$ at time-step $t$ under policy $\pi$ as
\begin{equation}
\label{eq:value}
V^{\pi}_t(s) = \mathbb{E}[\sum_{h=t}^{H-1}  r(s_h, a_h) ; a_h \sim \pi(s_h, \cdot, h), s_t = s],
\end{equation}
where action $a_h$ is chosen according to policy $\pi$ and expectation $\mathbb{E}[.]$ is taken w.r.t transition kernel $P.$ Then, the local variance of the value function at time step $h$ under policy $\pi$ is
\begin{align}
\label{eq:v-local-variance}
    \sigma_h^{\pi^2}(s) = \mathbb{E}[(V^{\pi}_{h+1}(s_{h+1}) - P_{\pi}V^{\pi}_{h+1}(s) )^2].
\end{align}

Similar to the definition of the value function \eqref{eq:value}, the $i^{th}$ constraint function at time $t$ under policy $\pi$ is formulated as
\begin{equation}
\label{eq:cost_i}
C_{i, t}^{\pi}(s) = \mathbb{E}[\sum_{h=t}^{H- 1} c(i, s_h, a_h) ; a_t \sim \pi(s_h, \cdot, h), s_t = s].
\end{equation}
Again, the local variance of $i^{th}$ constraint function at time-step $h$ under policy $\pi,$ i.e.  $\sigma_{i, h}^{\pi^2}$ is defined similar to local variance of value function \eqref{eq:v-local-variance}.

Finally, the general finite-horizon CMDP problem is 
\begin{align}
\label{eq:opt}
\max_{\pi} V^{\pi}_0(s_0) ~~ \text{s.t.} ~~  C_{i, 0}^{\pi}(s_0) \leq \bar{C}_i, \quad \forall i \in \{ 1, \dots, N \}.
\end{align}

\begin{assumption}
\label{a:cmdp-existence}
We assume that there exists some policy $\pi$ that satisfies the constraints in  \eqref{eq:opt}. Hence, this CMDP problem is feasible with optimal policy $\pi^*$ and optimal solution $V^*_0(s_0) = V^{\pi^*}_0(s_0).$
\end{assumption}
Note that we only consider learning feasible CMDPs, since otherwise no algorithm would be able to discover an optimal policy satisfying constraints.

%This assumption does not limit us since this article studies learning feasible CMDPs.

\paragraph{Constrained-RL Problem:} The Constrained RL problem formulation is identical to the CMDP optimization problem of \eqref{eq:opt}, but without being aware of values of transition kernel $P.$\footnote{We only assume that transition kernel is unknown and the extension to unknown reward and cost matrices is straightforward, and does not require additional methodology.} %Thus, although the problem is quite similar to CMDP, but the solution approach is more intricate. 
Our goal is to provide model-based algorithms and determine the sample complexity results in a PAC sense, which is defined as follows:
\begin{definition}
\label{def:sample-complexity}
For an algorithm $\mathcal{A},$ sample complexity is the number of samples that $\mathcal{A}$ requires to achieve
\begin{align*}
    \mathbb{P} \Big(&V^{\mathcal{A}}_0(s_0) \geq V^{\pi^*}_0(s_0) - \epsilon ~~ \text{and} \\
    &C^{\mathcal{A}}_{i, 0}(s_0) \leq \bar{C}_i + \epsilon~ \forall i \in \{ 1, \dots, N \}\Big) \geq 1 - \delta
\end{align*}
for a given $\epsilon$ and $\delta.$
\end{definition}

Note that this definition includes both objective maximization and constraint violations, as opposed to a traditional definition that only considers the objective~ \cite{mbie}.

%is different from others such as \cite{mbie}. Because, it includes constraint violations too. 

%Our goal is to provide model-based algorithms and determine the sample complexity results.

%Definition of PAC result for sample complexity of MDPs is given in different articles such as \cite{mbie}. However, we present a new definition since the framework is different from MDPs.

\section{Sample Complexity Result of Generative Model Based Learning}
\label{sec:gmbl}

In this section, we introduce a generative model based CMDP learning algorithm called Optimistic Generative Model Based Learning, or Optimistic-GMBL. According to Optimistic-GMBL, we sample each state-action pair $n$ number of times uniformly across all state-action pairs, count the number of times each transition occurs $n(s', s, a)$ for each next state $s',$ and construct an empirical model of transition kernel denoted by $\widehat{P}(s'|s, a) = \frac{n(s', s, a)}{n}~ \forall (s', s, a).$  Then Optimistic-GMBL creates a class of CMDPs using the empirical model. This class is denoted by $\mathcal{M}_{\delta_P}$ and contains CMDPs with identical reward, cost matrices, $\bar{C},$ initial state $s_0$ and horizon of the true CMDP, but with transition kernels close to true model. This class of CMDPs is defined as
\begin{align}
    &\mathcal{M}_{\delta_P} :=  \{ M': r'(s, a) = r(s, a), \label{eq:transition-class}\\
    &c'(i, s, a) = c(i, s, a), H' = H, s'_0 = s_0 \nonumber  \\
    & |P'(s'|s, a) - \widehat{P}(s'|s, a)| \leq \label{eq:bernstein-hoeffding}\\
    &\min \Bigl( \sqrt{\frac{2 \widehat{P}(s'|s, a) (1 - \widehat{P}(s'|s, a) )}{n} \log{\frac{4}{\delta_P}}} + \frac{2}{3n} \log{\frac{4}{\delta_P}}, \nonumber \\
    &\sqrt{\frac{\log{4/\delta_P}}{2n}}\Bigr) \forall s, a, s', i \}, \nonumber
\end{align}
where $\delta_P$ is defined in Algorithm \ref{algo:optimistic-gmbl}. For any $M' \in \mathcal{M},$ objective function $V^{'\pi}_0(s_0)$ and cost functions $C^{'\pi}_{i, 0}(s_0)$ are computed w.r.t. the corresponding transition kernel $P'$ according to equations \eqref{eq:value} and \eqref{eq:cost_i} respectively.

Finally, Optimistic-GMBL maximizes the objective function among all possible transition kernels, while satisfying constraints (if feasible). More specifically, it solves the optimistic planning problem below
\begin{align}
\label{eq:optimistic-planning}
    \max_{\pi, M' \in \mathcal{M}_{\delta_P}} V^{'\pi}_0(s_0) ~~~ \text{s.t.} ~~~ C_{i, 0}^{'\pi}(s_0) \leq \bar{C}_i  ~~ \forall i.% \in \{1, \dots, N \}.
\end{align}
 %which is called optimistic planning problem.
 %To ensure feasibility with high probability and solve the problem, we construct a confidence interval $CI(\widehat{P}) := \{P'(\cdot|s,a): \norm{P'(\cdot|s,a) - \widehat{P}(\cdot|s, a)}_1 \leq \epsilon^P_{n}\}$ where $\epsilon^P_{n} = \sqrt{\frac{2[\ln{(2^{|S|} - 2)} - \ln{\delta_P}]}{n}},$ and $\delta_P$ is defined in Algorithm \ref{algo:optimistic-gmbl}.

Optimistic-GMBL uses Extended Linear Programming, or \textbf{ELP}, to solve the problem of \eqref{eq:optimistic-planning}. This method inputs $\mathcal{M}_{\delta_P}$ and outputs $\tilde{\pi}$ for the optimal solution. The description of ELP is provided in Appendix \ref{sec:apen} .  %The details of this method is explained in \cite{efroni}.
Algorithm \ref{algo:optimistic-gmbl} describes Optimistic-GMBL.

\begin{algorithm}
\caption{Optimistic-GMBL}
\label{algo:optimistic-gmbl}
\begin{algorithmic}[1]
\STATE Input: accuracy $\epsilon$ and failure tolerance $\delta.$
\STATE Set $\delta_P = \frac{\delta}{12(N + 2) |S|^2 |A| H}.$
\STATE Set $n(s', s, a) = 0 ~ \forall (s, a, s').$
\FOR{each $(s, a) \in S \times A$}
\STATE Sample $(s, a), n = \frac{256}{\epsilon^2} |S| H^3 \log{\frac{12(N+2)|S||A|H}{\delta}}$ and update $n(s', s, a).$
\STATE $\widehat{P}(s'|s, a) = \frac{n(s', s, a)}{n} ~ \forall s'.$
\ENDFOR
\STATE Construct $\mathcal{M}_{\delta_P}$ according to \eqref{eq:transition-class}.
\STATE Output $\tilde{\pi} = \text{ELP} (\mathcal{M}_{\delta_P}).$
\end{algorithmic} 
\end{algorithm}

\subsection{PAC Analysis of Optimistic-GMBL}
Here, we present the sample complexity result of Optimistic-GMBL. Time complexity result and analysis will be provided in Appendix \ref{sec:apen}.

\begin{theorem}
\label{thm:gmbl-cmdp-opt-var}
Consider any finite-horizon CMDP $M = \langle S, A, P, r, c, \bar{C}, s_0, H \rangle $ satisfying assumptions \ref{a:cmdp} and \ref{a:cmdp-existence}, and CMDP problem formulation of \eqref{eq:opt}. Then, for any $\epsilon \in (0, \frac{2}{9} \sqrt{\frac{H}{|S|}})$ and $\delta \in (0, 1),$ algorithm \ref{algo:optimistic-gmbl} creates a model CMDP $\tilde{M} = \langle S, A, \tilde{P}, r, c, \bar{C}, s_0, H \rangle$ and outputs policy $\tilde{\pi}$ such that
\begin{align*}
    &\mathbb{P}(V^{\tilde{\pi}}_0(s_0) \geq V^{\pi^*}_0(s_0) - \epsilon ~~ \text{and} ~~\\
    &C^{\tilde{\pi}}_{i, 0}(s_0) \leq \bar{C}_i + \epsilon~\forall i \in \{ 1, 2, \dots, N \} ) \geq 1 - \delta,
\end{align*}
with at least total sampling budget of
\begin{align*}
\frac{256}{\epsilon^2} |S|^2 |A| H^3 \log{\frac{12(N+2)|S||A| H}{\delta}}.
\end{align*}
\end{theorem}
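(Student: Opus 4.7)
The plan is to establish a good event on which the true CMDP lies inside the confidence class $\mathcal{M}_{\delta_P}$, argue that the optimistic planner is therefore solving a relaxation of the true problem, and then transfer the guarantee back to the true model via a Bernstein-tight value-difference bound. First, I would fix the failure-probability budget. For each triple $(s,a,s')$, an empirical Bernstein (Bernoulli-tailored) inequality gives that $|\widehat{P}(s'\mid s,a)-P(s'\mid s,a)|$ is bounded by the quantity displayed in \eqref{eq:bernstein-hoeffding} with probability at least $1-\delta_P$; a union bound over the $|S|^2|A|$ triples, together with the additional $(N+2)H$ factor baked into the definition $\delta_P=\delta/(12(N+2)|S|^2|A|H)$, yields an event $\mathcal{E}$ of probability at least $1-\delta$ on which (a) $M\in\mathcal{M}_{\delta_P}$ and (b) the per-step Bernstein concentration of $\widehat{P}$ against any fixed $V$- and $C_i$-valued functions will hold. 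Reserving the extra slack for the $H$ per-step applications and the $N+2$ objective/constraint functions is what makes the later union bound close.

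On $\mathcal{E}$ the true $M$ is in $\mathcal{M}_{\delta_P}$, so the pair $(\pi^{*},M)$ is feasible for the optimistic planning problem \eqref{eq:optimistic-planning}. Hence the optimistic optimum $V_0^{'\tilde\pi}(s_0)$ (evaluated under the maximizing $\tilde{M}\in\mathcal{M}_{\delta_P}$) satisfies
\begin{equation*}
V_0^{'\tilde\pi}(s_0)\;\ge\;V_0^{\pi^{*}}(s_0),\qquad C_{i,0}^{'\tilde\pi}(s_0)\;\le\;\bar{C}_i\ \ \forall i.
\end{equation*}
It then suffices to bound, for the selected policy $\tilde\pi$, the simulation gaps
\begin{equation*}
\bigl|V_0^{'\tilde\pi}(s_0)-V_0^{\tilde\pi}(s_0)\bigr|\le\epsilon,\qquad \bigl|C_{i,0}^{'\tilde\pi}(s_0)-C_{i,0}^{\tilde\pi}(s_0)\bigr|\le\epsilon\ \ \forall i,
\end{equation*}
since adding these to the feasibility/optimality of $(\tilde\pi,\tilde M)$ yields exactly the statement of Definition~\ref{def:sample-complexity}.

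The heart of the proof, and the step I expect to be the main obstacle, is the simulation lemma at the claimed rate $\tilde{O}(\sqrt{|S|H^{3}/n})$. I would unroll the Bellman recursion to write
\begin{equation*}
V_0^{'\tilde\pi}(s_0)-V_0^{\tilde\pi}(s_0)=\sum_{h=0}^{H-1}\mathbb{E}_{P_{\tilde\pi}}\!\Bigl[(\tilde P_{\tilde\pi}-P_{\tilde\pi})\,V_{h+1}^{'\tilde\pi}(s_h)\Bigr],
\end{equation*}
and identically for each $C_{i,0}^{'\tilde\pi}-C_{i,0}^{\tilde\pi}$. At each $(s,a,h)$ the inner term $\sum_{s'}(\tilde P-P)(s'\mid s,a)V_{h+1}^{'\tilde\pi}(s')$ is controlled by the Bernstein branch of \eqref{eq:bernstein-hoeffding}, giving a leading piece proportional to $\sqrt{|S|\,\sigma_h^{\tilde\pi^{2}}(s)\log(1/\delta_P)/n}$ plus a lower-order $|S|H\log(1/\delta_P)/n$ piece. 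Summing over $h$ and applying Cauchy--Schwarz together with a law-of-total-variance bound $\sum_h\mathbb{E}[\sigma_h^{\tilde\pi^{2}}]\le H^{2}$ (and analogously for each constraint, using $\bar C_{\max}\le H$ with assumption~\ref{a:cmdp}) collapses the $H$-sum into $\sqrt{|S|H^{3}\log(1/\delta_P)/n}$. Substituting $n=\frac{256}{\epsilon^{2}}|S|H^{3}\log(12(N+2)|S||A|H/\delta)$ makes each of the $N+2$ simulation gaps at most $\epsilon$, and multiplying by $|S||A|$ state--action samples gives the stated total sample budget. Getting the variance accounting tight enough that both the objective and all $N$ constraints share the same $H^{3}$ scaling (rather than $H^{4}$), while keeping only logarithmic dependence on $N$ through $\delta_P$, is the delicate part; everything else is bookkeeping on the union bound.
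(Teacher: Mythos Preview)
Your proposal is correct and follows essentially the same line as the paper: (i) $M\in\mathcal{M}_{\delta_P}$ via coordinate-wise Bernstein/Hoeffding plus a union bound, (ii) optimism of the ELP solution relative to $(\pi^*,M)$, and (iii) a Bernstein-based simulation lemma whose $\sqrt{|S|H^3/n}$ rate comes from Cauchy--Schwarz together with the total-variance identity $\sum_h \mathbb{E}[\sigma_h^{\pi\,2}]\le H^2$. The one wrinkle you correctly flag as ``delicate'' but do not spell out---that after the per-step Bernstein bound the local variance is taken under the wrong kernel and must be transferred before the total-variance identity applies---is handled in the paper by a separate $\sigma\leftrightarrow\tilde\sigma$ closeness lemma (Lemma~\ref{lem:sigma-sigma'}); apart from this, your outline matches the paper's argument, with a slightly cleaner optimism step (you use feasibility of $(\pi^*,M)$ directly and need one simulation gap, whereas the paper bounds $|V^{\pi^*}_0-\tilde V^{\pi^*}_0|$ and $|V^{\tilde\pi}_0-\tilde V^{\tilde\pi}_0|$ separately and chains through $\tilde V^{\pi^*}_0\le\tilde V^{\tilde\pi}_0$).
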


The proof of Theorem \ref{thm:gmbl-cmdp-opt-var} differs from the traditional analysis framework of unconstrained RL~\cite{azar} in the following manner.  First, is the role played by optimism in model construction.  The notion of optimism is not required for learning unconstrained MDPs with generative models, because any estimated model is always feasible \cite{mdp}.  However, there is no such guarantee for any general CMDP problem formulation \cite{altman}. Specifically, simply substituting the true kernel $P$ by the estimated one $\widehat{P}$ is not appropriate, since there is no assurance of feasibility of that problem.   Hence, Optimistic-GMBL converts the CMDP problem under the estimated transition kernel to an optimistic planning problem \eqref{eq:optimistic-planning} and an ELP-based solution.

Second, the core of the analysis of every unconstrained MDP is based on being able to characterize the optimal policy via the Bellman operator.  This technique enables one to obtain a sample complexity that scales with the size of the state space as $O(|S|).$  However, we cannot use this approach to characterize the optimal policy in a CMDP~\cite{altman}.  We require a uniform PAC result over set of all policies and set of value and constraint functions, which in turn leads to $O(|S|^2 \log{|S|})$ sample complexity in the size of state space.

\begin{corollary}
In case of $N = 0,$ the problem would become regular unconstrained MDP. And, the sample complexity result with $N = 0$ would also hold for unconstrained case.
\end{corollary}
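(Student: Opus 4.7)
The plan is to verify that setting $N=0$ in the CMDP formulation \eqref{eq:opt} degenerates the problem to a standard unconstrained finite-horizon MDP, and then to check that every component of Theorem \ref{thm:gmbl-cmdp-opt-var} and its hypotheses specializes cleanly to that case. Since the claim is a straightforward specialization, the proof reduces to three bookkeeping checks rather than any new analysis.

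First, I would observe that with $N=0$ the constraint family in \eqref{eq:opt} is empty, so the optimization collapses to $\max_\pi V^\pi_0(s_0)$, which is exactly the classical unconstrained MDP problem. Assumption \ref{a:cmdp-existence} is automatically satisfied because every policy vacuously satisfies the empty constraint set, so the optimal policy $\pi^*$ is merely the unconstrained maximizer. The PAC criterion in Definition \ref{def:sample-complexity} likewise degenerates: the conjunction over $i \in \{1,\dots,N\}$ is vacuously true, leaving only $\mathbb{P}(V^{\mathcal{A}}_0(s_0) \geq V^{\pi^*}_0(s_0) - \epsilon) \geq 1-\delta$, which is the standard PAC-MDP definition used in \cite{mbie}.

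Second, I would verify that Algorithm \ref{algo:optimistic-gmbl} and its ELP subroutine remain well-defined when $N=0$. The model class $\mathcal{M}_{\delta_P}$ in \eqref{eq:transition-class} simply drops every clause indexed by $i$, and the optimistic planning problem \eqref{eq:optimistic-planning} becomes $\max_{\pi,\,M'\in\mathcal{M}_{\delta_P}} V^{'\pi}_0(s_0)$ with no cost constraints. ELP accordingly reduces to an extended LP that maximizes the objective over the confidence ball of plausible kernels, and the sampling schedule in the algorithm does not depend on $N$ at all.

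Third, plugging $N=0$ directly into the conclusion of Theorem \ref{thm:gmbl-cmdp-opt-var} yields the sample complexity
\[
    \frac{256}{\epsilon^2}\,|S|^2\,|A|\,H^3\,\log\frac{24\,|S|\,|A|\,H}{\delta}
\]
for obtaining an $(\epsilon,\delta)$-PAC optimal policy in the unconstrained setting, which is the statement of the corollary. The only subtlety worth flagging — and the main "obstacle," though really just an interpretive remark — is that this bound is not claimed to match the tightest known unconstrained PAC bound of \cite{azar}: the extra $|S|\log|S|$ factor inherent to the uniform-over-policies argument used for CMDPs is retained, and the corollary merely asserts that the Theorem's bound transfers verbatim, not that it is rate-optimal for $N=0$.
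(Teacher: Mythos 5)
Your proposal is correct and matches the paper's (implicit) argument: the corollary is a direct specialization of Theorem \ref{thm:gmbl-cmdp-opt-var} obtained by setting $N=0$, under which the constraint set, the PAC criterion, and the ELP all degenerate to their unconstrained counterparts, and the stated budget $\frac{256}{\epsilon^2}|S|^2|A|H^3\log\frac{24|S||A|H}{\delta}$ follows by substitution. Your closing remark that this bound is not claimed to be rate-optimal for the unconstrained case is consistent with the paper's own discussion of the extra $O(|S|\log|S|)$ factor arising from the uniform-over-policies argument.
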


Now, we present some of the lemmas that are essential to prove Theorem \ref{thm:gmbl-cmdp-opt-var}. Then we sketch the proof of this theorem. The detailed proofs are provided in Appendix \ref{sec:apen}.

First, we show that true CMDP lies inside the $\mathcal{M}_{\delta_P}$ with high probability, w.h.p. So, the problem \eqref{eq:optimistic-planning} would be feasible w.h.p., since the original CMDP problem is assumed to be feasible according to Assumption \ref{a:cmdp-existence}.

\begin{lemma}
\label{lem:M-in-M_delta}
\begin{align*}
    \mathbb{P}(M \in \mathcal{M}_{\delta_P}) \geq 1 - |S|^2|A| \delta_P.
\end{align*}
\end{lemma}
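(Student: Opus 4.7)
The plan is to fix a single triple $(s,a,s')$, verify that the confidence-interval condition defining $\mathcal{M}_{\delta_P}$ in \eqref{eq:bernstein-hoeffding} holds for that triple with probability at least $1-\delta_P$, and then apply a union bound over the $|S|^2|A|$ such triples. By the generative sampling step of Algorithm~\ref{algo:optimistic-gmbl}, the count $n(s',s,a)$ is a sum of $n$ i.i.d.\ Bernoulli random variables with mean $P(s'|s,a)$, so $\widehat{P}(s'|s,a)$ is their empirical mean and the whole argument reduces to scalar concentration.

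For a fixed triple I would apply two classical concentration inequalities, each configured to fail with probability at most $\delta_P/2$. First, the two-sided Hoeffding inequality gives
\[
\mathbb{P}\Bigl(|\widehat{P}(s'|s,a) - P(s'|s,a)| > \sqrt{\tfrac{\log(4/\delta_P)}{2n}}\Bigr) \le \tfrac{\delta_P}{2},
\]
since for one-sided failure probability $\delta_P/4$ one has $t = \sqrt{\log(4/\delta_P)/(2n)}$. Second, an empirical Bernstein-type bound (e.g., Bernstein's inequality for bounded random variables, combined with the observation that for Bernoulli samples the variance $p(1-p)$ can be replaced by $\widehat{p}(1-\widehat{p})$ up to an $O(1/n)$ correction, or directly the Maurer--Pontil bound specialized to Bernoullis) gives
\[
\mathbb{P}\Bigl(|\widehat{P}(s'|s,a) - P(s'|s,a)| > \sqrt{\tfrac{2\widehat{P}(s'|s,a)(1-\widehat{P}(s'|s,a))\log(4/\delta_P)}{n}} + \tfrac{2\log(4/\delta_P)}{3n}\Bigr) \le \tfrac{\delta_P}{2}.
\]
The factor $\log(4/\delta_P)$ in each radical is exactly what one obtains by allocating failure probability $\delta_P/2$ to each of the two (two-sided) tails. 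A union bound over these two events then shows that both inequalities hold simultaneously, so $|\widehat{P}(s'|s,a) - P(s'|s,a)|$ does not exceed the minimum of the two right-hand sides, with total failure probability at most $\delta_P$ for this triple.

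A final union bound over the $|S|^2|A|$ triples $(s,a,s')$ yields $\mathbb{P}(M \in \mathcal{M}_{\delta_P}) \ge 1 - |S|^2|A|\delta_P$, as claimed. The only delicate step is the empirical-variance form of the Bernstein inequality: one cannot use the true $P(1-P)$ directly because the definition of $\mathcal{M}_{\delta_P}$ is phrased in terms of $\widehat{P}$, so the bound must accommodate the difference $|\widehat{P}(1-\widehat{P}) - P(1-P)|$. The additive $\tfrac{2}{3n}\log(4/\delta_P)$ term is precisely what absorbs this discrepancy, and once it is in place the rest of the argument is just Hoeffding plus Bernstein plus two rounds of union bounding, with no CMDP-specific machinery required.
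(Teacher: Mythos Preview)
Your proposal is correct and follows essentially the same approach as the paper: fix a triple $(s,a,s')$, apply Hoeffding and empirical Bernstein each with failure probability $\delta_P/2$, combine them by a union bound to get the minimum with failure probability $\delta_P$, and then union-bound over the $|S|^2|A|$ triples. Your additional remarks on why the empirical-variance form is needed and how the $\tfrac{2}{3n}\log(4/\delta_P)$ term absorbs the discrepancy go a bit beyond the paper's terse version, but the structure of the argument is identical.
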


\textbf{\emph{Proof Sketch:}} Fix a state-action pair $(s,a)$ and next state $s'.$ Then, according to combination of Hoefding's inequality \cite{hoeffding} and empirical Bernstein's inequality \cite{empirical-bernstein}, we get that each $P(s'|s, a)$ is inside the confidence set defined by \eqref{eq:bernstein-hoeffding} with probability at least $1 - \delta_P.$  Applying the union bound yields the result.\hfill $\Box$\vspace{0.05in}

Now, we present the core lemma required for proving Theorem \ref{thm:gmbl-cmdp-opt-var} and its proof sketch. Using this lemma, we bound the mismatch in objective and constraint functions when we have $n$ number of samples from each $(s,a).$   This bound applies uniformly over the set of policies and set of value and constraint functions.  The result also enables us to bound the objective and constraint functions individually.   Then we apply union bound on all objective and constraint functions. This process is the reason why the number of constraints appear logarithmically in the sample complexity result.

% the  if we have $n$ number of samples from each $(s,a),$ what the bound on mismatch in objective and constraint functions would be. The bound of the following lemma could be used uniformly over set of policies and set of value and constraint functions. More specifically, this bound is independent of policy and value and constraint function. This result also enables us to bound objective and constraint functions individually. Then, we apply union bound on all objective and constraint functions. This process is the reason why number of constraints appear logarithmically in sample complexity result. 

\begin{lemma}
\label{lem:v-v'}
Let $\delta_P \in (0, 1).$ %Suppose there are two CMDPs $M = <S, A, P, r, c,\bar{C}, H>$ and $M' = <S, A, P', r, c, \bar{C}, H>$ satisfying assumption \ref{a:cmdp}. Further assume
%\begin{align*}
%    |P(s'|s, a) - &P'(s'|s, a)| \leq \frac{c_1}{n} +\\ &\frac{c_2}{\sqrt{n}} \sqrt{P'(s'|s, a) - (1 - P'(s'|s, a))}
%\end{align*}
%w.p. at least $1 - \delta_P$ for all $s, s' \in S, a \in A.$ 
Then, if $n \geq 2592 |S|^2 H^2 \log{4/\delta_P},$ under any policy $\pi$
\begin{align*}
    \norm{V^{\pi}_0 - \tilde{V}^{\pi}_0}_{\infty} \leq  \sqrt{\frac{32 |S| H^3}{n}}
\end{align*}
w.p. at least $1 - 3|S|^2|A|H\delta_P,$ and for any $i \in \{ 1, \dots, N \},$
\begin{align*}
    \norm{C^{\pi}_{i, 0} - \tilde{C}^{\pi}_{i,0}}_{\infty} \leq \sqrt{\frac{32 |S| H^3}{n}}
\end{align*}
w.p. at least $1 - 3|S|^2|A|H\delta_P.$
\end{lemma}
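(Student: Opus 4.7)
My plan is to reduce the lemma to a sum of per-step inner products between the transition error and the next-step value function, and then control each such inner product using the Bernstein--Hoeffding confidence set of Lemma \ref{lem:M-in-M_delta}. First, I would condition on the event $\mathcal{E} := \{M \in \mathcal{M}_{\delta_P}\}$, which by Lemma \ref{lem:M-in-M_delta} occurs with probability at least $1 - |S|^2|A|\delta_P$, so that both the true kernel $P$ and any $\tilde P \in \mathcal{M}_{\delta_P}$ (in particular the kernel selected by \textbf{ELP}) obey the elementwise bound \eqref{eq:bernstein-hoeffding} against $\widehat{P}$. For an arbitrary fixed policy $\pi$, the standard value-difference identity
\begin{equation*}
V^\pi_0(s_0) - \tilde V^\pi_0(s_0) = \sum_{h=0}^{H-1} \mathbb{E}^{\tilde P,\pi}\!\Bigl[\bigl(P(\cdot|s_h,a_h) - \tilde P(\cdot|s_h,a_h)\bigr)^{\!\top} V^\pi_{h+1} \,\Big|\, s_0\Bigr]
\end{equation*}
reduces the task to uniformly controlling $|(P - \tilde P)(\cdot|s,a)^{\!\top} V^\pi_{h+1}|$ over all $(s,a,h)$.

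Next, for each such inner product I would split $P - \tilde P = (P - \widehat P) + (\widehat P - \tilde P)$ and bound both summands on $\mathcal{E}$ using the radius in \eqref{eq:bernstein-hoeffding}. Applying Cauchy--Schwarz to the Bernstein piece yields
\begin{equation*}
\sum_{s'} \sqrt{\tfrac{2\widehat P(s'|s,a)(1-\widehat P(s'|s,a))}{n}\log\tfrac{4}{\delta_P}}\,|V^\pi_{h+1}(s')| \;\le\; \sqrt{\tfrac{2\log(4/\delta_P)}{n}}\cdot \sqrt{|S|}\,H,
\end{equation*}
where I use $\sum_{s'} \widehat P(s'|s,a)(1-\widehat P(s'|s,a)) \le 1$ and $\|V^\pi_{h+1}\|_2 \le \sqrt{|S|}\,H$. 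The additive $2\log(4/\delta_P)/(3n)$ Hoeffding correction contributes only a lower-order term, which the sample-size hypothesis $n \ge 2592|S|^2 H^2 \log(4/\delta_P)$ forces to be dominated by the Bernstein piece. Summing the per-step estimates over $h = 0,\dots,H-1$ and tracking constants carefully yields the target bound $\sqrt{32|S|H^3/n}$.

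The key reason this really is a uniform-over-$\pi$ statement is that the event $\mathcal{E}$ is constructed by union-bounding Bernstein--Hoeffding over only the $|S|^2|A|$ kernel entries $(s',s,a)$ and involves no dependence on $\pi$; the policy-dependent value $V^\pi_{h+1}$ enters only through the policy-independent estimate $\|V^\pi_{h+1}\|_2 \le \sqrt{|S|}H$, so no covering of the uncountable policy class is required. The extra $3H$ factor in the failure probability $1 - 3|S|^2|A|H\delta_P$ comes from the two separate uses of Bernstein/Hoeffding and an additional union over the $H$ time steps. The constraint-function bound is proved by an identical argument: $C^\pi_{i,0}$ has the additive structure of $V^\pi_0$ with $r$ replaced by $c(i,\cdot,\cdot) \in [0,1]$, so the same simulation-telescoping, splitting through $\widehat P$, and Cauchy--Schwarz steps apply verbatim. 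The main technical obstacle I anticipate is the bookkeeping that combines the two halves of the $\min$ in \eqref{eq:bernstein-hoeffding} and uses the hypothesis on $n$ delicately enough to land on $H^3$ rather than $H^4$ inside the square root after summing over the $H$ horizon steps.
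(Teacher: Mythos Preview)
Your telescoping identity and the conditioning on $\mathcal{E}$ are exactly right, and so is the observation that no covering of policies is needed. The gap is in the per-step bound: your Cauchy--Schwarz
\[
\sum_{s'} \sqrt{\tfrac{2\widehat P(s'|s,a)(1-\widehat P(s'|s,a))}{n}\log\tfrac{4}{\delta_P}}\,|V^\pi_{h+1}(s')| \;\le\; \sqrt{\tfrac{2\log(4/\delta_P)}{n}}\,\sqrt{|S|}\,H
\]
is correct but too coarse. Summing it over $h=0,\dots,H-1$ gives a leading term of order $\sqrt{|S|H^4\log(4/\delta_P)/n}$, and the sample-size hypothesis $n\ge 2592|S|^2H^2\log(4/\delta_P)$ cannot convert this into $\sqrt{|S|H^3/n}$: that hypothesis is only a \emph{lower} bound on $n$, so it can absorb additive lower-order terms but it cannot shave a factor $\sqrt{H}$ off the dominant term. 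With your argument as written the final bound is genuinely $\sqrt{32|S|H^4/n}$, not $\sqrt{32|S|H^3/n}$.

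The missing idea is to bound the inner product via the \emph{local variance} of the next-step value rather than its $\ell_2$-norm. First subtract the constant $\bar V := \tilde P_\pi V^\pi_{h+1}(s)$ before taking absolute values (legitimate since $(P_\pi-\tilde P_\pi)\mathbf{1}=0$), then Cauchy--Schwarz against $\sqrt{\tilde P(s'|s,a)}$ to obtain
\[
\bigl|(P_\pi-\tilde P_\pi)V^\pi_{h+1}(s)\bigr|\;\le\;|S|\,c_1\,\|V^\pi_{h+1}\|_\infty \;+\; c_2\sqrt{|S|}\,\tilde\sigma^\pi_h(s),
\]
where $\tilde\sigma^{\pi\,2}_h(s)=\mathrm{Var}_{s'\sim\tilde P_\pi(\cdot|s)}\bigl(V^\pi_{h+1}(s')\bigr)$; this is the paper's Lemma~\ref{lem:bound_on_pv}. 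The crucial payoff is that after propagating through $\tilde P_\pi^{h-1}$ and applying Cauchy--Schwarz across time, the sum $\sum_{h}\tilde P_\pi^{h-1}\tilde\sigma^{\pi\,2}_h(s_0)$ is controlled by the total-variance identity (Lemma~\ref{lem:variance-bound}), which bounds it by $H^2$ rather than the naive $H^3$. That is where the extra $\sqrt{H}$ is saved and the $H^3$ under the root comes from. The role of the hypothesis on $n$ is then only to make the $c_1$-terms (and the $\sigma$ versus $\tilde\sigma$ discrepancy handled by Lemma~\ref{lem:sigma-sigma'}) lower order.
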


\textbf{\emph{Proof Sketch:}} We first show that $|\tilde{P}(s'|s, a) - P(s'|s, a)| \leq O(\sqrt{\frac{P(s'|s, a)(1 - P(s'|s,a))}{n})}$ for each $s', s, a.$   Then, we show that at each time-step $h, (P_{\pi} - \tilde{P}_{\pi}) V^{\pi}_h(s) \leq O(\sqrt{\frac{|S|}{n}} \sigma^{\pi}_h(s)).$   Applying this bound to $|\tilde{V}^{\pi}_0(s_0) - V^{\pi}_0(s_0)|$ and from the fact that $\sigma^{\pi}_h(s)$ is close to $\tilde{\sigma}^{\pi}_h(s)$ by $\frac{\sqrt{|S| H^2}}{n^{1/4}},$ we obtain the result. This procedure is also applicable to each constraint function $i.$ \hfill $\Box$\vspace{0.05in}

\textbf{\emph{Proof Sketch of Theorem \ref{thm:gmbl-cmdp-opt-var}}:} From Lemma \ref{lem:M-in-M_delta}, we know that the optimistic planning problem \eqref{eq:optimistic-planning} is feasible w.h.p. Hence, we can obtain an optimistic policy $\tilde{\pi}.$   The rest of this proof consists of two major parts. 

First, we prove $\epsilon-$optimality of objective function w.h.p.  Considering policy $\pi^*$ we obtain $|V^{\pi^*}_0(s_0) - \tilde{V}^{\pi^*}_0(s_0)| \leq O(\sqrt{\frac{|S| H^3}{n}})$ w.h.p. by means of Lemma \ref{lem:v-v'}.  Similarly, $|V^{\tilde{\pi}}_0(s_0) - \tilde{V}^{\tilde{\pi}}_0(s_0)| \leq O(\sqrt{\frac{|S| H^3}{n}})$ w.h.p. Next, we use the fact that $\tilde{V}^{\pi^*}_0(s_0) \leq \tilde{V}^{\tilde{\pi}}_0(s_0)$ and obtain 
\begin{align*}
    V^{\tilde{\pi}}_0(s_0) \geq V^{\pi^*}_0(s_0) - O(\sqrt{\frac{|S| H^3}{n}}).
\end{align*}

Next, we show that each constraint is violated at most by $\epsilon$ w.h.p.  Here, we use the second part of Lemma \ref{lem:v-v'} to bound constraint violation. Thus, for each $i \in \{1, \dots, N \}$ we have $|C^{\tilde{\pi}}_{i, 0}(s_0) - \tilde{C}^{\tilde{\pi}}_{i, 0}(s_0)| \leq O(\sqrt{\frac{|S| H^3}{n}})$ w.h.p.  Also, we know that $\tilde{C}^{\tilde{\pi}}_{i, 0}(s_0) \leq \bar{C}_i,$ since $\tilde{\pi}$ is solution of the ELP. Hence, we obtain
\begin{align*}
    C^{\tilde{\pi}}_{i, 0}(s_0) \leq \bar{C}_i + O(\sqrt{\frac{|S| H^3}{n}})
\end{align*}
w.h.p. Finally, we obtain the end result by applying the union bound, and obtaining $n$ by solving $ \epsilon = O(\sqrt{\frac{|S| H^3}{n}}).$ \hfill $\Box$

\section{Sample Complexity Result of Online Learning}
\label{sec:online}

The Optimistic-GMBL approach requires that every state-action pair in the system be sampled a certain number of times before a policy is computed.  However, many applications may not be able to utilize this approach since it may not be possible to reach those states without the application of some policy, or they might be unsafe and so should not be sampled often.  Hence, we need an approach that can collect samples from the environment by means of an online algorithm.

Online Constrained-RL, or Online-CRL described in Algorithm \ref{algo:online-crl-lp}, is an online method proceeding in episodes with length $H.$ At the beginning of each episode $k,$ Online-CRL constructs an empirical model $\widehat{P}$ according to state-action visitation frequencies, i.e., $\widehat{P}(s'|s, a) = \frac{n(s', s, a)}{n(s, a)},$ where $n(s', s, a)$ and $n(s, a)$ are visitation frequencies. This empirical model $\widehat{P}$ induces a set of finite-horizon CMDPs $\mathcal{M}_k$ which any CMDP $M' \in \mathcal{M}_k$ has identical horizon and reward and cost matrices. However, for any $(s, a) \in S \times A$ and $s' \in S, P'(s'|s, a)$ lies inside a confidence interval induced by $\widehat{P}.$ To construct a confidence interval for any element of $P'(s'|s, a),$ we use identical concentration inequalities to Optimistic GMBL as defined by \eqref{eq:bernstein-hoeffding}. The only difference is the use of $n(s, a)$ instead of $n.$ Thus the class of CMPDs is defined as below at each episode $k:$
\begin{equation}
\label{eq:transition-class-k}
\begin{aligned}
    &\mathcal{M}_{k} :=  \{ M': r'(s, a) = r(s, a),\\
    &c'(i, s, a) = c(i, s, a), H' = H, s'_0 = s_0 \\
    & |P'(s'|s, a) - \widehat{P}(s'|s, a)| \leq \\
    &\min \Bigl( \sqrt{\frac{2 \widehat{P}(s'|s, a) (1 - \widehat{P}(s'|s, a) )}{n(s,a)} \log{\frac{4}{\delta_1}}}\\
    & + \frac{2}{3n(s,a)} \log{\frac{4}{\delta_1}}, \sqrt{\frac{\log{4/\delta_1}}{2n(s,a)}} \Bigr) ~ \forall s,s',a,i \},
\end{aligned}    
\end{equation}
where $\delta_1$ is defined in Algorithm \ref{algo:online-crl-lp}. %In the above confidence set, we automatically consider second interval if $n(s, a) \leq 1.$

Next, we use ELP to obtain an optimistic policy $\tilde{\pi}_k,$ which is the solution of optimistic CMDP problem below:
\begin{align*}
    \max_{\pi, M' \in \mathcal{M}_k} V^{' \pi}_0(s_0)~~ \text{s.t.}~~ C^{'\pi}_{i, 0}(s_0) \leq \bar{C}_i ~~ \forall ~ i.% \in \{ 1, \dots, N \}.
\end{align*}

This problem is exactly the same as problem of \eqref{eq:optimistic-planning}, except for substituting $\mathcal{M}_{\delta_P}$ with  $\mathcal{M}_k$. Here, for any $M' \in \mathcal{M}_k,$ $V'^{\pi}_0(s_0)$ and $C'^{\pi}_{i, 0}(s_0)$ are computed according to \eqref{eq:value} and \eqref{eq:cost_i} w.r.t. underlying transition kernel $P',$ respectively.

\begin{algorithm}
\caption{Online-CRL}
\label{algo:online-crl-lp}
\begin{algorithmic}[1]
\STATE Input: accuracy $\epsilon$ and failure tolerance $\delta.$
\STATE Set $k = 1, w_{\min} = \frac{\epsilon}{4H|S|}, U_{\max} = |S|^2 |A| m, \delta_1 = \frac{\delta}{4(N+1)|S|U_{\max}}.$
\STATE Set $m$ according to \eqref{eq:m1} and \eqref{eq:m2}.
\STATE Set $n(s, a) = n(s', s, a) = 0 ~~ \forall s, s' \in S, a \in A.$
\WHILE{there is $(s, a)$ with $n(s, a) < |S|mH$}
\STATE $\widehat{P}(s'|s, a) = \frac{n(s', s, a)}{n(s, a)} ~~ \forall (s, a)$ with $n(s, a) > 0$ and $s' \in S.$
\STATE Construct $\mathcal{M}_k$ according to \eqref{eq:transition-class-k}.
\STATE $\tilde{\pi}_k = \text{ELP}(\mathcal{M}_k).$
\FOR{$t = 1, \dots, H$}
\STATE $a_t \sim \tilde{\pi}_k(s_t), s_{t+1} \sim P(\cdot|s_t, a_t), n(s_t, a_t)++, n(s_{t+1}, s_t, a_t)++.$
\ENDFOR
\STATE $k++$
\ENDWHILE
\end{algorithmic}
\end{algorithm}

This algorithm draws inspiration from the infinite-horizon algorithm UCRL$-\gamma$ \cite{ucrl} and its finite-horizon counterpart UCFH \cite{ucfh} with several differences.  Unlike UCRL-$\gamma$ and UCFH, Algorithm \ref{algo:online-crl-lp} updates the model at the beginning of each episode, which allows for faster model construction.  Also, since we desire a policy that pertains to a CMDP using an linear programming approach~\cite{altman}, we must ensure that all constraints are linear.  Hence, unlike UCFH, Algorithm \ref{algo:online-crl-lp} utilizes a combination of the empirical Bernstein's and Hoeffding's inequalities, which allows us to ensure linearity of constraints (i.e., we can indeed use an extended linear program to solve for the constrained optimistic policy).  However, the constraints of UCFH are non-linear and require the use of extended value iteration coupled with a complex sub-routine, which cannot be utilized in the constrained RL case.  Thus, we are able to obtain strong bounds on sample complexity similar to UCFH, but yet ensure that the solution approach only uses a linear program.

% This process is practically beneficial. Next, Algorithm \ref{algo:online-crl-lp} considers only combination of empirical Bernstein's and Hoeffding's inequalities. This property is similar to UCRL-$\gamma$ but different from UCFH. Because, the additional inequality which imposes the variance $P(s'|s, a)(1 - P(s'|s,a))$ to be close to empirical one, would appear as nonlinear constraint function in terms of variables of ELP formulation. Thus, the problem would become intractable. Finally, the last difference is using ELP instead of Extended-Value Iteration. This is inevitable since we are learning a CMDP, not an MDP.

\subsection{PAC Analysis of Online-CRL}

We now present the PAC bound of Algorithm \ref{algo:online-crl-lp}.
\begin{theorem}
\label{thm:online-crl-lp}
Consider CMDP $M = \langle S, A, r, c, \bar{C}, s_0, H \rangle $ satisfying assumptions \ref{a:cmdp} and \ref{a:cmdp-existence}. For any $0 < \epsilon, \delta <1,$  under Online-CRL we have:
\begin{align*}
    &\mathbb{P}(V^{\tilde{\pi}_k}_0(s_0) \geq V^{\pi^*}_0(s_0) - \epsilon ~~ \text{and} ~~\\
    &C^{\tilde{\pi}_k}_{i, 0}(s_0) \leq \bar{C}_i + \epsilon~\forall i \in \{ 1, 2, \dots, N \} ) \geq 1 - \delta,
\end{align*}
for all but at most
\begin{align*}
    \tilde{O}(\frac{|S|^2 |A| H^2}{\epsilon^2} \log{\frac{N+1}{\delta}})
\end{align*}
episodes.
\end{theorem}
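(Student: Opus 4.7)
The plan is to follow the UCFH-style ``known set'' template, adapted to the constrained setting so that the argument simultaneously controls the objective and the $N$ constraints. The structure has three parts: (i) show that the true CMDP $M$ lies in every confidence class $\mathcal{M}_k$ with probability at least $1-\delta/(N+1)$, so the ELP is always feasible and its solution $\tilde{\pi}_k$ is optimistic in the value and feasible for all $N$ optimistic cost budgets; (ii) bound the per-episode simulation errors $|V^{\tilde{\pi}_k}_0(s_0)-\tilde{V}^{\tilde{\pi}_k}_0(s_0)|$ and $|C^{\tilde{\pi}_k}_{i,0}(s_0)-\tilde{C}^{\tilde{\pi}_k}_{i,0}(s_0)|$ in terms of the data-dependent counts $n(s,a)$; (iii) use a pigeonhole argument to cap the number of episodes on which these errors can exceed $\epsilon$.

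For step (i), I would reuse the Bernstein/Hoeffding argument behind Lemma \ref{lem:M-in-M_delta} with $n(s,a)$ in place of $n$. Because the empirical model only refreshes when some $n(s,a)$ strictly increases, the number of distinct confidence sets the algorithm ever works with is at most $U_{\max}=|S|^2|A|m$; a union bound over $(s,s',a)$ and over these $U_{\max}$ sets, with the prescribed $\delta_1=\delta/(4(N+1)|S|U_{\max})$, gives $\mathbb{P}(M\in\mathcal{M}_k\ \forall k)\geq 1-\delta/(N+1)$. On this event $(\pi^*,P)$ is a feasible point of every ELP, which simultaneously yields $\tilde{V}^{\tilde{\pi}_k}_0(s_0)\geq V^{\pi^*}_0(s_0)$ and $\tilde{C}^{\tilde{\pi}_k}_{i,0}(s_0)\leq\bar{C}_i$ for each $i$.

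For step (ii), I would port Lemma \ref{lem:v-v'} to non-uniform counts by partitioning $S\times A$ into a known set $K_k=\{(s,a):n(s,a)\geq mH\}$ and its complement. On $K_k$ the variance-weighted Bellman-telescoping identity and Cauchy--Schwarz step used in the sketch of Lemma \ref{lem:v-v'} still apply, producing an on-policy error of $\tilde{O}(\sqrt{|S|H^3/m})$ that the choice of $\delta_1$ makes valid simultaneously for the value function and for each of the $N$ constraint functions. Contributions from pairs outside $K_k$ are bounded crudely by $H\cdot p_k$, where $p_k$ is the probability that a rollout of $\tilde{\pi}_k$ under $P$ from $s_0$ visits an unknown pair within $H$ steps. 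Calling episode $k$ \emph{bad} if $p_k>w_{\min}$, any non-bad episode has total simulation error at most $\epsilon/2$ for every one of the $N+1$ functions once $m$ is taken as in \eqref{eq:m1} and \eqref{eq:m2}; combined with step (i) this yields both $V^{\tilde{\pi}_k}_0(s_0)\geq V^{\pi^*}_0(s_0)-\epsilon$ and $C^{\tilde{\pi}_k}_{i,0}(s_0)\leq\bar{C}_i+\epsilon$.

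The final step (iii) bounds the number of bad episodes. Each bad episode produces in expectation at least $w_{\min}$ visits to unknown pairs, and an Azuma--Hoeffding concentration (with a slice of $\delta$ absorbed into $\delta_1$) converts these expectations into realised counts up to constants. Since the loop terminates once every $(s,a)$ has been visited $|S|mH$ times, the total number of unknown-pair visits the algorithm can ever accrue is $O(|S|^2|A|mH)$, so the number of bad episodes is at most $O(|S|^2|A|mH/w_{\min})=\tilde{O}(|S|^2|A|H^2/\epsilon^2\,\log((N+1)/\delta))$ after substituting $w_{\min}=\epsilon/(4H|S|)$ and the prescribed $m$. The main obstacle is the \emph{simultaneous} uniform control over the $N+1$ functions and over the data-dependent optimistic policy $\tilde{\pi}_k$; the right fix, as in UCFH, is to union-bound over the $U_{\max}$ distinct confidence sets rather than over all episodes, which lets the constraint count enter only through the $\log(N+1)$ factor in $\delta_1$ while preserving the linearity of the ELP so that a single linear program solves each optimistic planning step.
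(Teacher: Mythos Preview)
Your step (i) matches the paper's Lemma~\ref{lem:admis} and is fine. The gap is in steps (ii)--(iii): the single-threshold known set $K_k=\{(s,a):n(s,a)\ge mH\}$ together with the escape-probability pigeonhole does \emph{not} deliver the stated $\tilde O(|S|^2|A|H^2/\epsilon^2)$ bound, and your final arithmetic hides this. With the algorithm's $m$ from \eqref{eq:m2} we have $m=\tilde\Theta(|S|H^2/\epsilon^2)$ and $w_{\min}=\epsilon/(4H|S|)$, so
\[
\frac{|S|^2|A|\,mH}{w_{\min}}
=\;|S|^2|A|\cdot\tilde\Theta\!\Bigl(\tfrac{|S|H^2}{\epsilon^2}\Bigr)\cdot H\cdot\tfrac{4H|S|}{\epsilon}
=\;\tilde\Theta\!\Bigl(\tfrac{|S|^4|A|H^4}{\epsilon^3}\Bigr),
\]
which is polynomially larger than the claim. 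Even with the natural escape threshold $p_k\le \epsilon/(2H)$ and the sharper unknown-visit cap $|S||A|\,mH$, you still get $\tilde\Theta(|S|^2|A|H^4/\epsilon^3)$. This is not a constant slip; the crude known/unknown dichotomy intrinsically pays an extra $H/w_{\min}$ factor because it only uses the \emph{uniform} lower bound $n(s,a)\ge mH$ and ignores that frequently visited pairs under $\tilde\pi_k$ have counts proportional to their weight.

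The paper closes exactly this gap with the UCFH-style \emph{knownness/importance} partition $X_{k,\kappa,\iota}$ (Lemmas~\ref{lem:no-of-bad-episodes} and~\ref{lem:bounded-mismatch}). A ``bad'' episode is one where $|X_{k,\kappa,\iota}|>\kappa$ for some cell, and a martingale argument bounds the number of such episodes by $6|S||A|\,m\,E_{\max}$ with $E_{\max}=\log_2(H/w_{\min})\log_2|S|$ a polylogarithmic quantity---this replaces your $H/w_{\min}$ with $E_{\max}$. On ``good'' episodes the cell structure guarantees $n(s,a)\ge m\,\kappa\,w(s,a)$, so the weighted sums $\sum_{(s,a)} w(s,a)/n(s,a)$ appearing in the variance-based simulation bound collapse to $\tilde O(1/m)$ adaptively, which is what yields $|\tilde V^{\tilde\pi_k}_0-V^{\tilde\pi_k}_0|\le\epsilon$ (and likewise for each $C_i$) with the prescribed $m$. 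Your proposal needs this finer decomposition in place of the single-threshold $K_k$ to reach the theorem's rate; the remaining pieces you describe (union bound over $U_{\max}$ updates, optimism giving $\tilde V^{\tilde\pi_k}_0\ge V^{\pi^*}_0$ and $\tilde C^{\tilde\pi_k}_{i,0}\le\bar C_i$, and the $\log(N+1)$ entering only through $\delta_1$) are correct and are exactly what the paper does.
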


To prove Theorem \ref{thm:online-crl-lp}, we follow an approach motivated by \cite{ucrl} and its finite-horizon version \cite{ucfh}. However, there are several differences in our technique.   As mentioned above, one of the differences is with regard to restricting ourselves to only linear concentration inequalities.  We will show that excluding non-linear concentration inequalities pertaining to variance does not increase the sample complexity, and utilizing the fact that the number of successor states is less that $|S|$ leads to matching sample complexity in terms of $|S|$ with the UCFH algorithm.  Furthermore, we are able to show that, unlike existing approaches, we can update the model at each episode, again without increasing the sample complexity.  Thus, we are able to obtain PAC bounds that match the unconstrained case, and only increase by logarithmic factor with the number of constraints.

% in finite-horizon case i.e. UCFH, not UCRL$-\gamma.$ We will show that excluding one of the concentration inequalities would not cause dramatic increase in sample complexity result. Further, knowing that number of successor states is less that $|S|$ leads to matching sample complexity in terms of $|S|$ with UCFH algorithm. Another difference with both UCRL$-\gamma$ and UCFH is updating the model at the beginning of each episode. Here also, we will argue that this procedure would not increase the sample complexity like previous one.

There are also recent results on characterizing the regret of constrained-RL~\cite{efroni} while using an algorithm reminiscent of Algorithm \ref{algo:online-crl-lp}, and the question arises as to whether one can immediately translate these regret results into sample complexity bounds?  However, regret and sample complexity results do not directly follow from one another~\cite{ubev}, and following the  \cite{efroni} approach gives a PAC result $\tilde{O}(\frac{|S|^2 |A| H^4}{\epsilon^2}),$ which is looser than our result by a factor of $H^2.$  Thus, this alternative option does not provide the strong bounds that we are able to obtain to match existing PAC results of the unconstrained case.

% On the other hand, there is another similar framework to ours proposed by \cite{efroni}. The algorithm they presented is similar to Algorithm \ref{algo:online-crl-lp}, but the analysis proves regret bounds. We have discussed in Introduction section that regret and sample complexity results are not directly related according to \cite{ubev}. However, if we blindly accept that regret and sample complexity are related, then \cite{efroni} gives PAC result $\tilde{O}(\frac{|S|^2 |A| H^4}{\epsilon^2})$ number of episodes which is higher than our result by a factor of $H^2.$

Now, we introduce the notions of \textit{knownness} and \textit{importance} for state-action pairs and base our proof on these notions. Then we present the key lemmas required to prove Theorem \ref{thm:online-crl-lp}. Finally, we sketch the proof of Theorem \ref{thm:online-crl-lp}. The detailed analysis is provided in Appendix \ref{sec:apen}.

Let the \textit{weight} of $(s, a)-$pair in an episode $k$ under policy $\tilde{\pi}_k$ be its expected frequency in that episode
\begin{align*}
    w_k(s, a) &:= \sum_{h = 0}^{H-1} \mathbb{P}(s_h = s, a \sim \tilde{\pi}_k(s_h, \cdot, h))\\
    &= \sum_{h = 0}^{H-1} P_{\tilde{\pi}_k}^{h-1} \mathbb{I} \{ s = \cdot, a \sim \tilde{\pi}_k(s, \cdot, h) \} (s_0).
\end{align*}

Then, the \textit{importance} $\iota_k$ of $(s, a)$ at episode $k$ is defined as its relative weight compared to $w_{\min} := \frac{\epsilon}{4H|S|}$ on a log-scale
\begin{align*}
    &\iota_k (s, a) := \min \{ z_j : z_j \geq \frac{w_k(s, a)}{w_{\min}} \}\\
    &\text{where} ~~ z_1 = 0 ~ \text{and} ~ z_j = 2^{j - 2} ~~ \forall j = 2, 3, \dots.
\end{align*}

Note that $\iota_k (s, a) \in \{ 0, 1, 2, 4, 8, 16, \dots \}$ is an integer indicating the influence of the state-action pair on the value function of $\tilde{\pi}_k.$ Similarly, we define \textit{knownness} as
\begin{align*}
    \kappa_k (s, a) := \max \{ z_i: z_i \leq \frac{n_k(s, a)}{m w_k(s, a)} \} \in \{ 0, 1, 2, 4, \dots \},
\end{align*}
which indicates how often $(s, a)$ has been observed relative to its importance. Value of $m$ is defined in Algorithm \ref{algo:online-crl-lp}. Now, we can categorize $(s, a)-$pairs into subsets
\begin{align*}
    &X_{k, \kappa, \iota} := \{ (s, a) \in X_k: \kappa_k (s, a) = \kappa, \iota_k (s, a) = \iota \}\\
    &\text{and} ~~ \bar{X}_k = S \times A \setminus X_k,
\end{align*}
where $X_k = \{ (s, a) : \iota_k(s, a) > 0 \}$ is the active set and $\bar{X}_k$ is the set of $(s, a)-$pairs that are very unlikely under policy $\tilde{\pi}_k.$ We will show that if $|X_{k, \kappa, \iota}| \leq \kappa$ is satisfied, then the model of Online-CRL would achieve near-optimality while violating constraints at most by $\epsilon$ w.h.p. This condition indicates that important state-action pairs under policy $\tilde{\pi}_k$ are visited a sufficiently large number of times. Hence, the model of Online-CRL will be accurate enough to obtain PAC bounds.

Now, first we show that true model belongs to $\mathcal{M}_k$ for every episode $k$ w.h.p.

\begin{lemma}
\label{lem:admis}
$M \in \mathcal{M}_k$ for all episodes $k$ with probability at least $1 - \frac{\delta}{2(N+1)}.$
\end{lemma}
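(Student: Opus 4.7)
The plan is to show that, simultaneously for every episode $k$ and every triple $(s,a,s')$, the true transition probability $P(s'|s,a)$ lies in the confidence interval appearing in \eqref{eq:transition-class-k}. Since $\mathcal{M}_k$ is entirely determined by the current counters $n(s,a)$ and $n(s',s,a)$, it suffices to establish the concentration bound pointwise in the integer count and then union-bound over triples $(s,a,s')$ and over realized count values.

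First I would fix $(s,a,s')$ and an integer $n \ge 1$. By the Markov property of the underlying MDP, the next-state indicators recorded on the first $n$ visits to $(s,a)$, pooled across episodes, form an i.i.d.\ sequence of Bernoulli random variables with mean $P(s'|s,a)$ and variance $P(s'|s,a)(1 - P(s'|s,a))$. Applying Hoeffding's inequality \cite{hoeffding} and the empirical Bernstein inequality \cite{empirical-bernstein}, each at level $\delta_1/2$ and combined via a union bound, yields
\begin{equation*}
|\widehat{P}_n(s'|s,a) - P(s'|s,a)| \le \min\!\left(\sqrt{\frac{2\widehat{P}_n(s'|s,a)(1-\widehat{P}_n(s'|s,a))}{n}\log\frac{4}{\delta_1}} + \frac{2}{3n}\log\frac{4}{\delta_1},\ \sqrt{\frac{\log(4/\delta_1)}{2n}}\right)
\end{equation*}
with probability at least $1-\delta_1$; this is exactly the radius used to define $\mathcal{M}_k$, evaluated at $n(s,a)=n$.

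Next I would union-bound over the $|S|^2|A|$ triples $(s,a,s')$ and over all integer values that any counter $n(s,a)$ can take during the run. The inner while loop of Algorithm~\ref{algo:online-crl-lp} terminates as soon as every counter reaches $|S|mH$, so each counter takes at most polynomially many values, and the total number of (triple, count) combinations that require a concentration statement is bounded by $|S|\cdot U_{\max}$. The choice $\delta_1=\delta/(4(N+1)|S|U_{\max})$ is calibrated so that the aggregate failure probability is at most $\delta/(2(N+1))$. On the complementary event, the inequality defining $\mathcal{M}_k$ in \eqref{eq:transition-class-k} holds at $n = n_k(s,a)$ for every $(s,a,s')$ and every $k$, which is the assertion $M \in \mathcal{M}_k$ for all $k$.

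The main obstacle is the data-dependence of the counter $n_k(s,a)$: one cannot apply a concentration inequality at the random index $n_k(s,a)$ directly, since this is a stopping time with respect to the sample sequence. The workaround is the pointwise-in-$n$ union bound described above, which is the standard device in the analysis of UCRL-style algorithms \cite{ucrl,ucfh}. A secondary subtlety is that the empirical Bernstein radius is written in terms of the random $\widehat{P}_n$ rather than the unknown $P$, but this is already built into the statement of the empirical Bernstein inequality of \cite{empirical-bernstein}, so no extra argument is needed.
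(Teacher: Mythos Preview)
Your proposal is correct and follows essentially the same route as the paper's proof: apply Hoeffding's inequality and the empirical Bernstein inequality, each at level $\delta_1/2$, to a fixed triple $(s,a,s')$ at a fixed count, take the minimum, and then union-bound over next states and over the at most $U_{\max}$ model updates to reach the claimed $\frac{\delta}{2(N+1)}$ failure probability. You are in fact slightly more explicit than the paper about why one must union-bound over deterministic count values rather than apply concentration directly at the random $n_k(s,a)$; the paper simply says ``union bound over all model updates and next states'' and leaves the stopping-time subtlety implicit.
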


\textbf{\emph{Proof Sketch:}} Fix a $(s,a),$ next state $s'$ and an episode $k.$ Then, $P(s'|s, a)$ lies inside the confidence set constructed by the combined Bernstein's and Hoeffding's inequalities. Taking the union bound over maximum number of model updates, $U_{\max},$ and next states would yield the result. \hfill $\Box$

Next, we bound the number of episodes that the condition $|X_{k, \kappa, \iota}| \leq \kappa$ is violated w.h.p.

\begin{lemma}
\label{lem:no-of-bad-episodes}
Suppose $E$ is the number of episodes $k$ for which there are $\kappa$ and $\iota$ with $|X_{k, \kappa, \iota}| > \kappa,$ i.e. $E = \sum_{k = 1}^{\infty} \mathbb{I}\{ \exists (\kappa, \iota) : |X_{k, \kappa, \iota}| > \kappa \}$ and let
\begin{align}
    \label{eq:m1}
    m \geq \frac{6H^2}{\epsilon} \log{\frac{2(N+1)E_{\max}}{\delta}},
\end{align}
where $E_{\max} = \log_2 \frac{H}{w_{\min}} \log_2 |S|.$ Then, $\mathbb{P}(E \leq 6 |S| |A| m E_{\max}) \geq 1 - \frac{\delta}{2(N+1)}.$
\end{lemma}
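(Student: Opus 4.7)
The plan is to bound $E$ by analyzing each knownness/importance class $(\kappa,\iota)$ separately and union-bounding over them. First I would note that only boundedly many such pairs can actually occur: since $w_k(s,a) \le H$, the importance $\iota$ lies on the dyadic ladder below $H/w_{\min}$, giving at most $\log_2(H/w_{\min})$ nonzero values, and similarly the knownness $\kappa$ is bounded on a log scale by $\log_2 |S|$ (since more than $|S|$-fold oversampling is saturated). Thus the number of distinct pairs $(\kappa,\iota)$ that can ever be violated is at most $E_{\max} = \log_2(H/w_{\min}) \log_2 |S|$, and it suffices to show that for each fixed $(\kappa,\iota)$ the number of episodes $k$ with $|X_{k,\kappa,\iota}| > \kappa$ is at most $6|S||A|m$ with probability at least $1 - \delta/(2(N+1)E_{\max})$.

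For a fixed $(\kappa,\iota)$, the core step is a budget argument. By definition of importance, every pair $(s,a)\in X_{k,\kappa,\iota}$ has weight $w_k(s,a) \ge \tfrac{1}{2}\iota w_{\min}$ (with the convention handling $\iota=1$), so on any bad episode the expected number of visits to $X_{k,\kappa,\iota}$ is at least $(\kappa+1)\cdot \tfrac{1}{2}\iota w_{\min}$. On the other hand, by the definition of knownness, a pair can belong to $X_{k,\kappa,\iota}$ only while $n_k(s,a) \le 2\kappa m\, w_k(s,a) \le 2\kappa m\,\iota w_{\min}$. Since $n_k(s,a)$ is monotone in $k$, the total expected visits to pairs in $X_{k,\kappa,\iota}$ summed over all episodes during which they inhabit this class is bounded by $2|S||A|\kappa m\,\iota w_{\min}$. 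Dividing this ceiling by the per-bad-episode floor $\tfrac{1}{2}\kappa\iota w_{\min}$ gives the desired $O(|S||A|m)$ bound on the number of bad episodes, once actual visits are replaced by expected visits.

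The translation from expected to actual visits is obtained by a martingale/Azuma--Hoeffding concentration: the per-episode contribution $V_k = \sum_{(s,a)\in X_{k,\kappa,\iota}} \#\{\text{visits in episode }k\}$ is bounded above by $H$, so summing over $T$ episodes gives a concentration of width $O(H\sqrt{T \log(1/\delta')})$ around the cumulative expectation. The choice $m \ge \frac{6H^2}{\epsilon}\log\frac{2(N+1)E_{\max}}{\delta}$ is tuned (together with $w_{\min} = \epsilon/(4H|S|)$) so that after at most $6|S||A|m$ bad episodes the actual total visits remain at least half of the expected total with probability at least $1-\delta/(2(N+1)E_{\max})$. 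Combining this high-probability lower bound on actual visits with the deterministic upper bound from the budget argument closes the loop and yields the constant $6$.

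The main obstacle will be the concentration step. The sequence $V_k$ is neither i.i.d.\ nor bounded with a fixed conditional mean: the optimistic policy $\tilde\pi_k$, the set $X_{k,\kappa,\iota}$ itself, and the summands all depend on the history of past episodes. The clean way to handle this is to fix $(\kappa,\iota)$, adapt a filtration $\mathcal{F}_{k-1}$ comprising all history up to episode $k-1$, and work with the martingale difference $V_k - \mathbb{E}[V_k \mid \mathcal{F}_{k-1}]$ on the event that $(\kappa,\iota)$ is violated at episode $k$, then apply Azuma--Hoeffding up to a stopping time defined by accumulation of $6|S||A|m$ such bad episodes. All remaining steps, including the final union bound over the $E_{\max}$ categories, are routine bookkeeping; the delicate piece is setting up this martingale so the bounded-increment assumption genuinely holds across the changing classes.
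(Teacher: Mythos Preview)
Your overall architecture matches the paper's exactly: restrict to the $E_{\max}=\log_2(H/w_{\min})\log_2|S|$ relevant $(\kappa,\iota)$ cells, establish a deterministic budget on total visits to each cell (the paper's Lemma~\ref{lem:nsa}), lower-bound the per-bad-episode conditional expectation by $(\kappa+1)w_\iota$, and then use a martingale concentration to cap the number of bad episodes before the budget is exhausted. The budget and counting steps you sketch are essentially the paper's.

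The gap is in the concentration step. Azuma--Hoeffding with increments bounded by $H$ gives a deviation of order $H\sqrt{T\log(1/\delta')}$ after $T$ bad episodes, so to guarantee that actual visits are at least half of the expected total $T(\kappa+1)w_\iota$ you would need
\[
T \;\gtrsim\; \frac{H^2\log(1/\delta')}{(\kappa+1)^2\,w_\iota^{2}}.
\]
In the worst cell $(\kappa=0,\ w_\iota\approx w_{\min}=\epsilon/(4H|S|))$ this forces $T\gtrsim H^4|S|^2\epsilon^{-2}\log(1/\delta')$, whereas your target is $T=6|S||A|m$ with $m=\Theta(H^2\epsilon^{-1}\log(1/\delta'))$, i.e.\ $T=\Theta(|S||A|H^2\epsilon^{-1}\log(1/\delta'))$. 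These are off by a multiplicative factor of order $H^2|S|/(\epsilon|A|)$, so the stated bound on $m$ in \eqref{eq:m1} is \emph{not} sufficient to close the Hoeffding argument.

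What the paper does instead (Lemma~\ref{lem:episode}) is exploit the variance structure: the per-episode visit count $\nu_k\in[0,H]$ satisfies $\mathrm{Var}[\nu_k\mid\mathcal F_{k-1}]\le H\,\mathbb{E}[\nu_k\mid\mathcal F_{k-1}]$, so a Bernstein/Freedman-type martingale bound yields a tail of the form $\exp(-c\,\mu/H)$ with $\mu$ the cumulative conditional mean, rather than Hoeffding's $\exp(-c\,\mu^2/(TH^2))$. Plugging $\mu\ge \alpha|S||A|m\,(\kappa+1)w_\iota$ and the worst case $(\kappa+1)w_\iota\ge w_{\min}$ then gives exactly the condition $m\ge \tfrac{4H^2}{\beta\epsilon}\log\tfrac{2(N+1)E_{\max}}{\delta}$, which for $\alpha=6$ matches \eqref{eq:m1}. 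So the fix is simply to replace Azuma--Hoeffding by a Bernstein-type martingale inequality in your step~3; everything else in your plan goes through.
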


\textbf{\emph{Proof sketch:}} The proof of this lemma is divided into two stages. First, we provide a bound on the total number of times a fixed $(s,a)$ could be observed in a particular $X_{k, \kappa, \iota}$ in all episodes. Then, we present a high probability bound on the number of episodes that $|X_{k, \kappa, \iota}| > \kappa$ for a fixed $(\kappa, \iota).$ Finally, we obtain the result by means of martingale concentration and union bound. \hfill $\Box$ \vspace{0.05in}

Finally, the next lemma provides a bound on the mismatch between objective and constraint functions of the optimistic model and true model.  The role of this lemma is similar to Lemma~\ref{lem:v-v'} for Optimistic-GMBL. It provides a PAC result, which is uniform over value and constraint functions. Hence, it is possible to have individual PAC results for any objective and constraint functions. As discussed in the context of Optimistic-GMBL, this process is responsible for a $\log{N}$ increase in the sample complexity result.

\begin{lemma}
\label{lem:bounded-mismatch}
Assume $M \in \mathcal{M}_k.$ If $|X_{k, \kappa, \iota}| \leq \kappa$ for all $(\kappa, \iota)$ and $0 < \epsilon \leq 1$ and
\begin{align}
    \label{eq:m2}
    m = 1280 \frac{|S| H^2}{\epsilon^2} (\log_2 \log_2 H)^2 \log_2^2 \Bigl( \frac{8|S|^2 H^2}{\epsilon} \Bigr) \log{\frac{4}{\delta_1}},
\end{align}
then $|\tilde{V}^{\tilde{\pi}_k}_0(s_0) - V^{\tilde{\pi}_k}_0(s_0)| \leq \epsilon$ and for any $i, |\tilde{C}^{\tilde{\pi}_k}_{i, 0}(s_0) - C^{\tilde{\pi}_k}_{i, 0}(s_0)| \leq \epsilon.$
\end{lemma}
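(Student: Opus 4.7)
The plan is to follow the variance-aware simulation-lemma analysis developed for UCFH, adapted to produce identical bounds simultaneously for the value function and for each of the $N$ cost functions. At a high level the argument proceeds in four movements: a telescoping identity that converts global value mismatch into a weighted sum of one-step transition mismatches; a Bernstein-type bound on the one-step mismatch using the confidence set definition \eqref{eq:transition-class-k}; a partition of the state--action space into knownness/importance classes to control that sum; and a routine transfer of the identical argument to each constraint function.

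First I would apply the standard simulation identity
\[
\tilde V^{\tilde\pi_k}_0(s_0)-V^{\tilde\pi_k}_0(s_0)=\sum_{h=0}^{H-1}\tilde P^{h-1}_{\tilde\pi_k}\bigl(\tilde P_{\tilde\pi_k}-P_{\tilde\pi_k}\bigr)V^{\tilde\pi_k}_{h+1}(s_0),
\]
which rewrites the mismatch as an expectation under the optimistic occupancy $w_k(s,a)$ of the one-step transition error $(\tilde P-P)V^{\tilde\pi_k}_{h+1}$. The identical identity with $V$ replaced by $C_i$ handles each constraint. For the one-step error, I would use the hypothesis $M\in\mathcal M_k$ to place both $P$ and $\tilde P$ in $\mathcal M_k$, so that a triangle inequality together with a standard conversion between $\widehat P$-variance and $P$-variance gives
\[
\bigl|(\tilde P-P)f(s,a)\bigr|\le C_1\,\sigma^f_{s,a}\sqrt{\tfrac{\log(4/\delta_1)}{n_k(s,a)}}+C_2\,\tfrac{|S|\,H\log(4/\delta_1)}{n_k(s,a)}
\]
for $f\in\{V^{\tilde\pi_k}_{h+1},C^{\tilde\pi_k}_{i,h+1}\}$, where $\sigma^f_{s,a}$ is the standard deviation of $f$ under $P(\cdot|s,a)$.

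The core of the argument is the partition by knownness and importance. I would split the $(s,a)$-sum as $\bar X_k\cup\bigcup_{\kappa,\iota}X_{k,\kappa,\iota}$. On $\bar X_k$, the bound $w_k(s,a)<w_{\min}=\epsilon/(4H|S|)$ together with $|\bar X_k|\le|S||A|$ controls the contribution by $O(\epsilon)$. The hypothesis $|X_{k,0,\iota}|\le 0$ rules out unknown-but-important pairs outright, so only $\kappa\ge 1$ classes contribute. On each such class I would combine the knownness inequality $n_k(s,a)\ge\kappa m w_k(s,a)$, the importance inequality $w_k(s,a)\le\iota w_{\min}$, and the cardinality bound $|X_{k,\kappa,\iota}|\le\kappa$, then apply Cauchy--Schwarz across $(s,a)$ together with the law of total variance $\sum_h \mathbb E_{P,\tilde\pi_k}\bigl[(\sigma^{V^{\tilde\pi_k}}_{h+1})^2\bigr]\le H^2$ to aggregate across $h$. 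Summing over the $O(\log_2(H/w_{\min})\log_2|S|)$-sized dyadic grid of $(\kappa,\iota)$ pairs and handling the lower-order $1/n_k$ term yields
\[
\bigl|\tilde V^{\tilde\pi_k}_0(s_0)-V^{\tilde\pi_k}_0(s_0)\bigr|=\tilde O\!\left(\sqrt{\tfrac{|S|H^2\log(1/\delta_1)}{m}}\right),
\]
which the choice of $m$ in \eqref{eq:m2} pushes below $\epsilon$.

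Every inequality used depends on $f$ only through its per-step range $[0,1]$ and horizon-sum range $[0,H]$, so the identical bound holds for each $C^{\tilde\pi_k}_{i,0}$. The union bound over the $N+1$ functions is already paid for by $\delta_1=\delta/(4(N+1)|S|U_{\max})$ in Algorithm~\ref{algo:online-crl-lp}. The main technical obstacle will be the combinatorial bookkeeping of the $(\kappa,\iota)$ double sum: matching the variance factor with knownness, applying Cauchy--Schwarz only once so as not to lose an extra $\sqrt H$, and invoking the law of total variance under the correct change of measure. A secondary subtlety is that $w_k$ is defined under $\tilde P$ while the variance-sum identity is most natural under $P$; the closeness provided by $M\in\mathcal M_k$ is what allows these two occupancies to be swapped at the cost of lower-order terms.
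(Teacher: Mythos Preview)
Your high-level outline is correct and matches the paper: the telescoping identity from Lemma~\ref{lem:v-v'_pi}, the Bernstein-type one-step bound via Corollary~\ref{cor:p-phat-ptilde} and Corollary~\ref{cor:bound_on_pv}, the split into $\bar X_k$ and the $(\kappa,\iota)$ cells with $n_k(s,a)\ge \kappa\, m\, w_k(s,a)$ and $|X_{k,\kappa,\iota}|\le\kappa$, Cauchy--Schwarz, and the variance identity of Lemma~\ref{lem:variance-bound}. The transfer to each $C_{i}^{\tilde\pi_k}$ by the identical argument is also exactly what the paper does.

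The genuine gap is what you label a ``secondary subtlety''. Whichever form of the telescoping in Lemma~\ref{lem:v-v'_pi} you use, the outer operator ($P^{h-1}$ or $\tilde P^{h-1}$) and the local variance you end up with ($\tilde\sigma_h$ or $\sigma_h$) live under \emph{different} kernels, so the total-variance bound $\sum_h \mathbb E_{P,\tilde\pi_k}[\sigma_h^{2}]\le H^2$ is not directly applicable. You assert the swap costs only lower-order terms; it does not. In the paper, replacing $\sum_h P^{h-1}\tilde\sigma_h^{(d),2}$ by $\sum_h \tilde P^{h-1}\tilde\sigma_h^{(d),2}$ leaves the residual $\sum_h(P^{h-1}-\tilde P^{h-1})\tilde\sigma_h^{(d),2}$, which is a value-function mismatch of exactly the \emph{same} type as $\Delta_d$, just for the new reward $r^{(2d+2)}:=\max_h \tilde\sigma_h^{(d),2}$. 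The paper therefore defines a sequence of auxiliary CMDPs $M_d$ and obtains a recursion of the shape
\[
\Delta_d \;\le\; \frac{\epsilon}{4}H^{d} \;+\; 3\sqrt{\frac{8|S|\,|\mathcal K\times\mathcal I|\log(4/\delta_1)}{m}\,H^{2d+2}} \;+\; \sqrt{\frac{8|S|\,|\mathcal K\times\mathcal I|\log(4/\delta_1)}{m}\,\Delta_{2d+2}},
\]
which is unrolled over $\mathcal D=\{0,2,6,\dots\}$ to depth $\beta=\lceil \log H/(2\log 2)\rceil$. This recursion is precisely the origin of the factor $(\log_2\log_2 H)^2$ in the choice of $m$ in \eqref{eq:m2}; a one-shot application of the total-variance bound, as you propose, cannot reproduce that factor and instead leaves an extra $\sqrt{H}$ in the leading term (so the stated $m$ would not suffice). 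Your plan needs this recursive device, or an equivalent, to close.
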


\textbf{\emph{Proof Sketch:}}  We first use algebraic operations to obtain $|\tilde{P}(s'|s, a) - P(s'|s, a)| \leq O(\sqrt{\frac{P(s'|s, a)(1 - P(s'|s,a))}{n})}$ for each $s', s, a.$  Then we show that at each time-step $h, (P_{\pi} - \tilde{P}_{\pi}) V^{\pi}_h(s) \leq O(\sqrt{\frac{|S|}{n}} \sigma^{\pi}_h(s)).$  Then we divide the state-action based on knownness, i.e., whether they belong to $X_k$ or not.  By applying all bounds and using the fact that $\sigma^{\pi}_h(s)$ is close to $\tilde{\sigma}^{\pi}_h(s)$ by $\frac{\sqrt{|S| H^2}}{n^{1/4}},$  we obtain a bound on $|\tilde{V}^{\pi}_0(s_0) - V^{\pi}_0(s_0)|.$   Eventually, we use the definition of weights to get the final result. This procedure is also applicable to each constraint function $i.$ \hfill $\Box$\vspace{0.05in}

\textbf{\emph{Proof Sketch of Theorem \ref{thm:online-crl-lp}}:} First, we apply Lemma \ref{lem:admis} and show that $M \in \mathcal{M}_k$ for every $k$ w.p. at least $1 - \frac{\delta}{2(N+1)}.$  Therefore, the optimistic planning problem would be feasible and an  optimistic policy $\tilde{\pi}_k$ exists w.h.p.  Furthermore, we bound the number of episodes where $|X_{k, \kappa,\iota}| > \kappa$ w.h.p. by means of Lemma \ref{lem:no-of-bad-episodes}. Thus, for other episodes where $|X_{k, \kappa, \iota}| \leq \kappa,$ we show that objective function is $\epsilon-$optimal and all constraint functions are violated by $\epsilon$ by applying Lemma \ref{lem:bounded-mismatch}. Eventually, taking union bound yields the result. \hfill $\Box$

\section{Experimental Results}
\label{sec:sim}

We conduct experiments on CMDPs akin to a grid world MDP, wherein each square indicates the location of the agent. The goal of the is to start at the fixed start state and reach the final state in $H$ steps.  The agent obtains a reward of $1$ when reaching the goal.  Transitions are stochastic, and given any action, there is probability of self and other transitions, as well as transitioning to other state as intended by the action.   We consider two classes of CMDPs under this setting, namely, (i) state occupancy constraints, and (ii) action frequency constraints, which represent the types of constraints that might appear in real systems.

% We consider two scenarios. We generate CMDPs akin to a grid world MDP. Here, we consider a $3 \times 3$ grid which is a $3 \times 3$ squares, and each square indicates the location of the agent. The goal of the is to start at the fixed start state and reach the final state in $10$ steps, i.e. $H = 10.$ The agent obtains a reward of $1$ when reaching the goal. Besides, we consider that the grid world is a stochastic environment. So, each state has a positive probability of staying in itself, while also transitioning to other state as intended by the action.
%This is represented in the grid format as below.
%Give the grid.

%\begin{figure}[ht]
%    \centering
%    \includegraphics[width = 0.3\linewidth]{./figures/grid.pdf}
%    \caption{$3 \times 3$ Grid World}
%    \label{fig:grid}
%\end{figure}

For the first scenario class, we augment the unconstrained MDP by an action budget constraint. We restrict the number of moves to the right, while ensuring that a feasible path to the goal exists. Here, we consider a $3 \times 3$ and $5\times 5$ grid as examples, with $9$ state states and $25$ states respectively, and with $4$ actions. The $3 \times 3$ and $5 \times 5$ examples are labeled as scenario $1$a and scenario $1$b.

In the second scenario class, we consider a $3 \times 3$ grid world with a particular state is ``bad'' for the CMDP, so the agent must avoid entering it frequently or at all.  The bad state has higher probability of transitioning out of itself compared to the rest of the states. But, if the agent enters this state, a cost is levied. Thus, the constraint is to limit the probability of entering the bad state, and to set the constraint threshold to $0.$ This means that the optimal policy for CMDP is to avoid the bad state altogether.  This process is equivalent to incurring an immediate cost of $1$ when the agent finds itself in the bad state.

% In the first scenario class, state $4$ is a ``bad'' state for the CMDP, so the agent must avoid entering it frequently or at all. Here, we consider a 3x3 and 5x5 grid as examples.  This state has higher probability of transitioning out of itself compared to the rest of the states. But, if the agent enters this state, a cost is levied. Thus, we limit the probability of entering state $4,$ and set the constraint threshold to $0.$ This means that the optimal policy for CMDP is to avoid state $4$ altogether. This process is equivalent to incur an immediate cost of $1$ when the agent finds himself in state $4.$

 % This is achieved by giving a small reward in state $4.$

% ensure the existence of a path between the start and the end states, we allow the expected number of "right" moves not more than $2.$ 
% To make the constraint more precise, we define state $4$ to be a trap. The trap state sends the agent back to start state $0,$ with probability one. This is achieved by giving a small reward in state $4.$

We simulate Optimistic-GMBL and Online-CRL for these scenarios. Here, we consider two performance metrics. One, difference in value function calculated by
\begin{align*}
    V^{\pi^*}_0(s_0) - V^{\pi'}_{0}(s_0).
\end{align*}
where $\pi'$ is whether Optimistic-GMBL or Online-CRL. The second performance metric is constraint violation which is calculated by 
\begin{align*}
    \max(C^{\pi'}_0(s_0) - \Bar{C}, 0).
\end{align*}
since we have one constraint in each scenario. Further, we average each data point on every figure over $25$ runs. %The results are given below.

\begin{figure}
    \centering
    \begin{minipage}{0.45 \linewidth}
        \includegraphics[width=\linewidth]{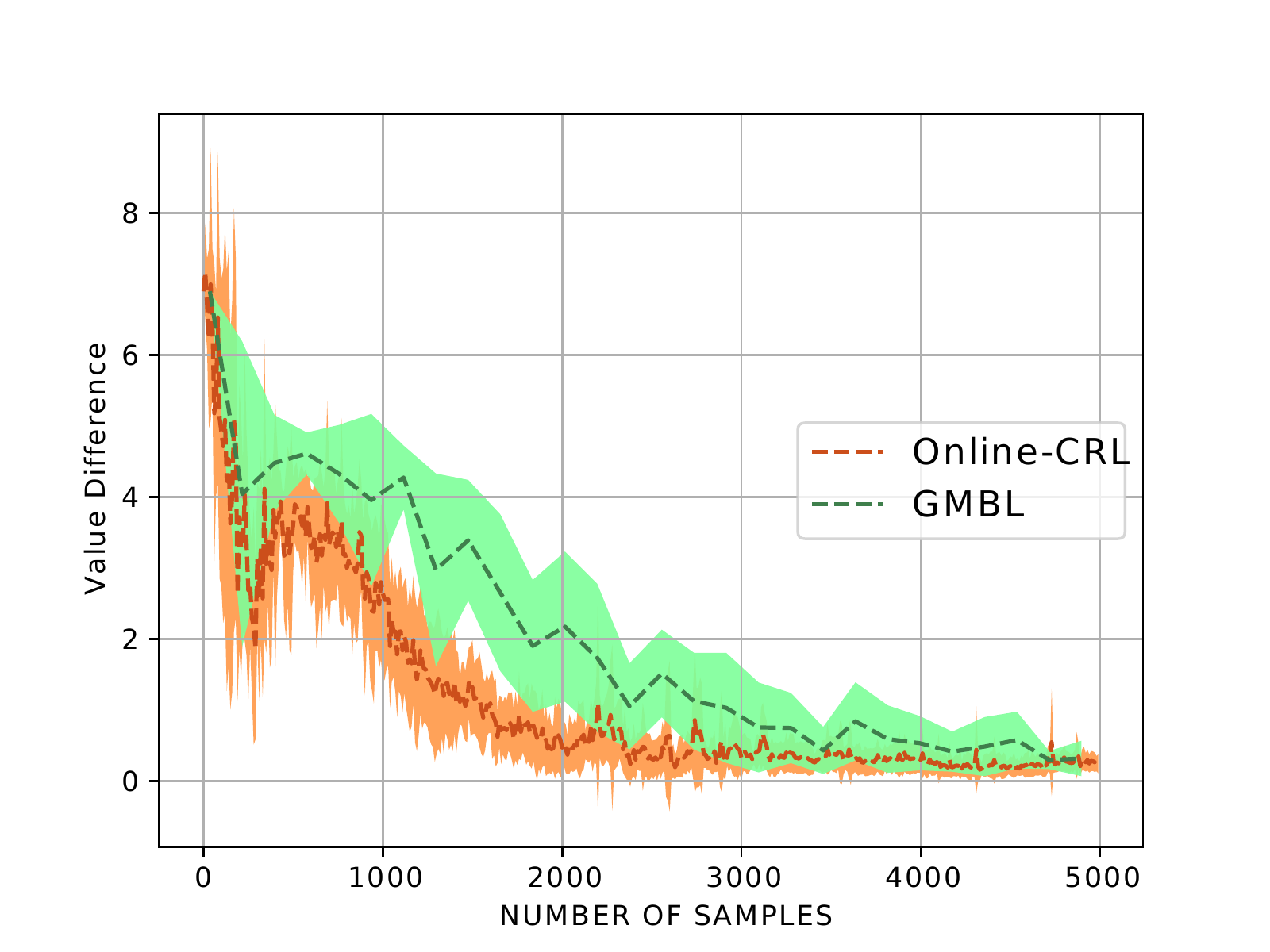}
        \caption{Value Difference for Scenario 1a}
        \label{fig:value1a}
    \end{minipage}
    \begin{minipage}{0.45 \linewidth}
        \includegraphics[width=\linewidth]{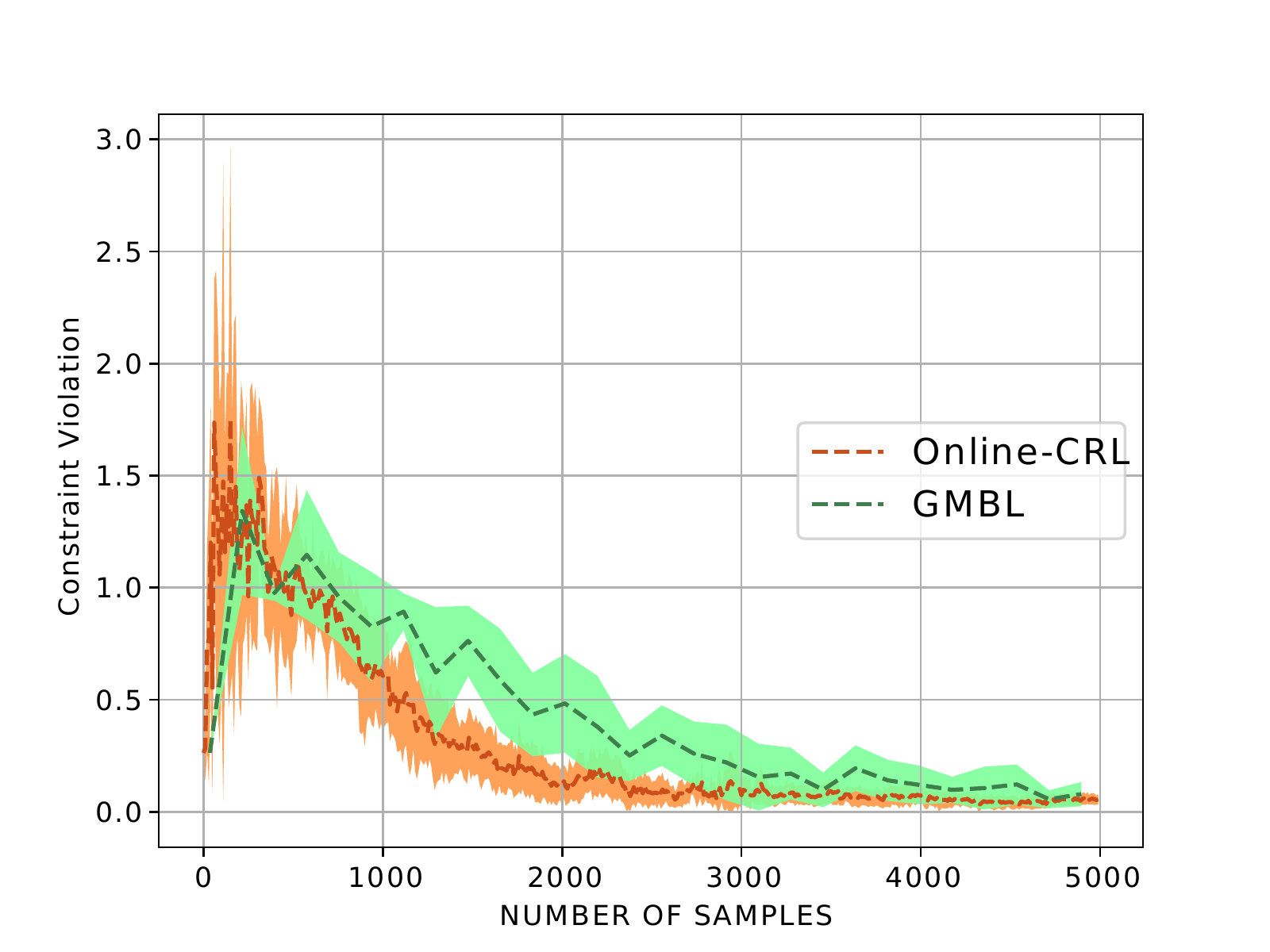}
        \caption{Constraint Violation for Scenario 1a}
        \label{fig:constraint1a}
    \end{minipage}
\end{figure}
\begin{figure}
    \centering
    \begin{minipage}{0.45 \linewidth}
        \includegraphics[width=\linewidth]{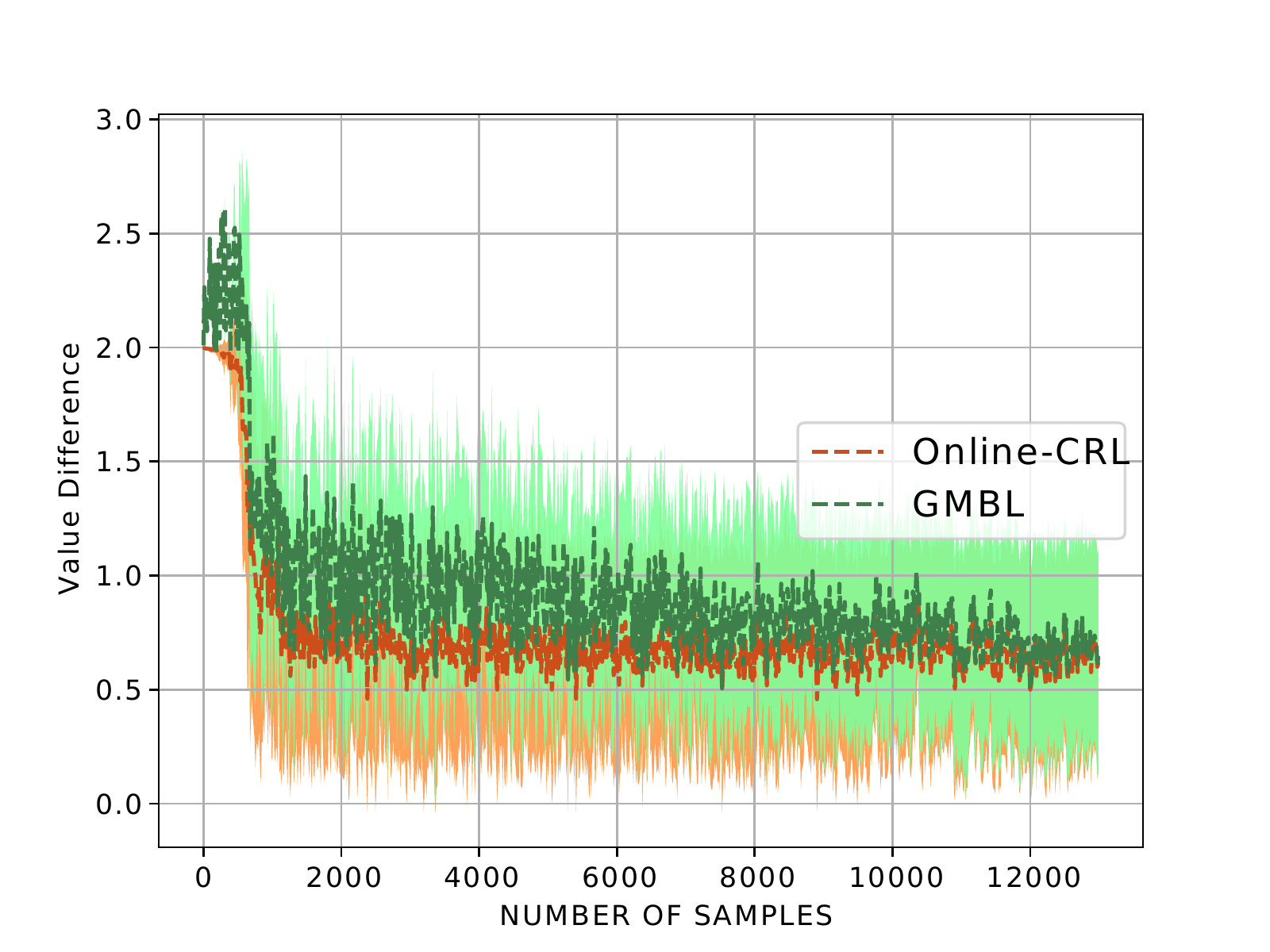}
        \caption{Value Difference for Scenario 1b}
        \label{fig:value1b}
    \end{minipage}
    \begin{minipage}{0.45 \linewidth}
        \includegraphics[width=\linewidth]{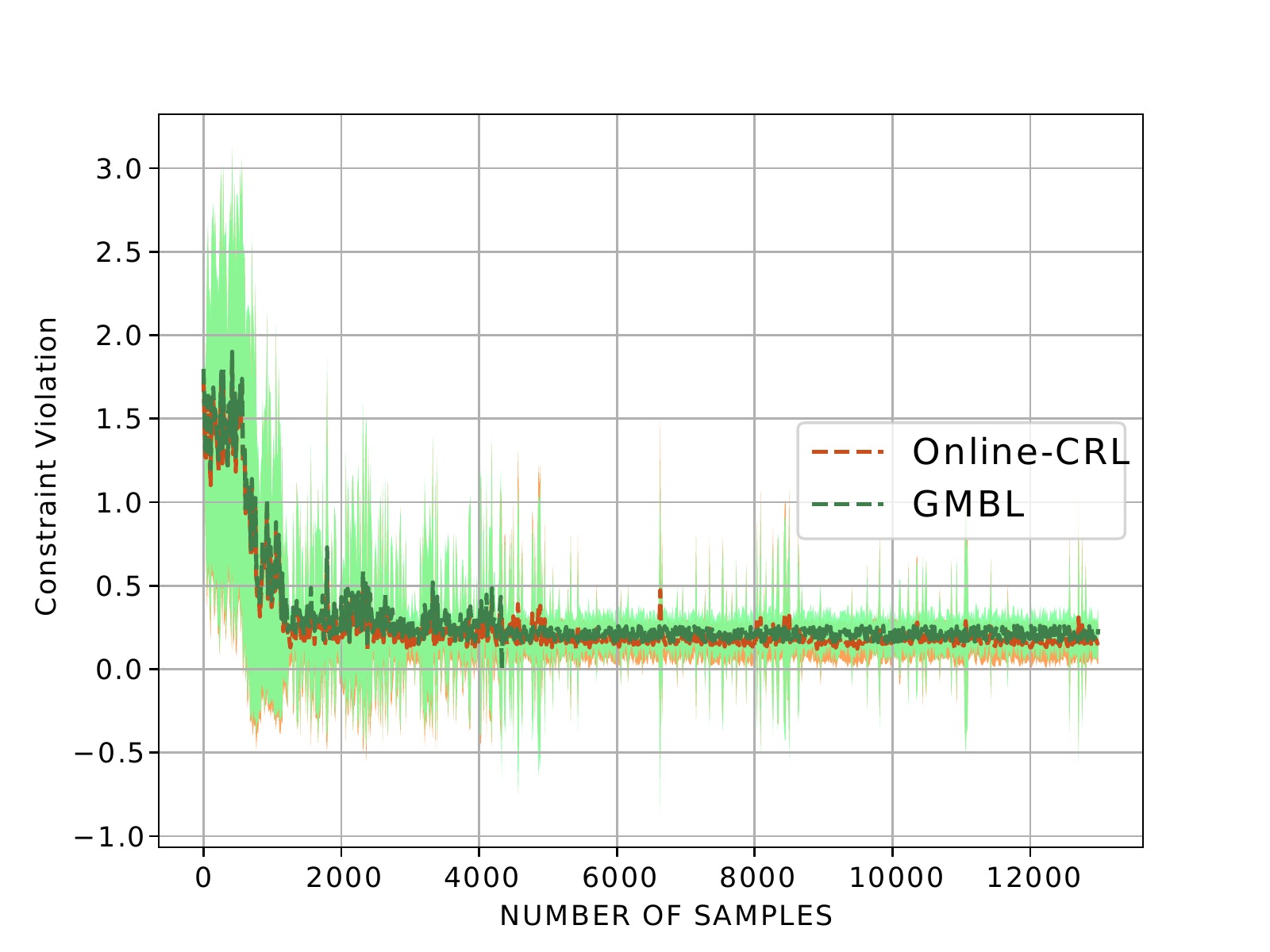}
        \caption{Constraint Violation for Scenario 1b}
        \label{fig:constraint1b}
    \end{minipage}
\end{figure}
\begin{figure}[ht]
    \centering
    \begin{minipage}{0.45 \linewidth}
        \includegraphics[width=\linewidth]{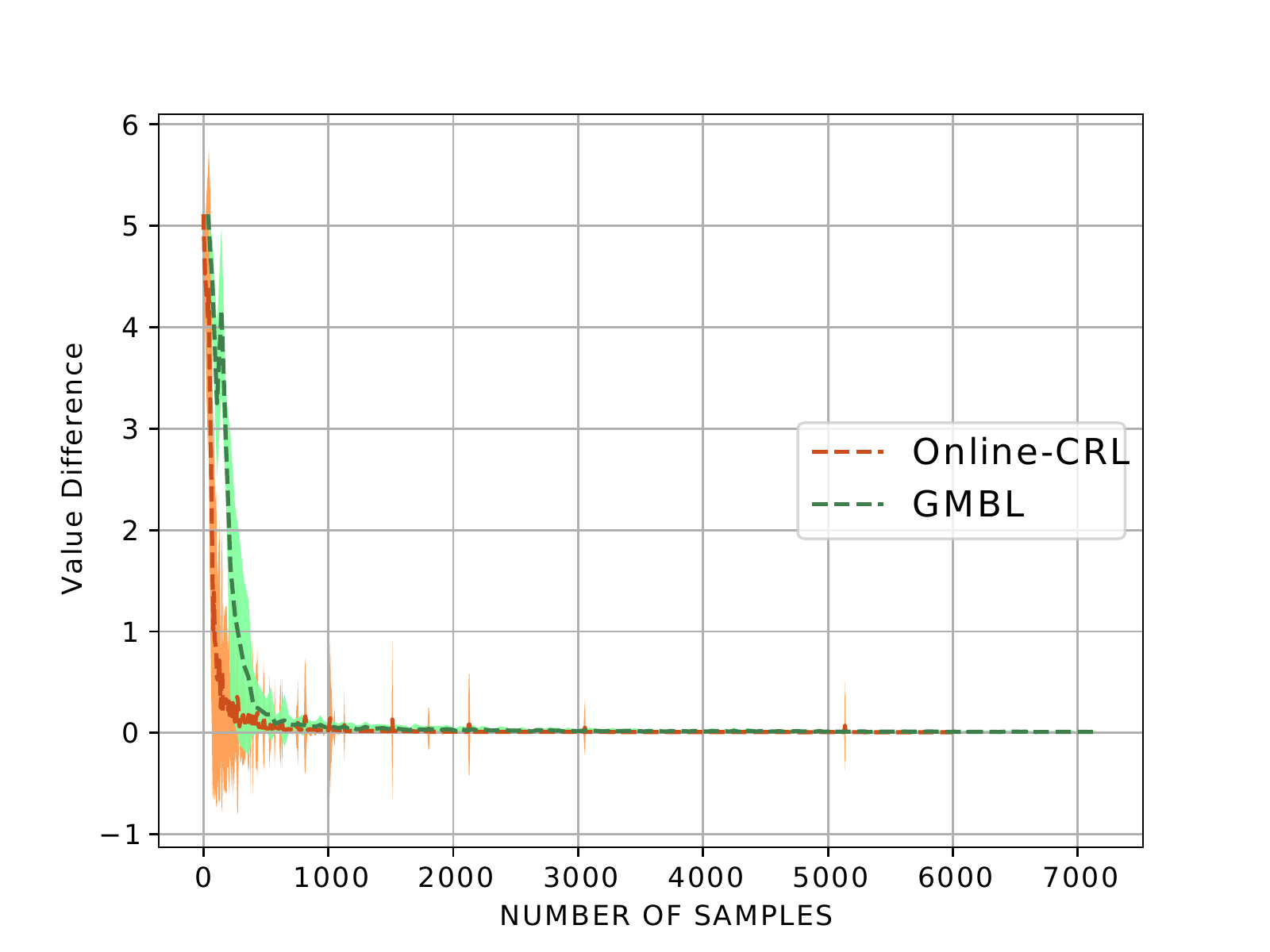}
        \caption{Value Difference for Scenario 2}
        \label{fig:value2}
    \end{minipage}
    \begin{minipage}{0.45 \linewidth}
        \includegraphics[width=\linewidth]{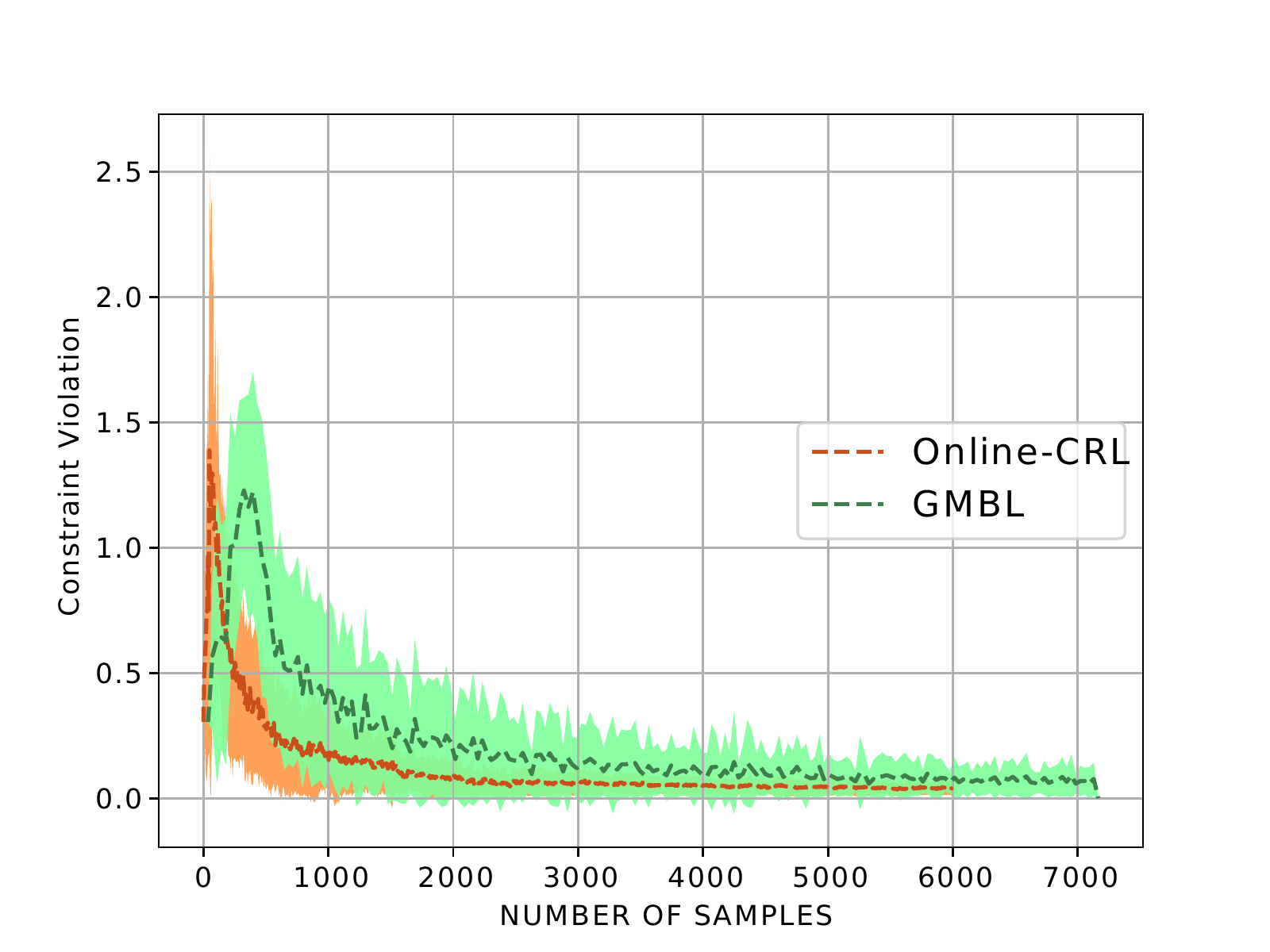}
        \caption{Constraint Violation for Scenario 2}
        \label{fig:constraint2}
    \end{minipage}
\end{figure}

As seen in the Figures \ref{fig:value1a}, \ref{fig:value1b} and \ref{fig:value2}, both Optimistic-GMBL and Online-CRL reach the optimal values in both scenarios. We observe that the Online-CRL algorithm, despite having fewer number of samples, does consistently better than the Optimistic-GMBL algorithm in both the scenarios.  Similar behavior appears in figures \ref{fig:constraint1a}, \ref{fig:constraint1b}, and \ref{fig:constraint2}, which illustrates constraint violation. Intuitively, Online-CRL outperforms Optimistic-GMBL empirically because it samples the important state-action pairs often, and hence resolves uncertainty quickly.

\section{Conclusion}
\label{sec:conc}

This paper introduced the notion of sample complexity in objective maximization and constraint satisfaction for understanding the performance of RL algorithms for safety-constrained applications.  We developed two types of algorithms---Optimistic-GMBL and online-CRL.  
%---and showed how the idea of a confidence ball around the estimated model enables each to discover the (near) optimal policy, while (near) satisfying constraints in a PAC sense.  
The main finding of a logarithmic factor increase in sample complexity over the unconstrained regime suggests value of the approach to real systems.

\section{Broader Impact}

Reinforcement learning has shown great success in domains that are action constrained, such as robotics, but less so on systems that are safety constrained in terms of the occupancy measure generated by the policy employed.  These include a variety of cyber-physical systems (CPS) such as the power grid, and other utilities, where guarantees on the operating region of the system must be met---ideally deterministically, but within some bounds with high probability in practice.  

It is in the space of control of such CPS that our work is applicable, and could potentially have an impact on a wide variety of supervisory control and data acquisition (SCADA) systems.  Many of them already employ empirically determined policies validated through large scale simulations, and it is not hard to visualize them as being driven by RL-based policies.  Sample complexity bounds reveal how much information is needed to obtain what level of guarantee of safe operability, and hence are a way of determining if a policy has been well enough trained to be actually used.

However, a note of caution with this approach is that the policy generated is only as good as the training environment, and many examples exist wherein the policy generated is optimal according to its training, but violate basic truths known to human operators and could fail quite badly.  Indeed, our approach does not provide sample-path constraints, and the system could well move into deleterious states for a small fraction of the time, which might be completely unacceptable and trigger hard fail safes, such as breakers in a power system.  Understanding the right application environments with excellent domain knowledge is hence needed before any practical success can be claimed.

\bibliography{ref}

\begin{thebibliography}{10}

\bibitem{altman}
Eitan Altman.
\newblock {\em Constrained {M}arkov decision processes}, volume~7.
\newblock CRC Press, 1999.

\bibitem{efroni}
Yonathan Efroni, Shie Mannor, and Matteo Pirotta.
\newblock Exploration-exploitation in constrained mdps.
\newblock {\em arXiv preprint arXiv:2003.02189}, 2020.

\bibitem{azar}
Mohammad~Gheshlaghi Azar, R{\'e}mi Munos, and Hilbert~J Kappen.
\newblock Minimax pac bounds on the sample complexity of reinforcement learning
  with a generative model.
\newblock {\em Machine learning}, 91(3):325--349, 2013.

\bibitem{ucfh}
Christoph Dann and Emma Brunskill.
\newblock Sample complexity of episodic fixed-horizon reinforcement learning.
\newblock In {\em Advances in Neural Information Processing Systems}, pages
  2818--2826, 2015.

\bibitem{altman2002applications}
Eitan Altman.
\newblock Applications of {M}arkov decision processes in communication
  networks.
\newblock In {\em Handbook of {M}arkov decision processes}, pages 489--536.
  Springer, 2002.

\bibitem{borkar2005actor}
Vivek~S Borkar.
\newblock An actor-critic algorithm for constrained {M}arkov decision
  processes.
\newblock {\em Systems \& control letters}, 54(3):207--213, 2005.

\bibitem{borkar2014risk}
Vivek Borkar and Rahul Jain.
\newblock Risk-constrained {M}arkov decision processes.
\newblock {\em IEEE Transactions on Automatic Control}, 59(9):2574--2579, 2014.

\bibitem{singh2018throughput}
Rahul Singh and PR~Kumar.
\newblock Throughput optimal decentralized scheduling of multihop networks with
  end-to-end deadline constraints: Unreliable links.
\newblock {\em IEEE Transactions on Automatic Control}, 64(1):127--142, 2018.

\bibitem{singh2014fluctuation}
Rahul Singh, I-Hong Hou, and PR~Kumar.
\newblock Fluctuation analysis of debt based policies for wireless networks
  with hard delay constraints.
\newblock In {\em IEEE INFOCOM 2014-IEEE Conference on Computer
  Communications}, pages 2400--2408. IEEE, 2014.

\bibitem{value-iteration-1}
Joshua~D Isom, Sean~P Meyn, and Richard~D Braatz.
\newblock Piecewise linear dynamic programming for constrained pomdps.
\newblock In {\em AAAI}, volume~1, pages 291--296, 2008.

\bibitem{value-iteration-2}
Dongho Kim, Jaesong Lee, Kee-Eung Kim, and Pascal Poupart.
\newblock Point-based value iteration for constrained pomdps.
\newblock In {\em IJCAI}, pages 1968--1974, 2011.

\bibitem{bhatnagar2012online}
Shalabh Bhatnagar and K~Lakshmanan.
\newblock An online actor--critic algorithm with function approximation for
  constrained {M}arkov decision processes.
\newblock {\em Journal of Optimization Theory and Applications},
  153(3):688--708, 2012.

\bibitem{chow2018lyapunov}
Yinlam Chow, Ofir Nachum, Edgar Duenez-Guzman, and Mohammad Ghavamzadeh.
\newblock A lyapunov-based approach to safe reinforcement learning.
\newblock In {\em Advances in Neural Information Processing Systems}, pages
  8092--8101, 2018.

\bibitem{tessler2018reward}
Chen Tessler, Daniel~J Mankowitz, and Shie Mannor.
\newblock Reward constrained policy optimization.
\newblock {\em arXiv preprint arXiv:1805.11074}, 2018.

\bibitem{paternain2019constrained}
Santiago Paternain, Luiz Chamon, Miguel Calvo-Fullana, and Alejandro Ribeiro.
\newblock Constrained reinforcement learning has zero duality gap.
\newblock In {\em Advances in Neural Information Processing Systems}, pages
  7553--7563, 2019.

\bibitem{ipo}
Yongshuai Liu, Jiaxin Ding, and Xin Liu.
\newblock Ipo: Interior-point policy optimization under constraints.
\newblock {\em arXiv preprint arXiv:1910.09615}, 2019.

\bibitem{modiano}
Qingkai Liang, Fanyu Que, and Eytan Modiano.
\newblock Accelerated primal-dual policy optimization for safe reinforcement
  learning.
\newblock {\em arXiv preprint arXiv:1802.06480}, 2018.

\bibitem{badanidiyuru2013bandits}
Ashwinkumar Badanidiyuru, Robert Kleinberg, and Aleksandrs Slivkins.
\newblock Bandits with knapsacks.
\newblock In {\em 2013 IEEE 54th Annual Symposium on Foundations of Computer
  Science}, pages 207--216. IEEE, 2013.

\bibitem{wu2015algorithms}
Huasen Wu, Rayadurgam Srikant, Xin Liu, and Chong Jiang.
\newblock Algorithms with logarithmic or sublinear regret for constrained
  contextual bandits.
\newblock In {\em Advances in Neural Information Processing Systems}, pages
  433--441, 2015.

\bibitem{amani2019linear}
Sanae Amani, Mahnoosh Alizadeh, and Christos Thrampoulidis.
\newblock Linear stochastic bandits under safety constraints.
\newblock In {\em Advances in Neural Information Processing Systems}, pages
  9252--9262, 2019.

\bibitem{zheng2020constrained}
Liyuan Zheng and Lillian~J Ratliff.
\newblock Constrained upper confidence reinforcement learning.
\newblock {\em arXiv preprint arXiv:2001.09377}, 2020.

\bibitem{icml2}
Akifumi Wachi and Yanan Sui.
\newblock Safe reinforcement learning in constrained markov decision processes.
\newblock {\em arXiv preprint arXiv:2008.06626}, 2020.

\bibitem{icml1}
Harsh Satija, Philip Amortila, and Joelle Pineau.
\newblock Constrained markov decision processes via backward value functions.
\newblock {\em arXiv preprint arXiv:2008.11811}, 2020.

\bibitem{ubev}
Christoph Dann, Tor Lattimore, and Emma Brunskill.
\newblock Unifying pac and regret: Uniform pac bounds for episodic
  reinforcement learning.
\newblock In {\em Advances in Neural Information Processing Systems}, pages
  5713--5723, 2017.

\bibitem{mbie}
Alexander~L Strehl and Michael~L Littman.
\newblock An analysis of model-based interval estimation for {M}arkov decision
  processes.
\newblock {\em Journal of Computer and System Sciences}, 74(8):1309--1331,
  2008.

\bibitem{mdp}
Martin~L Puterman.
\newblock {\em Markov decision processes: discrete stochastic dynamic
  programming}.
\newblock John Wiley \& Sons, 2014.

\bibitem{hoeffding}
Wassily Hoeffding.
\newblock Probability inequalities for sums of bounded random variables.
\newblock In {\em The Collected Works of Wassily Hoeffding}, pages 409--426.
  Springer, 1994.

\bibitem{empirical-bernstein}
Andreas Maurer and Massimiliano Pontil.
\newblock Empirical bernstein bounds and sample variance penalization.
\newblock {\em arXiv preprint arXiv:0907.3740}, 2009.

\bibitem{ucrl}
Tor Lattimore and Marcus Hutter.
\newblock Near-optimal pac bounds for discounted mdps.
\newblock {\em Theoretical Computer Science}, 558:125--143, 2014.

\bibitem{lu}
Fan Chung and Linyuan Lu.
\newblock Concentration inequalities and martingale inequalities: a survey.
\newblock {\em Internet Mathematics}, 3(1):79--127, 2006.

\end{thebibliography}
\bibliographystyle{unsrt}

\newpage

\section{Technical Appendix}
\label{sec:apen}
%In this section, we provide ELP formulation beside essential technical lemmas with their proofs.

\subsection{Extended-Linear Programming}
%To elaborate on ELP, first we present generic Linear Programming, LP, formulation equivalent to the CMDP problem formulation.
ELP is a Linear Programming, LP, formulation indeed. So, we first present generic LP which is used to solve CMDP problem of \eqref{eq:opt} \cite{altman}, then build the idea of ELP based on that. To solve CMDP problem \eqref{eq:opt} via LP approach, we convert this problem to a linear programming problem formulated using new variables occupation measures. Now, consider $\mu$ as the finite-horizon state-action occupation measure under policy $\pi$ defined as
\begin{align}
\label{eq:occupancy-measure}
    \mu(s, a, \pi, h) := \mathbb{P}(s_h =s, a_h = a|s_{h = 0} = s_0),
\end{align}
where the probability is calculated w.r.t. underlying transition kernel under policy $\pi, P_{\pi}.$ It is shown that objective function and constraint functions could be restated as functions of occupation measures. Then, the problem would become to find the optimal occupation measures.% This procedure is accomplished by creating an LP that is equivalent to the CMDP problem~\cite{altman}.

Now, if we let $\mu$ be any generic occupation measure defined as \eqref{eq:occupancy-measure}, then the equivalent LP to CMDP problem \eqref{eq:opt} is
\begin{equation}
\label{eq:lp}
     \begin{split}
         \max_{\mu} & \sum_{s, a, h} \mu(s, a, h) r(s, a)\\
         & \text{s.t.}\\
         & \sum_{s, a, h} \mu(s, a, h) c(i, s, a) \leq \bar{C}_i ~~~ \forall i,\\% \in \{1, \dots, N \},\\
         & \sum_{a} \mu(s, a, h) = \sum_{s', a'} P(s| s', a') \mu(s', a', h-1) ~~ \forall h \in \{ 1, \dots, H-1 \}, \\
         & \sum_{a} \mu(s_0, a, 0) = 1, ~~ \sum_{a} \mu(s, a, 0) = 0 ~~ \forall s \in S \backslash \{s_0\},\\
         & \mu(s, a, h) \geq 0 ~~ \forall s , a, h \\ %\in \{ 0, 1, \dots, H-1 \}.
     \end{split}
 \end{equation}

It is proved that the LP \eqref{eq:lp} is equivalent to CMDP problem of \eqref{eq:opt}, and the optimal policy computed by this LP is also the solution to CMDP problem in \cite{altman}. Eventually, the optimal policy $\pi^*$ is calculated as follows
\begin{align*}
    \pi^*(s, a, h) = \frac{\mu(s, a, h)}{\sum_{b} \mu(s, b, h)}.
\end{align*}

Now, given the estimated model $\widehat{P},$ we get the ELP formulation if we define new occupancy measure $q(s, a, s', h) = P(s'|s, a) \mu(s, a, h).$ Eventually, the ELP formulation is
\begin{align*}
    \max_{q} & \sum_{s, a, s', h} q(s, a, s', h) r(s, a)\\
    & \text{s.t.}\\
    & \sum_{s, a, s', h} q(s, a, s', h) c(i, s, a) \leq \bar{C}_i ~~~ \forall i \in \{1, \dots, N \},\\
    & \sum_{a, s'} q(s, a, s', h) = \sum_{s', a'}  q(s', a', s, h-1) ~~ \forall h \in \{ 1, \dots, H-1 \}, \\
    & \sum_{a, s'} q(s_0, a, s', 0) = 1, ~~ \sum_{a, s'} q(s, a, s', 0) = 0~~\forall s \in S \backslash \{s_0\},\\
    &q(s, a, s', h) \geq 0 ~~ \forall s, s' \in S, a \in A, h \in \{ 0, 1, \dots, H-1 \},\\
    & q(s, a, s', h) - (\widehat{P}(s'|s, a) + \beta(s, a, s')) \sum_{y} q(s, a, y, h) \leq 0~~ \forall s, a, s', h, \\
    & -q(s, a, s', h) + (\widehat{P}(s'|s, a) - \beta(s, a, s')) \sum_{y} q(s, a, y, h) \leq 0 ~~ \forall s, a, s', h,
 \end{align*}
where $\beta(s, a, s')$ is the radius of the confidence interval around $\widehat{P}(s'|s, a)$ which depends on the algorithm. The last two conditions in the above formulation include the confidence interval around $\widehat{P}$ and distinguish ELP from generic LP formulation. At the end, ELP outputs the optimistic policy, $\tilde{\pi}$ for Optimistic-GMBL and $\tilde{\pi}_k$ for Online-CRL, using the solution of above LP. Also, we can calculate an optimistic transition kernel denoted by $\tilde{P}$ by means of optimal $q(s,a,s',h).$ In brief, the optimistic transition kernel and optimistic policy are computed as follows 
\begin{align*}
    \tilde{P}(s'|s, a) = \frac{q(s, a, s', h, s_0)}{\sum_{b} q(s, a, b, h, s_0)}, ~~~ \tilde{\pi}^*(s, a, h) = \frac{\sum_{s'} q(s, a, s', h, s_0)}{\sum_{b, s'} q(s, b, s', h, s_0)}.
\end{align*}

The details of ELP about the time and space complexity is briefed in \cite{efroni}, so we do not present them here.

\subsection{Detailed Proofs for Upper PAC Bounds in Offline Mode}

In this section, we assume that we have $n$ samples from each $(s, a)$ in every lemma presented.

\textbf{\emph{Proof of Lemma \ref{lem:M-in-M_delta}}:} Fix a state, action and next state, i.e. $s, a, s'.$ Then, according to Hoeffding's inequality \cite{hoeffding}
\begin{align*}
    \mathbb{P}(|P(s'|s, a) - \widehat{P}(s'|s, a)| \leq \sqrt{\frac{\log{4/\delta_P}}{2n}}) \geq 1 - \delta_P/2.
\end{align*}
Now, we apply empirical Bernstein's inequality \cite{empirical-bernstein} and get
\begin{align*}
    \mathbb{P}(|P(s'|s, a) - \widehat{P}(s'|s, a)| \leq \sqrt{\frac{2 \widehat{P}(s'|s, a) (1 - \widehat{P}(s'|s, a) )}{n} \log{\frac{4}{\delta_P}}} + \frac{2}{3n} \log{\frac{4}{\delta_P}}) \geq 1 - \delta_P/2.
\end{align*}

By combining these two inequalities and applying union bound, we get
\begin{align*}
    \mathbb{P}(|P(s'|s, a) - \widehat{P}(s'|s, a)| \leq \min \{ \sqrt{\frac{2 \widehat{P}(s'|s, a) (1 - \widehat{P}(s'|s, a) )}{n} \log{\frac{4}{\delta_P}}} + \frac{2}{3n} \log{\frac{4}{\delta_P}}, \sqrt{\frac{\log{4/\delta_P}}{2n}} \} ) \geq 1 - \delta_P.
\end{align*}
Finally, we get the result by applying union bound over all state, action and next states. \hfill  $\Box $

%\begin{lemma}
%\label{lem:unionbound}
%Consider the events $E_{1}, E_{2}, \ldots, E_{K}$ such that for each $k,$ $\mathbb{P}(E_{k}) \geq (1-\delta_k).$   Then, $ \mathbb{P} (\cap^{K}_{k=1} E_{k} ) \geq (1- \sum_{k=1}^K \delta_k) .$
%\end{lemma}
%\begin{proof}
%We have $\mathbb{P} (\cup^{K}_{k=1} E^{c}_{k}) \leq \sum^{K}_{k=1} \mathbb{P}(E^{c}_{k}) \leq \sum_{k=1}^K \delta_k$. Now, \\ $ \mathbb{P} (\cap^{K}_{k=1} E_{k} )  =  \mathbb{P}((\cup^{K}_{k=1} E^{c}_{k})^{c} ) =  1 - \mathbb{P} (\cup^{K}_{k=1} E^{c}_{k}) \geq 1 - \sum_{k=1}^K \delta_k.$  
%\end{proof}

\begin{lemma}
\label{lem:p-phat-ptilde}
Let $\delta_P \in (0, 1).$ Assume $p, \widehat{p}, \tilde{p} \in [0, 1]$ satisfy $\mathbb{P}(p \in \mathcal{P}_{\delta_P}) \geq 1 - \delta_P$ and $\tilde{p} \in \mathcal{P}_{\delta_P}$ where
\begin{align*}
    \mathcal{P}_{\delta_P} := \{ p' \in [0, 1] & : |p' - \widehat{p}| \leq \min \Bigl( \sqrt{\frac{2\widehat{p}(1-\widehat{p})}{n} \log{4/\delta_P}} + \frac{2}{3n} \log{4/\delta_P}, \sqrt{\frac{\log{4/\delta_P}}{2n}} \Bigr) \} .
\end{align*}
Then, 
\begin{align*}
    |p - \tilde{p}| \leq \sqrt{\frac{8 \tilde{p}(1 - \tilde{p})}{n} \log{4/\delta_P}} + 2\sqrt{2} \Bigl( \frac{\log{4/\delta_P}}{n} \Bigr)^{\frac{3}{4}} + 3\sqrt{2} \frac{\log{4/\delta_P}}{n}
\end{align*}
w.p. at least $1 - \delta_P.$
\end{lemma}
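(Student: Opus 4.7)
The plan is to bound $|p - \tilde p|$ by the triangle inequality $|p - \tilde p| \leq |p - \widehat p| + |\widehat p - \tilde p|$, and then convert the two centered terms (each expressed in terms of $\widehat p(1-\widehat p)$ via the empirical Bernstein side of the confidence set) into a single term expressed in terms of $\tilde p (1 - \tilde p)$. On the event $\{p \in \mathcal{P}_{\delta_P}\}$, which holds with probability at least $1 - \delta_P$ by assumption, and since $\tilde p \in \mathcal{P}_{\delta_P}$ deterministically, both $|p - \widehat p|$ and $|\tilde p - \widehat p|$ are bounded by $\sqrt{2\widehat p(1-\widehat p)\log(4/\delta_P)/n} + (2/3n)\log(4/\delta_P)$, giving an initial estimate
\[
|p - \tilde p| \leq 2\sqrt{\tfrac{2\widehat p(1-\widehat p)}{n}\log\tfrac{4}{\delta_P}} + \tfrac{4}{3n}\log\tfrac{4}{\delta_P}.
\]

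The next step rewrites $\widehat p(1-\widehat p)$ in terms of $\tilde p(1-\tilde p)$. Using the identity $\widehat p(1-\widehat p) - \tilde p(1-\tilde p) = (\widehat p - \tilde p)(1 - \widehat p - \tilde p)$ together with $|1 - \widehat p - \tilde p| \leq 1$ and the Hoeffding side of $\mathcal{P}_{\delta_P}$ (since $\tilde p \in \mathcal{P}_{\delta_P}$), I get $\widehat p(1-\widehat p) \leq \tilde p(1-\tilde p) + \sqrt{\log(4/\delta_P)/(2n)}$. Plugging this into the preceding display and using subadditivity of the square root, $\sqrt{a+b} \leq \sqrt a + \sqrt b$, splits the Bernstein term as
\[
2\sqrt{\tfrac{2\widehat p(1-\widehat p)}{n}\log\tfrac{4}{\delta_P}} \leq 2\sqrt{\tfrac{2\tilde p(1-\tilde p)}{n}\log\tfrac{4}{\delta_P}} + 2\sqrt{\tfrac{2\log(4/\delta_P)}{n}\sqrt{\tfrac{\log(4/\delta_P)}{2n}}}.
\]
A direct computation with $L := \log(4/\delta_P)$ turns the extra piece into $2 \cdot 2^{1/4} (L/n)^{3/4}$, and since $2^{5/4} \leq 2\sqrt 2$, the coefficient already matches the target. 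The first term is exactly $\sqrt{8\tilde p(1-\tilde p)\log(4/\delta_P)/n}$, matching the leading term in the claim.

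Finally, I collect the lower-order constants: the $2^{5/4}(L/n)^{3/4}$ term is absorbed into $2\sqrt 2 (L/n)^{3/4}$, and the $(4/3)(L/n)$ term is absorbed into $3\sqrt 2 (L/n)$ since $4/3 \leq 3\sqrt 2$. This yields the stated inequality on the event $\{p \in \mathcal{P}_{\delta_P}\}$, which has probability at least $1 - \delta_P$.

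The routine part is the triangle inequality and the Bernstein/Hoeffding bookkeeping; the only non-obvious trick is the step that converts the variance factor $\widehat p(1-\widehat p)$ to $\tilde p(1-\tilde p)$ at the cost of an additive $(L/n)^{3/4}$ penalty, so that the final bound depends on the \emph{optimistic} $\tilde p$ (which is what subsequent lemmas can control via the LP constraints) rather than on the purely empirical $\widehat p$. Getting the constants $2\sqrt 2$ and $3\sqrt 2$ exactly right is the only mild obstacle, and it reduces to checking $2^{5/4} \leq 2\sqrt 2$ and $4/3 \leq 3\sqrt 2$, both of which hold with room to spare.
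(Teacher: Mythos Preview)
Your proof is correct and follows essentially the same route as the paper: triangle inequality, Bernstein bound on both halves, then convert $\widehat p(1-\widehat p)$ to $\tilde p(1-\tilde p)$ via the Hoeffding radius and split with $\sqrt{a+b}\le\sqrt a+\sqrt b$. The only cosmetic difference is that you use the identity $\widehat p(1-\widehat p)-\tilde p(1-\tilde p)=(\widehat p-\tilde p)(1-\widehat p-\tilde p)$, which is slightly tighter (it drops an $h^2$ term) than the paper's direct expansion $(\tilde p+h)(1-\tilde p+h)=\tilde p(1-\tilde p)+h+h^2$, but both collapse to the same stated constants $2\sqrt 2$ and $3\sqrt 2$.
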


\begin{proof}
\begin{align*}
    |p - \tilde{p}| & \leq |p - \widehat{p}| + |\widehat{p} - \tilde{p}| \leq 2 \sqrt{\frac{2\widehat{p}(1 - \widehat{p})}{n} \log{4/\delta_P}} + \frac{4}{3n} \log{4/\delta_P}\\
    &\leq 2 \sqrt{\frac{2 \log{4/\delta_P}}{n} (\tilde{p} + \sqrt{\frac{\log{4/\delta_P}}{2n}})(1 - \tilde{p} + \sqrt{\frac{\log{4/\delta_P}}{2n}})} + \frac{4}{3n} \log{4/\delta_P}\\
    & = 2 \sqrt{\frac{2 \log{4/\delta_P}}{n} \Bigl( \tilde{p}(1 - \tilde{p}) + \sqrt{\frac{\log{4/\delta_P}}{2n}} + \frac{\log{4/\delta_P}}{2n} \Bigr)} + \frac{4}{3n} \log{4/\delta_P}\\
    & \leq \sqrt{\frac{8 \tilde{p}(1 - \tilde{p})}{n} \log{4/\delta_P}} + 2\sqrt{2} \Bigl( \frac{\log{4/\delta_P}}{n} \Bigr)^{\frac{3}{4}} + 3\sqrt{2} \frac{\log{4/\delta_P}}{n}.
\end{align*}
The first term in the first line is true w.p. at least $1 - \delta_P,$ hence the proof is complete.
\end{proof}

\begin{lemma}
\label{lem:v-v'_pi}
Suppose there are two CMDPs $M = \langle S, A, P, r, c, \bar{C}, s_0, H \rangle$ and $M' = \langle S, A, P', r, c, \bar{C}, s_0, H \rangle$ satisfying assumption \ref{a:cmdp}. Then, under any policy $\pi$
\begin{align*}
    V^{\pi}_0 - V'^{\pi}_0  = \sum_{h = 0}^{H - 2} P_{\pi}^{'h-1} (P_{\pi} - P'_{\pi}) V^{\pi}_{h+1} ~~\text{and} ~~ V^{\pi}_0 - V'^{\pi}_0  = \sum_{h = 0}^{H - 2} P_{\pi}^{h-1} (P_{\pi} - P'_{\pi}) V'^{\pi}_{h+1},
\end{align*}
and for any $i \in \{ 1, \dots, N \},$
\begin{align*}
    C^{\pi}_{i, 0} - C'^{\pi}_{i, 0}  = \sum_{h = 0}^{H - 2} P_{\pi}^{'h-1} (P_{\pi} - P'_{\pi}) C^{\pi}_{i, h+1} ~~ \text{and}~~ C^{\pi}_{i, 0} - C'^{\pi}_{i, 0}  = \sum_{h = 0}^{H - 2} P_{\pi}^{h-1} (P_{\pi} - P'_{\pi}) C'^{\pi}_{i, h+1}.
\end{align*}
\end{lemma}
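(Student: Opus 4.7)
The plan is to prove Lemma \ref{lem:v-v'_pi} by a standard simulation-style (value-difference) argument. The two CMDPs share rewards, costs, horizon, and initial state; only the transition kernels differ. Consequently the Bellman recursions for $V^{\pi}_h$ and $V'^{\pi}_h$ share the same per-step reward term $r_{\pi}(\cdot,h) := \sum_a \pi(s,a,h) r(s,a)$, namely
\begin{align*}
V^{\pi}_h(s) &= r_{\pi}(s,h) + P_{\pi} V^{\pi}_{h+1}(s), \\
V'^{\pi}_h(s) &= r_{\pi}(s,h) + P'_{\pi} V'^{\pi}_{h+1}(s),
\end{align*}
with the boundary $V^{\pi}_{H-1} = V'^{\pi}_{H-1} = r_{\pi}(\cdot,H-1)$ (equivalently, $V^{\pi}_H = V'^{\pi}_H = 0$). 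Subtracting and performing an ``add-and-subtract'' trick yields the one-step identity
\begin{align*}
V^{\pi}_h - V'^{\pi}_h = P_{\pi} V^{\pi}_{h+1} - P'_{\pi} V'^{\pi}_{h+1} = (P_{\pi}-P'_{\pi}) V^{\pi}_{h+1} + P'_{\pi}\bigl(V^{\pi}_{h+1} - V'^{\pi}_{h+1}\bigr).
\end{align*}

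Next I would unroll this recursion starting at $h=0$. After $H-1$ iterations, the residual telescoping term becomes a power of $P'_{\pi}$ acting on $V^{\pi}_{H-1} - V'^{\pi}_{H-1}$, which vanishes because both equal $r_{\pi}(\cdot,H-1)$. This produces
\begin{align*}
V^{\pi}_0 - V'^{\pi}_0 = \sum_{h=0}^{H-2} P_{\pi}^{'\,h-1} (P_{\pi} - P'_{\pi}) V^{\pi}_{h+1},
\end{align*}
using the paper's convention that $P_{\pi}^{-1}$ and $P_{\pi}^{0}$ are the identity operator, so that the iterated operator $P_{\pi}^{'\,h-1}$ counts the number of composed $P'_{\pi}$-steps correctly. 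To obtain the symmetric identity, I would instead split the one-step difference as
\begin{align*}
V^{\pi}_h - V'^{\pi}_h = (P_{\pi}-P'_{\pi}) V'^{\pi}_{h+1} + P_{\pi}\bigl(V^{\pi}_{h+1} - V'^{\pi}_{h+1}\bigr),
\end{align*}
and unroll in the same way, using powers of $P_{\pi}$ instead of $P'_{\pi}$.

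For the constraint identities, I would observe that the argument is entirely structural: it only uses (i) the identical per-step ``reward'' term (here the immediate cost $c(i,s,a)$, which is the same in both CMDPs), (ii) the Bellman recursion $C^{\pi}_{i,h} = c_{i,\pi}(\cdot,h) + P_{\pi} C^{\pi}_{i,h+1}$ (and analogously for $C'^{\pi}_{i,h}$), and (iii) the terminal equality of $C^{\pi}_{i,H-1}$ and $C'^{\pi}_{i,H-1}$. Replacing $r$ by $c(i,\cdot,\cdot)$ and $V$ by $C_i$ in the derivation above yields the two stated identities for each $i \in \{1,\dots,N\}$ with no modification.

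The proof has no real obstacle: this is a textbook telescoping argument and the only thing requiring care is index bookkeeping---specifically, making sure the $H-1$ terms in the telescoped sum are correctly matched with the paper's identity-operator convention at the boundary, and verifying that the residual term $P_{\pi}^{'\,H-1}(V^{\pi}_{H-1}-V'^{\pi}_{H-1})$ truly drops out because of the shared final-step reward. Everything else is an application of the one-step difference lemma iterated $H-1$ times.
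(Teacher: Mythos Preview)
Your proposal is correct and follows essentially the same approach as the paper's proof: write the Bellman recursion for each kernel, cancel the shared reward term, use the add--and--subtract trick to obtain the one-step identity $V^{\pi}_h - V'^{\pi}_h = (P_{\pi}-P'_{\pi})V^{\pi}_{h+1} + P'_{\pi}(V^{\pi}_{h+1}-V'^{\pi}_{h+1})$, and then unroll using the terminal equality $V^{\pi}_{H-1}=V'^{\pi}_{H-1}=r_{\pi}$. The paper only writes out the first identity explicitly and remarks that the symmetric and cost versions are identical, which is exactly what you do as well.
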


\begin{proof}
We only prove the first statement of value function since the proof procedure for cost is identical. For a fixed $h$ and $s$
\begin{align*}
    &V^{\pi}_h(s) - V'^{\pi}_h(s) = r_{\pi}(s) + \sum_{s'} P_{\pi}(s'| s) V^{\pi}_{h+1} (s') - (r_{\pi}(s) + \sum_{s'} P'_{\pi}(s'| s) V'^{\pi}_{h+1} (s'))\\
    & = \sum_{s'} P_{\pi}(s'| s) V^{\pi}_{h+1} (s') - \sum_{s'} P'_{\pi}(s'| s) V^{\pi}_{h+1} (s') + \sum_{s'} P'_{\pi}(s'| s) V^{\pi}_{h+1} (s') - \sum_{s'} P'_{\pi}(s'| s) V'^{\pi}_{h+1} (s')\\
    & = \sum_{s'} (P_{\pi}(s'|s) - P'_{\pi}(s'|s)) V^{\pi}_{h+1}(s') + \sum_{s'} P'_{\pi}(s'|s) (V^{\pi}_{h+1}(s') - V'^{\pi}_{h+1}(s')).
\end{align*}
Because $V^{\pi}_{H-1}(s) = V'^{\pi}_{H-1}(s) = r_{\pi}(s),$ if we expand the second term until $h = H-1,$ we get the result.
\end{proof}

\begin{lemma}
\label{lem:bound_on_pv}
Let $\delta_P \in (0, 1).$ Suppose there are two CMDPs $M = \langle S, A, P, r, c, \bar{C}, s_0, H \rangle $ and $M' = \langle S, A, P', r, c, \bar{C}, s_0, H \rangle$ satisfying assumption \ref{a:cmdp}. Further assume
\begin{align*}
    |P(s'|s, a) - P'(s'|s, a)| \leq c_1 + c_2 \sqrt{P'(s'|s, a) - (1 - P'(s'|s, a))}
\end{align*}
w.p. at least $1 - \delta_P$ for each $s, s' \in S, a \in A.$ Then, under any policy $\pi$
\begin{align*}
    |\sum_{s'} (P_{\pi}(s'|s) - P'_{\pi}(s'|s)) V'^{\pi}_{h+1}(s')| \leq |S| c_1 \norm{V'^{\pi}_{h+1}}_{\infty} + c_2 \sqrt{|S|} \sigma'^{\pi}_h(s)
\end{align*}
for any $(s, a) \in S \times A$ and $h \in [0, H-2]$ w.p. at least $1 - |S| \delta_P,$ and
\begin{align*}
    |\sum_{s'} (P_{\pi}(s'|s) - P'_{\pi}(s'|s)) C'^{\pi}_{i, h+1}(s')| \leq |S| c_1 \norm{C'^{\pi}_{i, h+1}}_{\infty} + c_2 \sqrt{|S|} \sigma'^{\pi}_{i, h}(s)
\end{align*}
for any $(s, a) \in S \times A, i \in \{ 1, \dots, N \}$ and $h \in [0, H-2]$ w.p. at least $1 - |S| \delta_P.$
\end{lemma}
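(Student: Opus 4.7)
The plan is to control $|\sum_{s'}(P_\pi(s'|s) - P'_\pi(s'|s)) V'^{\pi}_{h+1}(s')|$ via a standard variance-aware decomposition. First, the probability factor follows immediately from union-bounding the hypothesized concentration inequality on $|P(s'|s,a) - P'(s'|s,a)|$ over the $|S|$ values of $s'$, yielding a $1 - |S|\delta_P$ event on which the per-coordinate bound $|P(s'|s,a) - P'(s'|s,a)| \leq c_1 + c_2\sqrt{P'(s'|s,a)(1-P'(s'|s,a))}$ holds uniformly in $s'$. A one-line Jensen argument (the map $x \mapsto \sqrt{x(1-x)}$ is concave on $[0,1]$) transfers the same form of bound to the policy-averaged kernels $P_\pi$ and $P'_\pi$. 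Everything below is deterministic on this event.

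The key step is a centering trick. Because $\sum_{s'}(P_\pi - P'_\pi)(s'|s) = 0$, for any constant $\bar V$ I can rewrite
\[
\sum_{s'}(P_\pi - P'_\pi)(s'|s)\,V'^{\pi}_{h+1}(s') = \sum_{s'}(P_\pi - P'_\pi)(s'|s)\bigl(V'^{\pi}_{h+1}(s') - \bar V\bigr),
\]
and I would choose $\bar V := P'_\pi V'^{\pi}_{h+1}(s) = \sum_{s'} P'_\pi(s'|s) V'^{\pi}_{h+1}(s')$. Taking absolute values, applying the triangle inequality, and plugging in the per-coordinate bound then splits the estimate into a $c_1$-piece and a $c_2$-piece to be bounded separately.

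For the $c_1$-piece, $c_1 \sum_{s'} |V'^{\pi}_{h+1}(s') - \bar V|$: since both $V'^{\pi}_{h+1}(s')$ and $\bar V$ are nonnegative and bounded by $\norm{V'^{\pi}_{h+1}}_\infty$, we get the pointwise estimate $|V'^{\pi}_{h+1}(s') - \bar V| \leq \norm{V'^{\pi}_{h+1}}_\infty$, so this piece is at most $|S| c_1 \norm{V'^{\pi}_{h+1}}_\infty$ with no stray factor of $2$. For the $c_2$-piece, I would use $\sqrt{p(1-p)} \leq \sqrt{p}$ and then Cauchy--Schwarz:
\[
c_2 \sum_{s'} \sqrt{P'_\pi(s'|s)}\, |V'^{\pi}_{h+1}(s') - \bar V| \leq c_2 \sqrt{|S|}\sqrt{\sum_{s'} P'_\pi(s'|s)\bigl(V'^{\pi}_{h+1}(s') - \bar V\bigr)^2} = c_2 \sqrt{|S|}\, \sigma'^{\pi}_h(s),
\]
since the final square root is exactly $\sigma'^{\pi}_h(s)$ by the definition of local variance. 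Adding the two pieces yields the claimed bound.

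The main subtlety is the centering choice: centering at the $P'_\pi$-mean is precisely what simultaneously produces the exact local standard deviation $\sigma'^{\pi}_h(s)$ in the $c_2$-piece and keeps the $c_1$-piece at $|S| c_1 \norm{V'^{\pi}_{h+1}}_\infty$ rather than $2|S| c_1 \norm{V'^{\pi}_{h+1}}_\infty$; an uncentered split would lose this constant. The constraint version is proved line for line by substituting $C'^{\pi}_{i,h+1}$ for $V'^{\pi}_{h+1}$ and $\sigma'^{\pi}_{i,h}$ for $\sigma'^{\pi}_h$; the concentration event and union-bound count are unchanged because the underlying bound is about the transition kernels alone.
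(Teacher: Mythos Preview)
Your proposal is correct and follows essentially the same route as the paper: the same centering at $\bar V = P'_\pi V'^{\pi}_{h+1}(s)$, the same bound $|V'^{\pi}_{h+1}(s')-\bar V|\le \norm{V'^{\pi}_{h+1}}_\infty$ for the $c_1$-piece, and the same Cauchy--Schwarz reduction to $\sigma'^{\pi}_h(s)$ for the $c_2$-piece (you drop the factor $(1-P'_\pi)$ before Cauchy--Schwarz while the paper does it after, which is immaterial). Your explicit Jensen step transferring the per-action bound to $P_\pi,P'_\pi$ is a detail the paper leaves implicit; note, however, that using it for a genuinely stochastic $\pi$ requires the per-$(s,a,s')$ bound simultaneously for all $a$, which strictly speaking costs an additional $|A|$ in the union bound---the paper's phrasing ``for any $(s,a)$'' with failure probability $|S|\delta_P$ is really the deterministic-action reading, and the extra $|A|$ factor is absorbed later anyway.
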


\begin{proof}
We only prove the statement of value function since the proof procedure for cost is identical. Fix state $s$ and define for this fixed state $s$ the constant function $\bar{V}^{\pi}(s') = \sum_{s''} P'_{\pi}(s''|s) V'^{\pi}_{h+1}(s'')$ as the expected value function of the successor states of $s.$ Note that $\bar{V}^{\pi}(s')$ is a constant function and so $\bar{V}^{\pi}(s') = \sum_{s''} P'_{\pi}(s''|s) \bar{V}^{\pi}(s'') = \sum_{s''} P_{\pi}(s''|s) \bar{V}^{\pi}(s'').$
\begin{align}
    & |\sum_{s'} (P_{\pi}(s'|s) - P'_{\pi}(s'|s)) V'^{\pi}_{h+1}(s')| = |\sum_{s'} (P_{\pi}(s'|s) - P'_{\pi}(s'|s)) V'^{\pi}_{h+1}(s') + \bar{V}^{\pi}(s) - \bar{V}^{\pi}(s)| \nonumber \\
    & = |\sum_{s'} (P_{\pi}(s'|s) - P'_{\pi}(s'|s)) (V'^{\pi}_{h+1}(s') - \bar{V}^{\pi}(s'))| \nonumber \\
    & \leq \sum_{s'} |P_{\pi}(s'|s) - P'_{\pi}(s'|s)| |V^{\pi}_{h+1}(s') - \bar{V}^{\pi}(s')| \label{eq:inter1} \\
    & \leq \sum_{s'} (c_1 + c_2 \sqrt{P'_{\pi}(s'|s) - (1 - P'_{\pi}(s'|s))}) |V^{\pi}_{h+1}(s') - \bar{V}^{\pi}(s')| \nonumber \\
    & \leq |S| c_1 \norm{V'^{\pi}_{h+1}}_{\infty} + c_2 \sum_{s'} \sqrt{P'_{\pi}(s'|s) (1 - P'_{\pi}(s'|s)) (V^{\pi}_{h+1}(s') - \bar{V}^{\pi}(s'))^2} \nonumber \\
    & \leq |S| c_1 \norm{V'^{\pi}_{h+1}}_{\infty} + c_2 \sqrt{|S| \sum_{s'} P'_{\pi}(s'|s) (1 - P'_{\pi}(s'|s)) (V^{\pi}_{h+1}(s') - \bar{V}^{\pi}(s'))^2} \label{eq:inter2}\\
    & \leq |S| c_1 \norm{V'^{\pi}_{h+1}}_{\infty} + c_2 \sqrt{|S| \sum_{s'} P'_{\pi}(s'|s) (V^{\pi}_{h+1}(s') - \bar{V}^{\pi}(s'))^2} \nonumber\\
    & = |S| c_1 \norm{V'^{\pi}_{h+1}}_{\infty} + c_2 \sqrt{|S|} \sigma'^{\pi}_{h}. \nonumber
\end{align}
Inequality \eqref{eq:inter1} holds w.p. at least $1 - |S| \delta_P$, since we used the assumption and applied the triangle inequality and union bound. We then applied the assumed bound on $|V'^{\pi}_{h+1}(s') - \bar{V}^{\pi}(s')|$ and bounded it by $\norm{V'^{\pi}_{h+1}}_{\infty}$ as all value functions are non-negative. In inequality \eqref{eq:inter2}, we applied the Cauchy-Schwarz inequality and subsequently used the fact that each term is the sum is non-negative and that $(1 - P'_{\pi}(s'|s)) \leq 1.$ The final equality follows from the definition of $\sigma'^{\pi}_{h}(s).$
\end{proof}

\begin{lemma}
\label{lem:trivial_bound_vpi}
Let $\delta_P \in (0, 1).$ Suppose there are two CMDPs $M = \langle S, A, P, r, c, \bar{C}, s_0, H \rangle $ and $M' = \langle S, A, P', r, c, \bar{C}, s_0, H \rangle$ satisfying assumption \ref{a:cmdp}. Further assume
\begin{align*}
    |P(s'|s, a) - P'(s'|s, a)| \leq \frac{a}{\sqrt{n}}
\end{align*}
for all $s, s' \in S, a \in A$ w.p. at least $1 - \delta_P.$ Then, under any policy $\pi$
\begin{align*}
        \norm{V^{\pi}_{H-1} - V'^{\pi}_{H-1}}_{\infty} \leq \dots \leq \norm{V^{\pi}_0 - V'^{\pi}_0}_{\infty} \leq |S|H^2 a \frac{1}{\sqrt{n}},
\end{align*}
w.p. at least $1 - |S|^2|A|H\delta_P,$ and for any $i \in \{ 1, \dots, N \}$
\begin{align*}
    \norm{C^{\pi}_{i, H-1} - C'^{\pi}_{i, H-1}}_{\infty} \leq \dots \leq \norm{C^{\pi}_{i, 0} - C'^{\pi}_{i, 0}}_{\infty} \leq |S|H^2 a \frac{1}{\sqrt{n}}
\end{align*}
w.p. at least $1 - |S|^2|A|H\delta_P.$
\end{lemma}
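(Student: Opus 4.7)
The plan is to apply the telescoping identity from Lemma~\ref{lem:v-v'_pi} in the form $V^\pi_0 - V'^\pi_0 = \sum_{h=0}^{H-2} P_\pi^{'h-1}(P_\pi - P'_\pi)V^\pi_{h+1}$, bound each summand by $|S|H \cdot a/\sqrt{n}$, and sum the $H-1$ terms. The monotone chain in the statement will follow by running exactly the same argument from any starting time step $t$ instead of $0$, noting that the number of nonzero summands in the telescoping expansion is $H-1-t$ and hence shrinks as $t$ grows.

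First I would lift the pointwise assumption to the policy-induced kernel. Since $P_\pi(s'|s) = \sum_a \pi(s,a,h) P(s'|s,a)$ and $\pi(s,\cdot,h)$ is a probability vector, the triangle inequality gives $|P_\pi(s'|s) - P'_\pi(s'|s)| \le a/\sqrt{n}$ simultaneously for all $(s,s')$ (and all time steps $h$) on the good event of the hypothesis. A union bound over the $|S|^2|A|$ triples, and over the $H$ time indices at which the policy-weighted kernel is evaluated, absorbs the stated $|S|^2|A|H\delta_P$ failure probability.

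Next, for any fixed state $s$, I would write
\begin{align*}
    |(P_\pi - P'_\pi) V^\pi_{h+1}(s)|
    &\le \sum_{s'} |P_\pi(s'|s) - P'_\pi(s'|s)| \cdot |V^\pi_{h+1}(s')| \\
    &\le |S| \cdot \frac{a}{\sqrt n} \cdot \|V^\pi_{h+1}\|_\infty
    \le |S| \cdot \frac{a}{\sqrt n} \cdot H,
\end{align*}
using $\|V^\pi_{h+1}\|_\infty \le H$ since the per-step reward lies in $[0,1]$ by Assumption~\ref{a:cmdp} and at most $H$ steps remain. Applying the stochastic operator $P_\pi^{'h-1}$ is a nonexpansion in $\|\cdot\|_\infty$, so each of the $H-1$ terms in the telescoping sum has $\|\cdot\|_\infty$ at most $|S|Ha/\sqrt{n}$. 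Summing yields $\|V^\pi_0 - V'^\pi_0\|_\infty \le |S|(H-1)H \cdot a/\sqrt n \le |S|H^2 a/\sqrt n$. The identical argument, using the second half of Lemma~\ref{lem:v-v'_pi} and the bound $\|C^\pi_{i,h+1}\|_\infty \le H$ (immediate costs in $[0,1]$), handles each constraint function $i \in \{1,\dots,N\}$.

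For the chain of inequalities $\|V^\pi_{H-1}-V'^\pi_{H-1}\|_\infty \le \cdots \le \|V^\pi_0-V'^\pi_0\|_\infty$, I would repeat the telescoping argument starting at time step $t$: writing $V^\pi_t - V'^\pi_t = \sum_{h=t}^{H-2} P_\pi^{'h-1-t}(P_\pi - P'_\pi) V^\pi_{h+1}$, the same per-term bound gives $\|V^\pi_t - V'^\pi_t\|_\infty \le |S|(H-1-t)H \cdot a/\sqrt n$, which is nonincreasing in $t$, establishing the monotonicity. I do not anticipate a real obstacle here; the only subtlety is bookkeeping the union bound carefully so the $|S|^2|A|H$ factor in the failure probability correctly absorbs both the $(s,s',a)$ union and the union across the $H$ time steps at which $\pi$ is consulted when forming $P_\pi$ and $P'_\pi$.
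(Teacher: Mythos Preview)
Your proposal is correct and follows essentially the same approach as the paper: the paper expands $V^\pi_h - V'^\pi_h$ via the one-step Bellman identity to get the recursion $\Delta_h \le |S|H a/\sqrt{n} + \Delta_{h+1}$ and unrolls it, whereas you invoke the equivalent telescoping identity of Lemma~\ref{lem:v-v'_pi} and sum the $H-1$ terms; the per-term bound $|S|Ha/\sqrt{n}$ and the union-bound bookkeeping are identical in both.
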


\begin{proof}
We prove the statement of value function since the proof procedure for cost is identical. Let $\Delta_h = \max_s |V^{\pi}_h(s) - V'^{\pi}_h(s)|.$ Then
\begin{align*}
    & \Delta_h = |V^{\pi}_h(s) - V'^{\pi}_h(s)| = |r_{\pi}(s) + \sum_{s'} P_{\pi}(s'|s) V^{\pi}_{h+1}(s') - (r_{\pi}(s) + \sum_{s'} P'_{\pi}(s'|s) V'^{\pi}_{h+1}(s'))| \\
    & = |\sum_{s'} P_{\pi}(s'|s) V^{\pi}_{h+1}(s') - \sum_{s'} P'_{\pi}(s'|s) V^{\pi}_{h+1}(s') + \sum_{s'} P'_{\pi}(s'|s) V^{\pi}_{h+1}(s') - \sum_{s'} P'_{\pi}(s'|s) V'^{\pi}_{h+1}(s')|\\
    & \leq \sum_{s'} |(P_{\pi}(s'|s) - P'_{\pi}(s'|s)| H + \Delta_{h+1}\\
    & \leq |S| H a \frac{1}{\sqrt{n}} + \Delta_{h+1}.
\end{align*}
Thus,
\begin{align*}
    \Delta_{h} \leq |S| H a \frac{1}{\sqrt{n}} + \Delta_{h+1}
\end{align*}
w.p. at least $1 - |S|^2|A| \delta_P$ by applying union bound over all current state, action and next state. If we expand this recursively, we get
\begin{align*}
    \Delta_{H-1} = 0 \leq \dots \leq \Delta_0 \leq |S|H^2 a \frac{1}{\sqrt{n}}
\end{align*}
since $\Delta_{H-1} = \max_{s}|r_{\pi}(s) - r_{\pi}(s)| = 0.$ By taking union bound over time-steps, we get the result holds w.p. at least $1 - |S|^2|A|H\delta_P.$ Hence the proof is complete.
\end{proof}

\begin{lemma}
\label{lem:sigma-sigma'}
Let $\delta_P \in (0, 1).$ Suppose there are two CMDPs $M = \langle S, A, P, r, c, \bar{C}, s_0, H \rangle$ and $M' = \langle S, A, P', r, c, \bar{C}, s_0, H \rangle$ satisfying assumption \ref{a:cmdp}. Further assume
\begin{align*}
    |P(s'|s, a) - P'(s'|s, a)| \leq \frac{a}{\sqrt{n}}
\end{align*}
w.p. at least $1 - \delta_P$ for all $s, s' \in S, a \in A.$ Then if $n \geq a|S|H^2,$ at any time-step $h \in [0, H-1]$ and under any policy $\pi$
\begin{align*}
    \norm{\sigma^{\pi}_h - \sigma'^{\pi}_h}_{\infty} \leq  \frac{2\sqrt{2|S|H^2 a}}{n^{1/4}},
\end{align*}
w.p. at least $1 - 2|S|^2|A|H\delta_P,$ and similarly for any $i \in \{ 1, \dots, N \}$
\begin{align*}
    \norm{\sigma^{\pi}_{i, h} - \sigma'^{\pi}_{i, h}}_{\infty} \leq \frac{2\sqrt{2|S|H^2 a}}{n^{1/4}}
\end{align*}
w.p. at least $1 - 2|S|^2|A|H\delta_P.$
\end{lemma}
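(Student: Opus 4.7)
The plan is to reduce the difference of local standard deviations to a difference of local variances via the elementary inequality $|\sqrt{x} - \sqrt{y}| \le \sqrt{|x - y|}$ for $x,y \ge 0$, and then control the variance difference using the bias-variance identity
$$\sigma_h^{\pi\, 2}(s) \;=\; P_\pi (V_{h+1}^\pi)^2(s) \;-\; (P_\pi V_{h+1}^\pi(s))^2.$$
It suffices to prove the value-function statement; the constraint-function statement follows by substituting $V^\pi \to C^\pi_i$ throughout, leaving the structure of the argument unchanged.

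First I would write
$$\sigma_h^{\pi\, 2}(s) - \sigma_h^{'\pi\, 2}(s) \;=\; \bigl[P_\pi (V_{h+1}^\pi)^2(s) - P'_\pi (V_{h+1}^{'\pi})^2(s)\bigr] \;-\; \bigl[(P_\pi V_{h+1}^\pi(s))^2 - (P'_\pi V_{h+1}^{'\pi}(s))^2\bigr]$$
and bound the two brackets separately. For the first bracket I add and subtract $P_\pi (V_{h+1}^{'\pi})^2(s)$ and use the factorization $(V^\pi)^2 - (V^{'\pi})^2 = (V^\pi - V^{'\pi})(V^\pi + V^{'\pi})$; for the second bracket I apply $a^2 - b^2 = (a-b)(a+b)$ directly to $a = P_\pi V_{h+1}^\pi(s)$ and $b = P'_\pi V_{h+1}^{'\pi}(s)$, then further split $a-b$ by adding and subtracting $P_\pi V_{h+1}^{'\pi}(s)$. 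Each resulting factor is controlled by two inputs: the pointwise kernel closeness $|P_\pi f(s) - P'_\pi f(s)| \le |S| (a/\sqrt{n}) \|f\|_\infty$, obtained by summing the hypothesis across successor states with the triangle inequality, and the uniform value-function closeness $\|V_{h+1}^\pi - V_{h+1}^{'\pi}\|_\infty \le |S| H^2 a/\sqrt{n}$ granted by Lemma \ref{lem:trivial_bound_vpi}. Using $\|V^\pi_{h+1}\|_\infty, \|V^{'\pi}_{h+1}\|_\infty \le H$ to bound the $V^\pi + V^{'\pi}$ factors, these combine into a pointwise bound of order $|S| H^2 a/\sqrt{n}$ on the variance difference, where the hypothesis $n \ge a|S|H^2$ is invoked to absorb secondary cross-terms that contain two factors of $a/\sqrt{n}$. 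Taking square roots then yields the claimed $2\sqrt{2|S|H^2 a}/n^{1/4}$.

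For the probability bookkeeping, requiring the pointwise kernel closeness to hold uniformly in $(s,s',a)$ costs a union bound of $|S|^2|A|\delta_P$, while invoking Lemma \ref{lem:trivial_bound_vpi} costs a further $|S|^2|A|H\delta_P$; combining these by a union bound delivers the stated $2|S|^2|A|H\delta_P$ failure budget, and the same accounting works for each constraint function.

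The main obstacle I expect is the constant tracking through the many cross-terms: because the final bound has only $H^2$ inside the square root rather than a naive $H^3$, one must be careful to peel off the leading linear-in-$(a/\sqrt{n})$ pieces and use the assumption $n \ge a|S|H^2$ to dominate quadratic-in-$(a/\sqrt{n})$ contributions arising from $\|V^\pi - V^{'\pi}\|_\infty$ appearing inside $(V^\pi + V^{'\pi})$ and $(a+b)$ factors. The conceptual ingredients are light—just the variance identity, two factorizations, and the square-root inequality—but the bookkeeping of which terms are leading and which are absorbed is where the bulk of the work lies.
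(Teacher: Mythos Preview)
Your plan has a genuine gap: the decomposition you propose produces a leading term of order $|S|H^3 a/\sqrt{n}$ on the variance difference, not $|S|H^2 a/\sqrt{n}$, so after taking square roots you are off by a factor of $\sqrt{H}$ from the claim.

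Concretely, in your first bracket the piece
\[
P_\pi\bigl[(V^\pi_{h+1}-V'^{\pi}_{h+1})(V^\pi_{h+1}+V'^{\pi}_{h+1})\bigr](s)
\]
is bounded by $\|V^\pi_{h+1}-V'^{\pi}_{h+1}\|_\infty\cdot 2H$, and Lemma~\ref{lem:trivial_bound_vpi} gives $\|V^\pi_{h+1}-V'^{\pi}_{h+1}\|_\infty\le |S|H^2 a/\sqrt{n}$, so this piece alone is $2|S|H^3 a/\sqrt{n}$. The analogous piece in the second bracket behaves the same way. These are \emph{linear} in $a/\sqrt{n}$, not quadratic: you have misdiagnosed the obstruction when you say the excess $H$ arises from ``$\|V^\pi-V'^\pi\|_\infty$ appearing inside $(V^\pi+V'^\pi)$ and $(a+b)$ factors'' and can be absorbed by $n\ge a|S|H^2$. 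That assumption can only kill terms with an extra $a/\sqrt{n}$; it cannot downgrade $|S|H^3 a/\sqrt{n}$ to $|S|H^2 a/\sqrt{n}$.

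The paper's proof sidesteps this by a different intermediate quantity. It adds and subtracts $\mathrm{Var}_{P'_\pi}(V^\pi_{h+1})$ (same value function, switched measure). The ``switched measure'' difference $\mathrm{Var}_{P_\pi}(V^\pi_{h+1})-\mathrm{Var}_{P'_\pi}(V^\pi_{h+1})$ only involves $(P_\pi-P'_\pi)$ acting on $(V^\pi_{h+1})^2$ and on $V^\pi_{h+1}$, which yields $O(|S|H^2 a/\sqrt{n})$ with no reference to $\|V^\pi-V'^\pi\|_\infty$. For the ``switched value function'' piece the paper works at the standard-deviation level via the Minkowski-type inequality
\[
\sqrt{\mathrm{Var}_{P'_\pi}(V^\pi_{h+1})}\ \le\ \sqrt{\mathrm{Var}_{P'_\pi}(V^\pi_{h+1}-V'^{\pi}_{h+1})}+\sigma'^{\pi}_h(s)\ \le\ \|V^\pi_{h+1}-V'^{\pi}_{h+1}\|_\infty+\sigma'^{\pi}_h(s),
\]
so $\|V^\pi-V'^\pi\|_\infty$ enters additively \emph{without} being multiplied by $2H$. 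Combining gives $\sigma^{\pi\,2}_h\le[\sigma'^{\pi}_h+|S|H^2 a/\sqrt{n}]^2+O(|S|H^2 a/\sqrt{n})$, and only then does one pass back to square roots. This Minkowski step on standard deviations is the missing idea in your proposal; without it (or an equivalent device) the $\sqrt{|x|-|y|}\le\sqrt{|x-y|}$ route cannot deliver the stated $H^2$.
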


\begin{proof}
We prove the statement of value function since the proof procedure for cost is identical. Fix a state $s.$ Then,
\begin{align*}
    \sigma_h^{\pi^2}(s) & = \sigma_h^{\pi^2} (s) - \mathbb{E}'[(V^{\pi}_{h+1}(s_{h+1}) - P'_{\pi}V^{\pi}_{h+1}(s) )^2] + \mathbb{E}'[(V^{\pi}_{h+1}(s_{h+1}) - P'_{\pi}V^{\pi}_{h+1}(s) )^2]\\
    & \leq \sum_{s'} (P_{\pi}(s'|s) - P'_{\pi}(s'|s)) V^{\pi^2}_{h+1}(s') - [(\sum_{s'} P_{\pi}(s'|s) V^{\pi}_{h+1}(s'))^2 - (\sum_{s'} P'_{\pi} (s'|s) V^{\pi}_{h+1}(s'))^2]\\
    & + [\sqrt{ \mathbb{E}' [(V^{\pi}_{h+1}(s_{h+1}) - V'^{\pi}_1(s_1) - P'_{\pi}(V^{\pi}_{h+1} - V'^{\pi}_{h+1})(s) )^2]} + \sqrt{\mathbb{E}'[(V'^{\pi}_{h+1}(s_{h+1}) - P'_{\pi}(V'^{\pi}_{h+1})(s))^2}]^2,
\end{align*}
where we applied triangular inequality in the last line. And, please note that $\mathbb{E}'$ means expectation w.r.t. transition kernel $P'_{\pi}.$ It is straightforward to show that $Var_{s' \sim P'_{\pi}(\cdot|s)}(V^{\pi}_h(s') - V'^{\pi}_h(s')) \leq \norm{V^{\pi}_h - V'^{\pi}_h}_{\infty}^2$ implying
\begin{align*}
    \sigma_h^{\pi^2}(s) & \leq \sum_{s'} (P_{\pi}(s'|s) - P'_{\pi}(s'|s)) V^{\pi^2}_{h+1}(s')\\
    & - [\sum_{s'} (P_{\pi}(s'|s) - P'_{\pi}(s'|s)) V^{\pi}_{h+1}(s')] [\sum_{s'} (P_{\pi}(s'|s) + P'_{\pi}(s'|s)) V^{\pi}_{h+1}(s')]\\
    & + (\norm{V^{\pi}_h - V'^{\pi}_h}_{\infty} + \sigma'^{\pi}_h(s))^2
\end{align*}
w.p. at least $1 - |S|\delta_P$ Now, if we use Lemma \ref{lem:trivial_bound_vpi}, we get
\begin{align*}
    \sigma_h^{\pi^2}(s) &\leq [\sigma_h^{'\pi}(s) + \frac{|S|H^2 a}{\sqrt{n}}]^2 + \frac{2|S|aH^2}{\sqrt{n}} \leq [\sigma_h^{'\pi}(s) + \frac{|S|H^2 a}{\sqrt{n}} + \frac{\sqrt{2|S|H^2a}}{n^{1/4}}]^2\\
    & \leq  [\sigma_h^{'\pi}(s) + \frac{2\sqrt{2|S|H^2 a}}{n^{1/4}}]^2,
\end{align*}
w.p. at least $1 - |S|^2|A| H \delta_P.$\footnote{Please note that when the assumption on transition kernel holds, then $\sum_{s'} (P_{\pi}(s'|s) - P'_{\pi}(s'|s)) V^{\pi^2}_{h+1}(s')$ and $\norm{V^{\pi}_h - V'^{\pi}_h}_{\infty}$ are dependent. And, we can consider the one with lower probability.}  In the last line, we used the fact that for any $x, y > 0$ we have $x^2 + y^2 \leq (x + y)^2.$ And, the assumption on $n,$ dominates the term with $\frac{1}{n^{1/4}}$ over $\sqrt{n}.$ Eventually, the result follows by taking square root from both sides and union bound on both directions, i.e. $\sigma'^{\pi}_h(s) \leq \sigma^{\pi}_h(s) + \frac{2\sqrt{2|S|H^2a}}{n^{1/4}}.$ \footnote{Here, we also know that the high probability bound on $|\sigma^{\pi}_h(s) - \sigma'^{\pi}_h(s)|$ is dependent over all $(s, a).$}
\end{proof}

\begin{lemma} \cite{ucfh}
\label{lem:variance-bound}
The variance of the value function defined as $\Sigma_t^{\pi}(s)= \mathbb{E} [(\sum_{h = t}^{H-1} r(s_h) - V^{\pi}_0(s) )^2]$ satisfies a Bellman equation $\Sigma_t^{\pi}(s) = \sigma_{t}^{\pi^2}(s) + \sum_{s' \in S} P_{\pi}(s'|s) V^{\pi}_{t+1} (s')$ which gives $\Sigma_t^{\pi}(s) = \sum_{h = t}^{H} (P^{h-1}_{\pi} \sigma_h^{\pi^2}) (s).$ Since $0 \leq \Sigma_0^{\pi}(s) \leq H^2,$ it follows that $0 \leq \sum_{h = 0}^{H-1} (P^{h-1}_{\pi} \sigma_h^{\pi^2}) (s) \leq H^2$ for all $s \in S.$
\end{lemma}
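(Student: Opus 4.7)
The plan is to establish the Bellman recursion via the law of total variance and then unroll it, before reading off the final bound from the deterministic range $[0,H]$ of the cumulative reward. I will treat the evident typo in the stated recursion by interpreting the second summand as $\sum_{s'} P_\pi(s'|s)\,\Sigma^\pi_{t+1}(s')$ rather than $V^\pi_{t+1}(s')$, since otherwise the units and the subsequent closed form do not match.

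First I would set $X_t = \sum_{h=t}^{H-1} r(s_h,a_h)$ with $s_t = s$ and $a_h \sim \pi(s_h,\cdot,h)$, so that $V^\pi_t(s) = \mathbb{E}[X_t \mid s_t = s]$ and $\Sigma^\pi_t(s) = \mathrm{Var}(X_t \mid s_t = s)$ (interpreting the squared term in the definition relative to $V^\pi_t$, which is the only choice that makes $\Sigma^\pi_t$ a true variance). The recursion $X_t = r(s_t,a_t) + X_{t+1}$, together with the Markov property that makes $X_{t+1}$ conditionally independent of $s_t$ given $s_{t+1}$, is what I would feed into the law of total variance. Treating $r_\pi(s) := \mathbb{E}_{a\sim\pi(s,\cdot,t)}[r(s,a)]$ as the relevant constant after conditioning on $s_t = s$, I would write
\begin{align*}
\Sigma^\pi_t(s) &= \mathrm{Var}\bigl(X_{t+1} \mid s_t = s\bigr) \\
&= \mathbb{E}\bigl[\mathrm{Var}(X_{t+1}\mid s_{t+1}) \mid s_t = s\bigr] + \mathrm{Var}\bigl(\mathbb{E}[X_{t+1}\mid s_{t+1}] \mid s_t = s\bigr).
\end{align*}
The first term equals $\sum_{s'} P_\pi(s'|s)\,\Sigma^\pi_{t+1}(s')$ by definition, and the second term equals $\mathrm{Var}(V^\pi_{t+1}(s_{t+1}) \mid s_t = s)$, which is exactly $\sigma_t^{\pi^2}(s)$ as defined in equation \eqref{eq:v-local-variance}. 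That is the Bellman equation.

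Next I would unroll the recursion. Using the operator notation $P_\pi$ and the boundary condition $\Sigma^\pi_H(s) = 0$, one telescoping step gives $\Sigma^\pi_t = \sigma^{\pi^2}_t + P_\pi \Sigma^\pi_{t+1}$, and iterating yields $\Sigma^\pi_t(s) = \sum_{h=t}^{H-1} (P^{h-t}_\pi \sigma^{\pi^2}_h)(s)$, which is the asserted closed form (up to the author's indexing convention in which $P^{-1}_\pi$ is taken to be the identity). For the final inequality I would observe that $X_0 \in [0,H]$ almost surely since each $r(s_h,a_h) \in [0,1]$, hence by the Popoviciu / range bound $\Sigma^\pi_0(s) \leq H^2$, and trivially $\Sigma^\pi_0(s) \geq 0$ as a variance. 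Plugging the closed form with $t=0$ into this bound gives the claimed sandwich $0 \leq \sum_{h=0}^{H-1}(P^{h-1}_\pi \sigma^{\pi^2}_h)(s) \leq H^2$.

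There is essentially no hard step here; the only subtleties are notational. The main things to be careful about are (i) justifying the conditional independence used in the law of total variance, which requires invoking the Markov property of $P_\pi$ once a stochastic policy is fixed, and (ii) matching the paper's indexing for the operator powers and the terminal-time condition $\Sigma^\pi_H \equiv 0$ so that the telescoped sum lines up with the stated expression. Once those are pinned down the three steps (Bellman via total variance, telescoping, range bound) assemble immediately.
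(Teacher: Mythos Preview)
The paper does not give its own proof of this lemma; it simply imports the result from \cite{ucfh}. Your argument---law of total variance to obtain the Bellman recursion $\Sigma^\pi_t = \sigma^{\pi^2}_t + P_\pi \Sigma^\pi_{t+1}$, then telescoping with the terminal condition $\Sigma^\pi_H\equiv 0$, then bounding $\Sigma^\pi_0$ by the squared range of $X_0\in[0,H]$---is exactly the standard proof from that reference, and your identification of the typos ($V^\pi_{t+1}$ should be $\Sigma^\pi_{t+1}$, and $V^\pi_0$ in the definition should be $V^\pi_t$) is correct.

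One small point of care: when you write $\Sigma^\pi_t(s)=\mathrm{Var}(X_{t+1}\mid s_t=s)$ you are silently dropping the contribution of the immediate reward $r(s_t,a_t)$ to the variance. If the policy is stochastic this term is random given $s_t=s$, so in full generality $\mathrm{Var}(X_t\mid s_t=s)=\mathrm{Var}(r(s,a_t)+X_{t+1}\mid s_t=s)$ contains cross-terms and a $\mathrm{Var}(r(s,a_t))$ piece. The lemma as stated (and as used downstream in the paper) is really about the Markov chain induced by $\pi$ with the marginalized reward $r_\pi(s)$ treated as deterministic---consistent with the paper writing $r(s_h)$ rather than $r(s_h,a_h)$ in the lemma statement, and with the local variance \eqref{eq:v-local-variance} being defined purely through $s_{h+1}$. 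Your ``treating $r_\pi(s)$ as the relevant constant'' is the right move here, but it would be worth making explicit that this is where the action randomness is being absorbed.
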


\begin{corollary}
The result of Lemma \ref{lem:variance-bound} also holds for variance of cost functions.
\label{cor:variance-bound}
\end{corollary}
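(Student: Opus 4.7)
The plan is to observe that the proof of Lemma \ref{lem:variance-bound} is entirely structural: it uses only (i) that the per-step quantity being summed lies in $[0,1]$, so the cumulative sum lies in $[0,H]$ and its second moment is bounded by $H^2$, and (ii) the tower/Bellman decomposition of the variance of a cumulative reward along a Markov chain with transition kernel $P_{\pi}$. Neither ingredient is specific to the reward $r$; both hold verbatim when $r$ is replaced by $c(i,\cdot,\cdot)$ for any fixed $i\in\{1,\dots,N\}$, because Assumption \ref{a:cmdp} guarantees $c(i,s,a)\in[0,1]$.

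Concretely, I would introduce, for each $i\in\{1,\dots,N\}$, the quantity
\[
\Sigma_{i,t}^{\pi}(s) \;=\; \mathbb{E}\!\left[\Bigl(\sum_{h=t}^{H-1} c(i,s_h,a_h) - C_{i,t}^{\pi}(s)\Bigr)^{2}\,\Big|\, s_t=s\right],
\]
mirroring the definition of $\Sigma_t^{\pi}$ in the lemma. Then I would derive the Bellman relation for $\Sigma_{i,t}^{\pi}$ in exactly the same way as in the lemma: condition on the next state, add and subtract $P_{\pi}C_{i,t+1}^{\pi}(s)$ inside the square, expand, and use that the cross term vanishes by the law of total expectation. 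This gives
\[
\Sigma_{i,t}^{\pi}(s) \;=\; \sigma_{i,t}^{\pi^{2}}(s) \;+\; \sum_{s'\in S} P_{\pi}(s'\mid s)\,\Sigma_{i,t+1}^{\pi}(s'),
\]
and unrolling the recursion (with the boundary case $\Sigma_{i,H}^{\pi}\equiv 0$) yields $\Sigma_{i,0}^{\pi}(s)=\sum_{h=0}^{H-1}(P_{\pi}^{h-1}\sigma_{i,h}^{\pi^{2}})(s)$.

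To finish, I would note that since $c(i,s_h,a_h)\in[0,1]$ for every $h$, the cumulative cost $\sum_{h=0}^{H-1} c(i,s_h,a_h)$ lies in $[0,H]$, so its variance satisfies $0\le \Sigma_{i,0}^{\pi}(s)\le H^{2}$. Combining with the Bellman unrolling gives $0\le \sum_{h=0}^{H-1}(P_{\pi}^{h-1}\sigma_{i,h}^{\pi^{2}})(s)\le H^{2}$ for every $s\in S$ and every $i$, which is the claimed extension.

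There is no real obstacle: the derivation is a symbol-for-symbol transcription of the value-function proof with $r\leftrightarrow c(i,\cdot,\cdot)$, $V^{\pi}\leftrightarrow C_i^{\pi}$, and $\sigma^{\pi}\leftrightarrow \sigma_i^{\pi}$. The only care needed is to cite Assumption \ref{a:cmdp} explicitly to justify both the $[0,1]$ bound on the per-step cost and hence the $H^2$ bound on the total variance.
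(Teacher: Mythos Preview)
Your proposal is correct and matches the paper's approach: the corollary is stated without proof in the paper, relying precisely on the symbol-for-symbol transcription you describe (replacing $r$ by $c(i,\cdot,\cdot)$, $V^{\pi}$ by $C_i^{\pi}$, and $\sigma^{\pi}$ by $\sigma_i^{\pi}$, with the $[0,1]$ bound on costs from Assumption~\ref{a:cmdp}).
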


\textbf{\emph{Proof of Lemma \ref{lem:v-v'}:}} We only prove the statement of value function since the proof procedure for cost is identical. First, we apply Lemma \ref{lem:p-phat-ptilde} and get
\begin{align*}
    |P(s'|s, a) - \tilde{P}(s'|s, a)| \leq \sqrt{\frac{8 \tilde{P(s'|s, a)}(1 - \tilde{P}(s'|s, a))}{n} \log{4/\delta_P}} + 2\sqrt{2} \Bigl( \frac{\log{4/\delta_P}}{n} \Bigr)^{\frac{3}{4}} + 3\sqrt{2} \frac{\log{4/\delta_P}}{n}
\end{align*}
w.p. at least $1 - \delta_P.$ So, let
\begin{align}
\label{eq:c1-c2}
    c_1 = 2\sqrt{2} \Bigl( \frac{\log{4/\delta_P}}{n} \Bigr)^{\frac{3}{4}} + \frac{3\sqrt{2} \log{4/\delta_P}}{n} ~~ \text{and} ~~ c_2 = \sqrt{\frac{8\log{4/\delta_P}}{n}}
\end{align}

Now, let fix state $s:$
\begin{align}
    & |V^{\pi}_0(s) - \tilde{V}^{\pi}_0(s)| = |\sum_{h = 0}^{H-2} \tilde{P}^{h-1}_{\pi} (P_{\pi} - \tilde{P}_{\pi}) V^{\pi}_{h+1} |(s) \label{eq:l1}\\
    & \leq \sum_{h = 0}^{H-2} \tilde{P}^{h-1}_{\pi} |(P_{\pi} - \tilde{P}_{\pi}) V^{\pi}_{h+1}|(s) \leq \sum_{h = 0}^{H-2} \tilde{P}^{h-1}_{\pi} (|S| c_1 \norm{V^{\pi}_{h+1}}_{\infty} + c_2 \sqrt{|S|} \sigma^{\pi}_h)(s) \label{eq:l2}\\
    & \leq |S| H^2 c_1 + c_2 \sqrt{|S|} \sum_{h = 0}^{H-1} (\tilde{P}^{h-1}_{\pi} \sigma^{\pi}_h)(s) \label{eq:l3}\\
    & \leq |S|H^2 c_1 + c_2 \sqrt{|S|} \sum_{h=0}^{H-1} (\tilde{P}^{h-1} (\tilde{\sigma}_h^{\pi} + \frac{2^{1.25} |S|^{0.5} H (\log{4/\delta_P})^{0.25} }{n^{1/4}})(s) \label{eq:l4}\\
    & \leq |S|H^2 c_1 + c_2 \sqrt{|S|H} \sqrt{\sum_{h = 0}^{H-1} (\tilde{P}^{h-1} \tilde{\sigma}^{\pi^2}_h) (s) } + c_2 H \sqrt{|S|} \frac{2^{1.25} |S|^{0.5} H (\log{4/\delta_P})^{0.25}}{n^{1/4}} \label{eq:l5}\\
    & = \frac{3\sqrt{2} |S| H^2 \log{4/\delta_P}}{n} + \frac{2\sqrt{2} |S| H^2 (\log{4/\delta_P})^{\frac{3}{4}}}{n^{\frac{3}{4}}} + \sqrt{\frac{8 |S| H^3 \log{4/\delta_P}}{n}} + \frac{2^{2.75} |S| H^2 (\log{4/\delta_P})^{\frac{3}{4}}}{n^{\frac{3}{4}}} \label{eq:l6}\\
    & \leq \sqrt{32\frac{|S| H^3}{n}}. \label{eq:l7}
\end{align}
In equation \eqref{eq:l1}, we used Lemma \ref{lem:v-v'_pi}. Then, we applied Lemma \ref{lem:bound_on_pv} to obtain inequality \eqref{eq:l2}. Next, we bound $\norm{V^{\pi}_{h+1}}_{\infty}$ by $H$ in inequality \eqref{eq:l3}. To get inequality \eqref{eq:l4}, we use Lemma \ref{lem:sigma-sigma'}, since we can bound $P(\cdot| s, a) - \tilde{P}(\cdot|s, a)$ by $c_2.$ And, we applied Cauchy-Scharwz inequality to get inequality \eqref{eq:l5}. To get inequality \eqref{eq:l6}, we applied Lemma \ref{lem:variance-bound} and substituting $c_1$ and $c_2$ according to equations \eqref{eq:c1-c2}. Finally, inequality \eqref{eq:l7} follows from the fact that $n \geq 2592 |S|^2 H^2 \log{4/\delta_P}.$ Since the result is true for every $s \in S,$ hence the proof is complete. \hfill  $\Box $

\paragraph{Proof of Theorem \ref{thm:gmbl-cmdp-opt-var}:} Let $\delta_P \in (0, 1).$ First, we know that optimistic planning problem \eqref{eq:optimistic-planning} is feasible w.p. at least $1 - |S|^2|A|\delta_P.$ The following events are dependent on this event. Thus, we consider the lowest probability of feasibility and following events.

Now,  we have
\begin{align*}
    V^{\pi^*}_0(s_0) - \sqrt{32\frac{|S| H^3 \log{4/\delta_P}}{n}} \leq \tilde{V}^{\pi^*}_0(s_0) \leq V^{\pi^*}_0(s_0) + \sqrt{\frac{32 |S| H^3 \log{4/\delta_P}}{n}}
\end{align*}
w.p. at least $1 - 3|S|^2|A|H \delta_P$ and
\begin{align*}
    V^{\tilde{\pi}}_0(s_0) - \sqrt{\frac{32|S|H^3 \log{4/\delta_P}}{n}} \leq \tilde{V}^{\tilde{\pi}}_0(s_0) \leq V^{\tilde{\pi}}_0(s_0) + \sqrt{\frac{32 |S|H^3 \log{4/\delta_P}}{n}}
\end{align*}
w.p. at least $1 - 3|S|^2|A|H \delta_P$ according to Lemma \ref{lem:v-v'}. On the other hand, we know that $\tilde{V}^{\pi^*}_0(s_0) \leq \tilde{V}^{\tilde{\pi}}_0(s_0).$ Thus, by combining these results we get
\begin{align*}
    V^{\pi^*}_0(s_0) - \sqrt{\frac{32|S|H^3 \log{4/\delta_P}}{n}} \leq \tilde{V}^{\pi^*}_0(s) \leq \tilde{V}^{\tilde{\pi}}_0(s_0) \leq V^{\tilde{\pi}}_0(s) + \sqrt{\frac{32|S|H^3 \log{4/\delta_P}}{n}}.
\end{align*}
It yields that $V^{\tilde{\pi}}_0(s_0) \geq V^{\pi^*}_0(s_0) - 2\sqrt{\frac{32|S|H^3 \log{4/\delta_P}}{n}}$ w.p. at least $1 - 6|S|^2|A|H\delta_P$ by union bound.

On the other hand, for any $i \in \{ 1, \dots, N \}$ we have
\begin{align*}
    C^{\tilde{\pi}}_{i, 0}(s_0) \leq \tilde{C}^{\tilde{\pi}}_{i, 0}(s_0) + \sqrt{\frac{32|S|H^3 \log{4/\delta_P}}{n}} \leq \bar{C}_i + \sqrt{\frac{32 |S|H^3 \log{4/\delta_P}}{n}}
\end{align*}
w.p. at least $1 - 3|S|^2|A| H \delta_P$ according to Lemma \ref{lem:v-v'}. By taking union bound, we get that all statements for value and cost functions hold w.p. at least $1 - (3N+6) |S|^2 |A| H \delta_P.$ Hence, putting $\epsilon = 2\sqrt{\frac{32|S|H^3 \log{4/\delta_P}}{n}}$ and $\delta = 12(N+2) |S|^2 |A| H \delta_P$ concludes the proof. Please note that $\epsilon < \frac{2}{9}\sqrt{\frac{H}{|S|}}$ would satisfy the assumption in Lemma \ref{lem:v-v'}. \hfill  $\Box$

\subsection{Detailed Proof for Theorem \ref{thm:online-crl-lp}}

%Here are the required lemmas to prove Theorem \ref{thm:online-crl-lp}. 
First, we bound total number of model updates in Algorithm \ref{algo:online-crl-lp}. %Here, we base our analysis 

\begin{lemma}
\label{lem:umax}
The total number of updates under algorithm \ref{algo:online-crl-lp} is bounded by $U_{\max} = |S|^2 |A| m.$
\end{lemma}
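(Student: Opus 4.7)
\textbf{Proof proposal for Lemma \ref{lem:umax}.} The plan is a direct counting argument. First observe that every iteration of the while loop in Algorithm \ref{algo:online-crl-lp} performs exactly one model construction (lines computing $\widehat{P}$, $\mathcal{M}_k$, and $\tilde{\pi}_k$) followed by a single episode of length $H$ (the inner for loop). Hence the number of model updates equals the total number of episodes $K$ executed by the algorithm. Moreover, each of the $H$ steps inside an episode increments exactly one counter $n(s_t,a_t)$, so after $K$ episodes
\begin{equation*}
\sum_{(s,a)\in S\times A} n(s,a) \;=\; K\cdot H.
\end{equation*}

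Next I would use the termination test to upper bound the aggregate count. The while loop exits as soon as every pair $(s,a)$ has $n(s,a)\geq |S|mH$; equivalently, the per-pair sample ``budget'' that is relevant to termination is $|S|mH$. Summing this budget over all $|S||A|$ state-action pairs gives an aggregate budget of $|S|\cdot |A|\cdot |S|mH = |S|^2|A|mH$, which in turn caps $KH$. Dividing by $H$ then yields the claimed bound $K\leq |S|^2|A|m = U_{\max}$.

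The main subtlety---and where the write-up needs a little care---is that the pseudocode increments $n(s,a)$ on every visit with no hard cap at $|S|mH$, so in principle a counter could exceed the per-pair budget by an amount attributable to samples collected after the pair already crossed the threshold. I would handle this by noting that (i) once all pairs have $n(s,a)\geq |S|mH$ the loop exits, and (ii) within a single episode any individual counter increases by at most $H$. Thus the overshoot beyond the per-pair budget is at most $H$ per pair, contributing an additive correction of at most $|S||A|H$ to the aggregate count; this correction is absorbed into $|S|^2|A|mH$ without changing the stated $U_{\max}=|S|^2|A|m$ bound at the level used by the subsequent union-bound arguments (e.g.\ in Lemma \ref{lem:admis} and the proof of Theorem \ref{thm:online-crl-lp}).
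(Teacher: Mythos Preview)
Your counting argument is essentially the same route the paper takes: bound the per-pair sample budget by $|S|mH$, aggregate over all $|S||A|$ pairs, and divide by $H$ to get $K\le |S|^2|A|m$. The paper phrases it per pair (``updates due to a single $(s,a)$ are at most $|S|m$''), whereas you phrase it via the identity $\sum_{s,a} n(s,a)=KH$; these are the same idea.

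However, your handling of the overshoot subtlety does not close the gap. You argue that ``the overshoot beyond the per-pair budget is at most $H$ per pair'' from the fact that a counter increases by at most $H$ within a single episode. That only controls the overshoot accrued in \emph{one} episode; it does not bound the overshoot accumulated over all episodes after a pair has already crossed its threshold. Concretely, if some pair $(s_1,a_1)$ reaches $n(s_1,a_1)\ge |S|mH$ early while another pair $(s_2,a_2)$ is rarely (or never) visited, the while loop keeps running and $n(s_1,a_1)$ can grow far beyond $|S|mH$; your additive correction $|S||A|H$ is then not valid, and in the extreme (an unreachable pair) the loop need not terminate at all. So the step from ``per-pair budget $|S|mH$'' to ``aggregate $\le |S|^2|A|mH$'' is exactly where the argument fails, and your correction does not repair it.

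To be fair, the paper's own proof glosses over the same point: it simply asserts that $n(s,a)$ ``increases up to $|S|mH$,'' which the pseudocode does not enforce. If you want a clean statement, you should either (i) add to the algorithm a cap so that visits to pairs with $n(s,a)\ge |S|mH$ are not counted toward the termination-relevant budget, or (ii) interpret $U_{\max}$ as a bound on the number of \emph{distinct values} any single $n(s,a)$ can take below the threshold (at most $|S|mH$ per pair, hence $|S|^2|A|mH$ total) and argue that the union bound in Lemma~\ref{lem:admis} only needs to range over these distinct counter configurations rather than over episodes. Either fix makes both your argument and the paper's rigorous.
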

\begin{proof}
Let fix a $(s, a)-$pair. Note that $n(s, a)$ is not decreasing and also it increases up to $|S|m H.$ And, since update of model happens at the beginning of each episode, then maximum number of updates due to a single $(s, a)$ happens at most $|S| m$ number of times. Thus, maximum number of updates due to all $(s, a)-$pairs is no larger than $|S|^2|A| m$
\end{proof}

\textbf{\emph{Proof of Lemma \ref{lem:admis}}:} At each episode with model update $k$ and for each $(s,a),$ by Hoeffding's inequality \cite{hoeffding}  we have
\begin{align*}
    |P(s'|s,a) - \hat{P}(s'|s,a)| \leq \sqrt{\frac{\log{(4/\delta_1)}}{2n(s, a)}}
\end{align*}
holds w.p. at least $1 - \delta_1/2.$

By empirical Brenstein's inequality \cite{empirical-bernstein} we have
\begin{align*}
    |P(s'|s, a) - \widehat{P}(s'|s, a)| \leq \sqrt{\frac{2 \widehat{P}(s'|s, a) (1 - \widehat{P}(s'|s, a) )}{n(s, a)} \log{\frac{4}{\delta_1}}} + \frac{2}{3n(s, a)} \log{\frac{4}{\delta_1}}
\end{align*}
w.p. at least $1- \delta_1/2.$

Combining above two inequalities and applying union bound, we get
\begin{align*}
    \mathbb{P}(|P(s'|s, a) - \widehat{P}(s'|s, a)| \leq \min \{ \sqrt{\frac{2 \widehat{P}(s'|s, a) (1 - \widehat{P}(s'|s, a) )}{n(s, a)} \log{\frac{4}{\delta_1}}} + \frac{2}{3n(s, a)} \log{\frac{4}{\delta_1}}, \sqrt{\frac{\log{4/\delta_1}}{2n(s, a)}} \} ) \geq 1 - \delta_1.
\end{align*}
Finally, we get the result by applying union bound over all model updates and next states. \hfill  $\Box $

%\subsection{Bounding Number of episodes with $|X_{k, \kappa, \iota}| > \kappa$} 

Now, we start proving Lemma \ref{lem:no-of-bad-episodes}. But, first we provide some useful lemmas.

\begin{lemma}
\label{lem:nsa}
Total number of observations of $(s,a) \in X_{k, \kappa, \iota}$  with $\kappa \in [1, |S|-1]$ and $\iota > 0$ over all phases $k$ is at most $3|S\times A|mw_\iota \kappa$. $w_\iota = \min \{w_k(s,a): \iota_k(s,a) = \iota \}$.
\end{lemma}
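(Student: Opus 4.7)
The plan is to fix an importance-knownness pair $(\kappa,\iota)$ with $\kappa\ge 1$ and $\iota>0$ and a single state-action pair $(s,a)\in S\times A$, bound the total visits to that one pair across those episodes $k$ in which $(s,a)\in X_{k,\kappa,\iota}$, and then sum trivially over $(s,a)$ to produce the factor $|S\times A|$ that appears in the stated bound.

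Whenever $(s,a)\in X_{k,\kappa,\iota}$, the two defining brackets of knownness and importance give
\[
\kappa\, m\, w_k(s,a)\ \le\ n_k(s,a)\ <\ 2\kappa\, m\, w_k(s,a),\qquad w_k(s,a)\ \le\ \iota\, w_{\min},
\]
so in particular $n_k(s,a)<2\kappa m \iota w_{\min}$. The crucial monotonicity observation I would then exploit is that once the non-decreasing counter $n_k(s,a)$ crosses the threshold $2\kappa m\iota w_{\min}$, the pair $(s,a)$ cannot appear in $X_{k',\kappa,\iota}$ at any later episode $k'$: whenever the importance equals $\iota$ the bound $w_{k'}(s,a)\le \iota w_{\min}$ forces $n_{k'}(s,a)/(m w_{k'}(s,a))\ge 2\kappa$, i.e.\ the knownness has already doubled past $\kappa$ on the discrete scale $\{0,1,2,4,\dots\}$. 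Consequently every visit to $(s,a)$ during an $X_{k,\kappa,\iota}$-episode occurs while $n_k(s,a)$ is strictly below that threshold, save for at most $H$ additional visits accruing in the single last such episode, yielding a per-pair visit count of at most $2\kappa m\iota w_{\min}+H$.

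To close the argument I would convert $\iota w_{\min}$ into $w_\iota$ using the dyadic structure of importance. For $\iota\ge 2$ the interval in which $w_k(s,a)$ lies has length within a factor of two, so $w_k(s,a)>\iota w_{\min}/2$; taking the minimum gives $w_\iota\ge \iota w_{\min}/2$ and hence $\iota w_{\min}\le 2 w_\iota$. For $\iota=1$ the accounting is done directly using $w_\iota\le w_k(s,a)\le w_{\min}$ with episode-by-episode bookkeeping of $v_k(s,a)$. The additive $H$ term is absorbed into the dominant one using $w_{\min}=\epsilon/(4H|S|)$ together with the lower bound on $m$ coming from \eqref{eq:m1}, producing a per-$(s,a)$ bound of $3\,m\,w_\iota\,\kappa$; summing across $(s,a)\in S\times A$ yields the claim.

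The main obstacle I expect is the boundary regime $\iota=1$, where $w_k(s,a)$ can be made arbitrarily small so the coarse identification $\iota w_{\min}\sim w_\iota$ breaks down; handling this cleanly is precisely what forces the bound to feature $w_\iota$ as a realized minimum of per-episode weights rather than the looser importance-induced upper bound $\iota w_{\min}$.
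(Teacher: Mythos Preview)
Your overall strategy---fix $(s,a)$, bound how much the monotone counter $n_k(s,a)$ can grow while $(s,a)\in X_{k,\kappa,\iota}$, then sum over $S\times A$---matches the paper's. But there is a real gap in how you obtain the constant $3$.

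You use only an \emph{upper} bound on the counter: from $\kappa_k(s,a)=\kappa$ and $\iota_k(s,a)=\iota$ you derive $n_k(s,a)<2\kappa m w_k(s,a)\le 2\kappa m\,\iota\, w_{\min}$, and then convert via $\iota w_{\min}\le 2w_\iota$. That chain delivers at best $4\kappa m w_\iota+H$ per pair, not $3\kappa m w_\iota$; the final jump to $3$ in your outline is unsupported. Absorbing $H$ through \eqref{eq:m1} only handles the additive slack---it cannot turn the leading $4$ into $3$.

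The missing ingredient is the matching \emph{lower} bound on the counter. The paper works directly with $w_\iota$ via the two-sided estimate $w_\iota\le w_k(s,a)\le 2w_\iota$ (the left inequality is just the definition of $w_\iota$ as a realized minimum; the right follows from the dyadic bucket structure of importance), which together with $\kappa m w_k(s,a)\le n_k(s,a)\le 2\kappa m w_k(s,a)$ from the knownness bracket gives
\[
m w_\iota\kappa \ \le\ n_k(s,a)\ \le\ 4 m w_\iota\kappa.
\]
The non-decreasing counter therefore lies in an interval of \emph{width} $3m w_\iota\kappa$ whenever $(s,a)\in X_{k,\kappa,\iota}$, and that width is the per-pair observation budget---no detour through $\iota w_{\min}$, no $+H$ absorption, and the constant $3$ drops out immediately. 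Your concern about the $\iota=1$ boundary is legitimate, but the paper does not special-case it; the range argument above is applied uniformly.
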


\begin{proof}
Note that $w_{\iota + 1} = 2w_\iota$ for $\iota > 0$. Consider a phase $k
$ and a fixed $(s,a) \in X_{k, \kappa, \iota}$. Since we assumed $\iota_k(s,a) = \iota$, then $w_\iota \leq w_k(s,a) \leq 2w_\iota$. Similarly, from $\kappa_k(s,a) = \kappa$ we have $\frac{n_k(s,a)}{2mw_k(s,a)} \leq \kappa \leq \frac{n_k(s,a)}{mw_k(s,a)}$ which implies
\begin{align}
    \label{eq:nsa-kappa}
    mw_\iota \kappa \leq mw_k(s,a)\kappa \leq n_k(s,a) \leq 2mw_k(s,a)\kappa \leq 4mw_\iota \kappa.
\end{align}

Therefore, each $(s, a)$ in $\{(s,a)\in X_{k,\kappa,\iota}: k \in \mathop{\mathbb{N}} \}$ can only be observed $3mw_\iota\kappa$. Then, the total observations is at most $3|S\times A|mw_\iota\kappa$.
\end{proof}

\begin{lemma}
\label{lem:episode}
Number of episodes $E_{\kappa,\iota}$ in phases with $|X_{k,\kappa,\iota}| > \kappa$ is bounded for $\alpha \geq 3$ w.h.p.
\begin{align*}
    P(E_{\kappa,\iota} > \alpha N) \leq \exp{(-\frac{\beta w_\iota(\kappa + 1)N}{H})},
\end{align*}
where $N=|S\times A|m$ and $\beta = \frac{\alpha(3/\alpha - 1)^2}{7/3 - 1/\alpha}.$
\end{lemma}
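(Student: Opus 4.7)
The plan is to reduce the statement to a concentration inequality for a carefully chosen martingale. For each episode $k$, define
\[
Y_k := \sum_{h=0}^{H-1} \mathbb{I}\{(s_h,a_h) \in X_{k,\kappa,\iota}\},
\]
the number of steps during episode $k$ that land in the ``bad'' set $X_{k,\kappa,\iota}$. Call episode $k$ \emph{bad} if $|X_{k,\kappa,\iota}| > \kappa$, so $E_{\kappa,\iota}$ counts bad episodes. The intuition driving the proof is that bad episodes both (i) generate many expected observations of pairs in $X_{k,\kappa,\iota}$, and (ii) collectively can only generate a deterministically bounded total number of such observations, by Lemma \ref{lem:nsa}. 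Too many bad episodes would therefore force the observed count to fall far below its conditional mean, an event to which a Bernstein/Freedman martingale bound applies.

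Making this precise, let $\mathcal{F}_{k-1}$ denote the history up to the start of episode $k$. Since $\tilde\pi_k$ and $X_{k,\kappa,\iota}$ are $\mathcal{F}_{k-1}$-measurable, the definition of the weights $w_k(s,a)$ gives, on the event that episode $k$ is bad,
\[
\mathbb{E}[Y_k \mid \mathcal{F}_{k-1}] \;=\; \sum_{(s,a)\in X_{k,\kappa,\iota}} w_k(s,a) \;\geq\; (\kappa+1)\, w_\iota,
\]
where the inequality uses $|X_{k,\kappa,\iota}|\geq \kappa+1$ together with $w_k(s,a)\geq w_\iota$ on $X_{k,\kappa,\iota}$. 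On the other hand, Lemma \ref{lem:nsa} yields the pathwise ceiling
\[
\sum_{k} Y_k \;\leq\; 3\,|S\times A|\,m\, w_\iota\, \kappa \;=\; 3 N w_\iota \kappa \;<\; 3 N w_\iota (\kappa+1).
\]
Finally, $Y_k \leq H$ pointwise, and the conditional variance satisfies $\mathrm{Var}(Y_k\mid\mathcal{F}_{k-1}) \leq H\,\mathbb{E}[Y_k\mid\mathcal{F}_{k-1}]$.

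Combining these ingredients: on $\{E_{\kappa,\iota} > \alpha N\}$, the predictable sum $\sum_{k\text{ bad}}\mathbb{E}[Y_k\mid\mathcal{F}_{k-1}]$ exceeds $\alpha N(\kappa+1)w_\iota$, while the realized sum is at most $3Nw_\iota(\kappa+1)$. The proof then plugs these quantities into a lower-tail multiplicative Bernstein inequality for martingales,
\[
\mathbb{P}\!\left(\sum_{k} Y_k \leq (1-\eta)\mu\right) \;\leq\; \exp\!\left(-\frac{\eta^2 \mu}{2H\bigl(1+\eta/3\bigr)}\right),
\]
with $\mu = \alpha N(\kappa+1)w_\iota$ and the relative deviation $\eta = 1-3/\alpha$; algebraic simplification of $\eta^2/(2(1+\eta/3))$ with $\eta = (\alpha-3)/\alpha$ is precisely what produces the constant $\beta = \alpha(3/\alpha-1)^2/(7/3-1/\alpha)$ in the exponent.

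The hard part will be the rigorous martingale setup rather than the algebra: the set $X_{k,\kappa,\iota}$ is itself random and policy-dependent, so one must be careful to condition on $\mathcal{F}_{k-1}$ before invoking the weight identity, and to pool contributions only from bad episodes while preserving a martingale-difference structure. A secondary obstacle is the bookkeeping that translates the standard Freedman/Bernstein form into the specific constant $\beta$ stated in the lemma; this is routine but sensitive to whether one normalizes by $H$ before applying the inequality.
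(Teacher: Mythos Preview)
Your proposal is correct and follows essentially the same route as the paper. The paper defines the identical per-episode count $\nu_k$, uses the same lower bound $\mathbb{E}[\nu_k\mid\mathcal{F}_{k-1}]\geq(\kappa+1)w_\iota$ on bad episodes, the same pathwise cap $3Nw_\iota\kappa$ from Lemma~\ref{lem:nsa}, the same variance bound $\mathbb{V}[\nu_k\mid\mathcal{F}_{k-1}]\leq H\,\mathbb{E}[\nu_k\mid\mathcal{F}_{k-1}]$, and then applies a Freedman-type inequality (Theorem~22 of \cite{lu}) to the resulting martingale.

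The one place the paper is more explicit than your sketch is exactly the obstacle you flag: to get a clean martingale over a fixed index set with \emph{deterministic} conditional mean, the paper introduces a continuation $\nu_k^+$ (set to $w_\iota(\kappa+1)$ beyond $E_{\kappa,\iota}$) and then a rescaled sequence $\bar\nu_k:=\nu_k^+\,w_\iota(\kappa+1)/\mathbb{E}[\nu_k^+\mid\cdot]$, so that $\mathbb{E}[\bar\nu_k\mid\cdot]=w_\iota(\kappa+1)$ identically and the Doob martingale $B_k$ has $B_0=\alpha N w_\iota(\kappa+1)$. This is precisely the device that resolves your ``pool only bad episodes while preserving a martingale'' concern. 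Note also that the specific constant $\beta=\alpha(3/\alpha-1)^2/(7/3-1/\alpha)$ comes from the Freedman form $\exp\bigl(-t^2/(2\sigma^2+ct/3)\bigr)$ used in \cite{lu}; your quoted multiplicative Bernstein with $2H(1+\eta/3)$ in the denominator would instead yield $8/3-2/\alpha$, so to match the stated $\beta$ exactly you should invoke the same version.
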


\begin{proof}
Let $\nu_k := \sum_{h=0}^{H-1} \mathop{\mathbb{I}} \{(s_h, a_h) \in X_{k, \kappa, \iota} \}$ be number of observations of $(s,a)$ with $|X_{k, \kappa, \iota}| > \kappa.$ We have $k \in \{1, ..., E_{\kappa, \iota} \}.$

In these episodes $|X_{k, \kappa, \iota}| \geq \kappa+1$ and all $(s,a)$ in partition $(\kappa, \iota)$ have $w_k(s,a) \geq w_\iota$, then
\begin{align*}
    \mathop{\mathbb{E}} [\nu_k|\nu_1,..., \nu_{k-1}] \geq (\kappa+1)w_\iota.
\end{align*}

Also $\mathop{\mathbb{V}}[\nu_k|\nu_1,...,\nu_{k-1}] \leq \mathop{\mathbb{E}}[\nu_k|\nu_1,..., \nu_{k-1}]H$ since $\nu_k \in [0,H].$

Now, we define the continuation:
\[\nu_k^+ :=  \begin{cases} 
      \nu_k & i\leq E_{\kappa, \iota} \\
      w_\iota(\kappa+1) & \text{O.W.}
   \end{cases}
\]
and centralized auxiliary sequence
\begin{align*}
    \bar{\nu}_k := \frac{\nu_k^+ w_\iota (\kappa +1)}{\mathop{\mathbb{E}}[\nu_k^+|\nu_1^+,..., \nu_{k-1}^+]}.
\end{align*}

By construction
\begin{align*}
    \mathop{\mathbb{E}}[\bar{\nu}_k|\bar{\nu}_1,...,\bar{\nu}_{k-1}] = w_\iota(\kappa+1).
\end{align*}

According to lemma \ref{lem:nsa}, we have $E_{\kappa, \iota} > \alpha N$ if
\begin{align*}
    \sum^{\alpha N}_{k=1} \bar{\nu}_k \leq 3Nw_\iota \kappa \leq 3Nw_\iota(\kappa+1).
\end{align*}

Now, we define martingale below
\begin{align*}
    B_k := \mathop{\mathbb{E}} \left[ \sum^{\alpha N}_{j=1} \bar{\nu}_j|\bar{\nu}_1,..., \bar{\nu}_k \right] = \sum^{k}_{j=1} \bar{\nu}_j + \sum^{\alpha N}_{j = k+1} \mathop{\mathbb{E}} [\bar{\nu}_j|\bar{\nu}_1, ..., \bar{\nu}_i],
\end{align*}
which gives $B_0 = \alpha Nw_\iota (\kappa + 1)$ and $B_{\alpha N} = \sum^{\alpha N}_{k=1} \bar{\nu}_k.$ Now, since $\nu^+_k \in [0, H]$
\begin{align*}
    |B_{k+1} - B_k| = |\bar{\nu}_k - \mathop{\mathbb{E}} [\bar{\nu}_k|\bar{\nu}_1, ..., \bar{\nu}_{k-1}]| = \left| \frac{w_\iota(\kappa + 1)(\nu^+_k - \mathop{\mathbb{E}}[\nu^+_k|\bar{\nu}_1,..., \bar{\nu}_{k-1}])}{\mathop{\mathbb{E}}[\nu^+_k|\nu^+_1,..., \nu^+_{k-1}] } \right| \leq |\nu^+_k - \mathop{\mathbb{E}}[\nu^+_k|\bar{\nu}_1,..., \bar{\nu}_{k-1}] | \leq H.
\end{align*}

Using
\begin{align*}
    \sigma^2 := \sum^{\alpha N}_{k=1} \mathop{\mathbb{V}} [B_k - B_{k-1}|B_1 - B_0, ..., B_{k-1}-B_{k-2}] = \sum^{\alpha N}_{k=1} \mathop{\mathbb{V}} [\bar{\nu}_k|\bar{\nu}_1,...,\bar{\nu}_{k-1}] \leq \alpha NHw_\iota (\kappa + 1) = HB_0
\end{align*}

we can apply Theorem $22$ of \cite{lu} and obtain
\begin{align*}
    \mathop{\mathbb{P}}(E_{\kappa, \iota} > \alpha N) \leq \mathop{\mathbb{P}} \left( \sum^{\alpha N}_{k=1} \bar{\nu}_k \leq 3Nw_\iota(\kappa + 1) \right) = \mathop{\mathbb{P}} (B_{\alpha N} - B_0 \leq 3B_0/\alpha - B_0) = \leq \exp{(-\frac{(3/\alpha-1)^2B_0^2}{2\sigma^2 + H(1/3 - 1/\alpha)B_0})}
\end{align*}
for $\alpha \geq 3.$ By simplifying it we get
\begin{align*}
    \mathop{\mathbb{P}}(E_{\kappa,\iota} > \alpha N) \leq \exp{-\frac{\alpha(3/\alpha)^2}{7/3 - 1/\alpha} \frac{Nw_\iota (\kappa+1)}{H}}.
\end{align*}
\end{proof}

\iffalse
\begin{lemma}
\label{lem:maxe}
Let $E$ be the number of episodes $k$ for which there are $\kappa$ and $\iota$ with $|X_{k,\kappa,\iota}| > \kappa, E = \sum^{\infty}_{k=1} \mathop{\mathbb{I}} \{ \exists (\kappa, \iota): |X_{k,\kappa,\iota}|>\kappa\}$. And, assume that $m>\frac{6H^2}{\epsilon}\ln{\frac{2E_{max}}{\delta}}.$ Then, $\mathop{\mathbb{P}}(E \leq 6NE_{max}) \geq 1-\delta/2$ where $N = |S\times A|m$ and $E_{max} = \log_2{\frac{H}{w_{min}}} \log_2{|S|}$.
\end{lemma}
\fi

\textbf{\emph{Proof of Lemma \ref{lem:no-of-bad-episodes}}:} Since $w_k(s,a) \leq H$, we have that $\frac{w_k(s,a)}{w_{min}} < \frac{H}{w_{min}}$ and so $\iota_k(s,a) \leq H/w_{min} = 4H^2|S|/\epsilon$. In addition, $|X_{k,\kappa,\iota}| \leq |S\times A|$ for all $k, \kappa, \iota$ and so $|X_{k,\kappa,\iota}| > \kappa$ can only be true for $\kappa \leq |S|$. Hence, only $E_{max} = \log_2{\frac{H}{w_{min}}}\log_2{|S|}$ possible values for $(\kappa,\iota)$ exists that can have $|X_{k,\kappa,\iota}| > \kappa$. By union bound over all $(\kappa,\iota)$ and lemma \ref{lem:episode}, we get
\begin{align*}
\mathop{\mathbb{P}} (E \leq \alpha NE_{max}) &\geq \mathop{\mathbb{P}}(\max_{(\kappa,\iota)} \leq \alpha N) \geq 1-E_{max}\exp{(-\frac{\beta w_\iota (\kappa+1)N}{H})}\\
&\geq 1-E_{max}\exp{(-\frac{\beta w_{min}N}{H})} = 1-E_{max}\exp{(-\frac{\beta w_{min}m|S\times A|}{H})}\\
&= 1- E_{max}\exp{(-\frac{\beta \epsilon m |S\times A|}{4H^2|S|})}.
\end{align*}

Bounding the right hand-side by $1-\delta/2$ and solving for $m$ gives
\begin{align*}
    1- E_{max}\exp{(-\frac{\beta \epsilon m |S\times A|}{4H^2|S|})} \geq 1-\delta/2 \Leftrightarrow m \geq \frac{4H^2|S|}{|S\times A|\beta \epsilon} \ln{\frac{2E_{max}}{\delta}}.
\end{align*}

Hence, the condition
\begin{align*}
    m \geq \frac{4H^2}{\beta \epsilon}\ln{\frac{2E_{max}}{\delta}}
\end{align*}
is sufficient for desired result to hold. Plugging in $\alpha = 6$ and $\beta = \frac{\alpha(3/\alpha-1)^2}{7/3 - 1/\alpha}$ would obtain the statement to show. \hfill  $\Box $

Next, we need the following corollaries to prove Lemma \ref{lem:bounded-mismatch}.

\begin{corollary}
If we substitute the $\delta_P$ with $\delta_1$ in Lemma \ref{lem:p-phat-ptilde}, the result will pertain.
\label{cor:p-phat-ptilde}
\end{corollary}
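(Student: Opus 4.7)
The statement is a relabeling corollary: Lemma~\ref{lem:p-phat-ptilde} is proved treating $\delta_P$ as a generic parameter in $(0,1)$ and uses no further property of that symbol. My plan is therefore to observe that $\delta_1$, as defined in Algorithm~\ref{algo:online-crl-lp} by $\delta_1 = \delta/(4(N+1)|S|U_{\max})$, satisfies $\delta_1 \in (0,1)$ whenever $\delta \in (0,1)$, since $4(N+1)|S|U_{\max} \ge 1$. Hence $\delta_1$ is a legitimate instantiation of the abstract parameter in Lemma~\ref{lem:p-phat-ptilde}.

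The argument then consists of rewriting the hypothesis and the chain of inequalities in the proof of Lemma~\ref{lem:p-phat-ptilde} verbatim with $\delta_P$ replaced by $\delta_1$. Concretely, I would redefine the admissible set as
\[
\mathcal{P}_{\delta_1} := \Bigl\{ p' \in [0,1] : |p' - \widehat{p}| \le \min\Bigl(\sqrt{\tfrac{2\widehat{p}(1-\widehat{p})}{n}\log(4/\delta_1)} + \tfrac{2}{3n}\log(4/\delta_1),\ \sqrt{\tfrac{\log(4/\delta_1)}{2n}}\Bigr)\Bigr\},
\]
and then reuse the triangle-inequality expansion
\[
|p - \tilde p| \le |p - \widehat p| + |\widehat p - \tilde p|,
\]
invoking the empirical Bernstein bound with failure parameter $\delta_1$ (valid with probability $\ge 1-\delta_1$) on the first summand and the membership $\tilde p \in \mathcal{P}_{\delta_1}$ on the second. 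The algebraic manipulation that converts $\widehat p (1-\widehat p)$ into a bound involving $\tilde p(1-\tilde p)$ via the Hoeffding inequality does not depend on the numerical value of the failure parameter, so the same chain of inequalities yields
\[
|p - \tilde p| \le \sqrt{\tfrac{8\tilde p(1-\tilde p)}{n}\log(4/\delta_1)} + 2\sqrt{2}\Bigl(\tfrac{\log(4/\delta_1)}{n}\Bigr)^{3/4} + 3\sqrt{2}\,\tfrac{\log(4/\delta_1)}{n}
\]
with probability at least $1 - \delta_1$.

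There is no genuine obstacle here; the only point worth flagging in the write-up is that the statement is intended to be applied in the Online-CRL context, where the relevant sample count is $n(s,a)$ rather than the fixed $n$ used in the generative-model section. Since $n$ appears only as a generic denominator in both the hypothesis and the conclusion, one can substitute $n \to n(s,a)$ simultaneously with $\delta_P \to \delta_1$ without changing any step. I would state the corollary with this implicit understanding and cite the proof of Lemma~\ref{lem:p-phat-ptilde} rather than reproducing it in full.
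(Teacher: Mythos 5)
Your proposal is correct and matches the paper's (implicit) reasoning: the paper states this corollary without proof precisely because Lemma \ref{lem:p-phat-ptilde} treats $\delta_P$ as a generic parameter in $(0,1)$, so substituting $\delta_1 \in (0,1)$ (and $n \to n(s,a)$) changes nothing in the argument. Your additional remark about the sample count being $n(s,a)$ in the Online-CRL context is a worthwhile clarification but does not constitute a different approach.
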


\begin{corollary}
If we substitute the $\delta_P$ with $\delta_1$ in Lemma \ref{lem:bound_on_pv}, the result will pertain.
\label{cor:bound_on_pv}
\end{corollary}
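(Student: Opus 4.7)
The plan is to verify that Lemma \ref{lem:bound_on_pv} goes through verbatim when the symbol $\delta_P$ is replaced by $\delta_1$, since the lemma is stated in a purely parameter-agnostic form: it takes as a hypothesis that some two-sided tail event on $|P(s'|s,a)-P'(s'|s,a)|$ holds with probability at least $1-\delta_P$, and derives a bound that holds with probability at least $1-|S|\delta_P$. Nothing in the proof actually uses that $\delta_P$ was computed from the sample size $n$ in a particular way.

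My strategy would be to re-read the proof of Lemma \ref{lem:bound_on_pv} and locate every step where $\delta_P$ enters. There are essentially two: the invocation of the hypothesis on $|P(s'|s,a)-P'(s'|s,a)|$ for a fixed successor state $s'$, which is used $|S|$ times, and the union bound over those $|S|$ invocations that produces the final failure budget $|S|\delta_P$. The remaining algebraic manipulations (introduction of the constant function $\bar V^\pi$, the triangle inequality, pulling out $\norm{V'^\pi_{h+1}}_\infty$, and the Cauchy--Schwarz step leading to $\sigma'^\pi_h(s)$) are purely deterministic consequences of the concentration hypothesis, so they carry over with no change.

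The proof of the corollary would then consist of one sentence: under the hypotheses of Lemma \ref{lem:bound_on_pv} with $\delta_1\in(0,1)$ in place of $\delta_P\in(0,1)$, apply the identical argument; the same union bound yields the stated inequalities with probability at least $1-|S|\delta_1$, for both the value-function and cost-function versions. Since the statement is a pure notational relabeling and the proof is verbatim, there is no substantive obstacle; the only thing to check is that wherever the original proof wrote $1-\delta_P$ or $1-|S|\delta_P$ we now write $1-\delta_1$ or $1-|S|\delta_1$, and this is purely mechanical. The reason this corollary is worth stating at all is that downstream in the Online-CRL analysis the confidence parameter is $\delta_1$ rather than $\delta_P$, and having the corollary on record lets those arguments invoke the lemma without re-deriving it.
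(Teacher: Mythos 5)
Your proposal is correct and matches the paper's treatment: the paper states this corollary without proof precisely because, as you observe, Lemma \ref{lem:bound_on_pv} is stated with $\delta_P$ as a free confidence parameter entering only through the concentration hypothesis and the union bound, so the substitution of $\delta_1$ is a purely mechanical relabeling. Your one additional remark about why the corollary is worth recording (so the Online-CRL analysis can invoke the lemma with its own confidence parameter) is exactly the role it plays in the proof of Lemma \ref{lem:bounded-mismatch}.
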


\textbf{\emph{Proof of Lemma \ref{lem:bounded-mismatch}:}} We only prove the statement of value function since the proof procedure for cost is identical.

Before proceeding, in this lemma we reason about a sequence of CMDPs $M_d$ which have the same transition probabilities but different reward matrix $r^{(d)}$ and cost matrices $c^{(d)}.$ Here, we only present the definition of $r^{(d)},$ as definition of $c^{(d)}$ is identical to $r^{(d)}.$ For $d = 0,$ the reward matrix is the original reward function $r$ of $M$ ($r^{(0)} = r$.) The following reward matrices are then defined recursively as $r^{(2d+2)} = \max_h \sigma^{(d),2}_{h:H-1}$, where $\sigma^{(d),2}_{h:H-1}$ is local variance of the value function w.r.t. the rewards $r^{(d)}.$ Note that for every $d$ and $h=0,...,H-1$ and $s\in S,$ we have $r^{(d)}(s) \in [0,H^d].$ % $c^{(d)}$ is also defined identical to $r^{(d)}.$

In addition, we will drop the notations $k$ and policy $\tilde{\pi}_k$ in the following lemmas, since the statements are for a fixed episode $k$ and all value functions, reward matrices and transition kernels are defined under policy $\tilde{\pi}_k.$

Now,
\begin{align*}
\Delta_d & :=  |V^{(d)}_{0}(s_0) - \tilde{V}^{(d)}_0(s_0)| = \vert \sum^{H-2}_{h=0} P^{h-1}(P - \tilde{P})\tilde{V}^{(d)}_{h+1}(s_0) \vert \\
&\leq \sum^{H-1}_{h=0}P^{h-1}|P - \tilde{P} \tilde{V}^{(d)}_{h+1}|(s_0) \\
&=\sum^{H-1}_{h=0}P^{h-1} \left( \sum_{s,a\in S\times A} \mathbb{I} \{s = \cdot, a \sim \tilde{\pi}(s, \cdot, h) \}|(P - \tilde{P})\tilde{V}^{(d)}_{h+1}| \right)(s_0)\\
&= \sum_{s,a\in S\times A} \sum^{H-1}_{h=0} P^{h-1} \left( \mathbb{I} \{s = \cdot, a=\tilde{\pi}(s, \cdot, h) \}|(P - \tilde{P})\tilde{V}^{(d)}_{h+1}| \right)(s_0)\\
&=\sum_{s,a\in S\times A} \sum^{H-1}_{h=0} P^{h-1} \left( \mathbb{I} \{s = \cdot, a=\tilde{\pi}(s, \cdot, h) \}|(P - \tilde{P})\tilde{V}^{(d)}_{h+1}(s)| \right)(s_0)
\end{align*}

The first equality follows from Lemma \ref{lem:v-v'_pi}, the second step from the fact that $V_{h+1}\geq 0$ and $P^{h-1}$ being non-expansive. In the third, we introduce an indicator function which does not change the value as we sum over all $(s,a)$ pairs. The fourth step relies on the linearity of $P$ operators. In the fifth step, we realize that $\mathbb{I}\{s=., a \sim \tilde{\pi}(s, \cdot, h)\}|(P - \tilde{P})\tilde{V}^{(d)}_{h+1}(\cdot)|$ is a function that takes nonzero values for input $s.$ We can therefore replace the argument of the second term with $s$ without changing the value. The term becomes constant and by linearity of $P,$ we can write
\begin{align*}
&|V^{(d)}_{0}(s_0) - \tilde{V}^{(d)}_{0}(s_0)| = \Delta_d \leq \sum_{s,a\in S\times A} \sum^{H-1}_{h=0} P^{h-1} \left( \mathbb{I} \{s = \cdot, a \sim \tilde{\pi}(s, \cdot, h) \}|(P - \tilde{P})\tilde{V}^{(d)}_{h+1}(s)| \right)(s_0)\\
&\leq \sum_{s,a \not \in X} \sum^{H-1}_{h=0} \norm{\tilde{V}^{(d)}_{h+1}}_{\infty} (P^{h-1}\mathbb{I}\{ s=\cdot, a \sim \tilde{\pi}(s, \cdot, h)\})(s_0)\\
&+ \sum_{s,a\in X}\sum^{H-1}_{h=0} |(P - \tilde{P})\tilde{V}^{(d)}_{h+1}(s)|(P^{h-1}\mathbb{I} \{s = \cdot, a \sim \tilde{\pi}(s, \cdot, h) \})(s_0)\\
&\leq \sum_{s,a \not \in X} \sum^{H-1}_{h=0} H^{d+1} (P^{h-1} \mathbb{I}\{ s = \cdot, a \sim \tilde{\pi}(s, \cdot, h)\})(s_0)\\
&+ \sum_{s,a\in X}\sum^{H-1}_{h=0} |(P - \tilde{P})\tilde{V}^{(d)}_{h+1}(s)|(P^{h-1} \mathbb{I} \{s = \cdot, a \sim \tilde{\pi}(s, \cdot, h) \})(s_0)\\
&\leq \sum_{s,a \not \in X} \sum^{H-1}_{h=0} H^{d+1} (P^{h-1} \mathbb{I}\{ s=\cdot, a \sim \tilde{\pi}(s, \cdot, h)\})(s_0)\\
&+ \sum_{s,a\in X}\sum^{H-1}_{h=0} ||S| c_1(s,a)H^{d+1} + c_2(s,a) \sqrt{|S|} \tilde{\sigma}^{(d)}_{h}(s,a)| (P^{h-1} \mathbb{I} \{ s=\cdot, a \sim \tilde{\pi}(s, \cdot, h) \})(s_0)\\
&\leq \sum_{s,a \not \in X} \sum^{H}_{h=0} H^{d+1} (P^{h-1} \mathbb{I}\{ s=\cdot, a \sim \tilde{\pi}(s, \cdot, h)\})(s_0)\\
&+ \sum_{s,a\in X}\sum^{H}_{h=0} | |S| c_1(s,a) H^{d+1}|(P^{h-1}\mathbb{I}\{ s=\cdot, a \sim \tilde{\pi}(s, \cdot, h)\})(s_0)\\
&+ \sum_{s,a\in X}\sum^{H-1}_{h=0} | \sqrt{|S|} c_2(s,a)  \tilde{\sigma}^{(d)}_{h}(s,a)|(P^{h-1}\mathbb{I}\{ s = \cdot, a \sim \tilde{\pi}(s, \cdot, h)\})(s_0)\\
&\leq \sum_{s,a \not \in X} H^{d+1}w(s,a) + \sum_{s,a \in X} |S| c_1(s,a) H^{d+1} w(s,a)\\
&+\sum_{s,a\in X} \sqrt{|S|}c_2(s,a) \sum^{H-1}_{h=0}\tilde{\sigma}^{(d)}_{h}(s,a)(P^{h-1}\mathbb{I}\{ s = \cdot, a \sim \tilde{\pi}(s, \cdot, h)\})(s_0)\\
& \leq w_{min}|S|H^{d+1} + \sum_{s,a \in X} |S| c_1(s,a) H^{d+1} w(s,a) +\sum_{s,a\in X} \sqrt{|S|}c_2(s,a) \sum^{H-1}_{h=0}\tilde{\sigma}^{(d)}_{h}(s,a)(P^{h-1}\mathbb{I}\{ s = \cdot, a \sim \tilde{\pi}(s, \cdot, h)\})(s_0)\\
& = \frac{\epsilon}{4}H^d + \sum_{s,a \in X} |S| c_1(s,a) H^{d+1} w(s,a) +\sum_{s,a\in X} \sqrt{|S|}c_2(s,a) \sum^{H-1}_{h=0}\tilde{\sigma}^{(d)}_{h}(s,a)(P^{h-1}\mathbb{I}\{ s = \cdot, a \sim \tilde{\pi}(s, \cdot, h)\})(s_0)
\end{align*}

In the second inequality, we split the sum over all $(s,a)$ pairs and used the fact that $P$ and $\tilde{P}$ are non-expansive. The next step follows from $\norm{V^{(d)}_{h+1}}_{\infty}\leq\norm{V^{(d)}_{0}}_{\infty}\leq H^{d+1}.$ We then apply Lemma \ref{lem:bound_on_pv} and subsequently use that all terms are nonnegative and the definition of $w(s,a).$ Eventually, the last two lines come from the fact that $w(s,a)\leq w_{min}$ for all $(s,a)$ not in the active set. Besides, please note that we are analyzing under the given policy $\tilde{\pi},$ which implies that there are only $|S|$ nonzero $w$ in non-active set.

Using the assumption that $M \in \mathcal{M}$ and $\tilde{M} \in \mathcal{M}$ from the fact that ELP chooses the optimistic CMDP in $\mathcal{M},$ we can apply Corollary \ref{cor:p-phat-ptilde} and get that
\begin{align*}
    c_1(s,a) = 2\sqrt{2} \Bigl( \frac{\log{4/\delta_1}}{n(s, a)} \Bigr)^{3/4} + 3\sqrt{2} \frac{\log{4/\delta_1}}{n(s, a)}
    ~~~\text{and}~~~
    c_2(s,a) = \sqrt{\frac{8}{n(s,a)}\log{4/\delta_1}}.
\end{align*}

Plugging definitions above we have
\begin{align*}
    \Delta_d &\leq \frac{\epsilon}{4} H^d +  2\sqrt{2}|S| H^{d+1}\log{4/\delta_1}^{3/4}\sum_{s,a \in X} \frac{w(s, a)}{n(s, a)^{3/4}} + 3\sqrt{2}|S| H^{d+1}\log{4/\delta_1} \sum_{s,a \in X} \frac{w(s, a)}{n(s, a)}\\
    & + \sqrt{8 |S| \log{4/\delta_1}} \sum_{s,a\in X} \frac{1}{\sqrt{n(s, a)}} \sum^{H-1}_{h=0}\tilde{\sigma}^{(d)}_{h}(s,a)(P^{h-1}\mathbb{I}\{ s = \cdot, a \sim \tilde{\pi}(s, \cdot, h)\})(s_0)
\end{align*}

Hence, we bound
\begin{align*}
    \Delta_d \leq \frac{\epsilon}{4}H^d + U_d(s_0) + Y_d(s_0) + Z_d(s_0)
\end{align*}
as a sum of three terms which we will consider individually in the following. The first term is
\begin{align*}
    U_d(s_0) &= 2\sqrt{2}|S| H^{d+1}\log{4/\delta_1}^{3/4}\sum_{s,a \in X} \frac{w(s, a)}{n(s, a)^{3/4}}\\
    & \leq 2\sqrt{2}|S| H^{d+5/4}\log{4/\delta_1}^{3/4} \sum_{\kappa, \iota \in \mathcal{K} \times \mathcal{I}} \sum_{s, a \in X_{\kappa, \iota}} \Bigl(\frac{w(s, a)}{n(s, a)}\Bigr)^{3/4}\\
    & \leq 2\sqrt{2}|S| H^{d+5/4}\log{4/\delta_1}^{3/4} \sum_{\kappa, \iota \in \mathcal{K} \times \mathcal{I}} \Bigl(\frac{|X_{\kappa, \iota}|}{m\kappa}\Bigr)^{3/4}\\
    & \leq 2\sqrt{2}|S| H^{d+5/4}\log{4/\delta_1}^{3/4} \sum_{\kappa, \iota \in \mathcal{K} \times \mathcal{I}} \Bigl(\frac{1}{m}\Bigr)^{3/4}\\
    & \leq 2\sqrt{2}|S| H^{d+5/4}\log{4/\delta_1}^{3/4} \mathcal{K} \times \mathcal{I} \Bigl(\frac{1}{m}\Bigr)^{3/4}.
\end{align*}
In the second line, we used Cauchy-Scharwz. Next, we used the fact that for $s, a \in X_{\kappa, \iota},$ we have $n(s, a) \geq m w(s, a) \kappa,$ refer to equation \eqref{eq:nsa-kappa}. Finally, we applied the assumption of $|X_{\kappa, \iota}|\leq \kappa.$ Please note that $\mathcal{K\times\mathcal{I}}$ is the set of all possible $(\kappa,\iota)$ pairs.

The next term is
\begin{align*}
    Y_d(s_0) = 3\sqrt{2}|S| H^{d+1}\log{4/\delta_1} \sum_{s,a \in X} \frac{w(s, a)}{n(s, a)} \leq 3\sqrt{2}|S| H^{d+1}\log{4/\delta_1} \sum_{\kappa, \iota} \frac{|X_{\kappa, \iota}|}{m \kappa} \leq \frac{3\sqrt{2}|S| H^{d+1}\log{4/\delta_1} |\mathcal{K\times\mathcal{I}}|}{m}
\end{align*}
which we used $n(s, a) \geq m w(s, a) \kappa$ again.

The last term is
\begin{align*}
    &Z_d(s_0) = \sqrt{8 |S| \log{4/\delta_1}} \sum_{s,a\in X} \frac{1}{\sqrt{n(s, a)}} \sum^{H-1}_{h=0}\tilde{\sigma}^{(d)}_{h}(s,a)(P^{h-1}\mathbb{I}\{ s = \cdot, a \sim \tilde{\pi}(s, \cdot, h)\})(s_0) \leq \sqrt{8 |S| \log{4/\delta_1}} \\
    & \times \sum_{s,a\in X} \frac{1}{\sqrt{n(s, a)}} \sqrt{\sum^{H-1}_{h=0} P^{h-1}\mathbb{I}\{ s = \cdot, a \sim \tilde{\pi}(s, \cdot, h)\} (s_0) } \sqrt{\sum^{H-1}_{h=0}\tilde{\sigma}^{(d)^2}_{h}(s,a) P^{h-1}\mathbb{I}\{ s = \cdot, a \sim \tilde{\pi}(s, \cdot, h)\} (s_0)}\\
    & = \sqrt{8 |S| \log{4/\delta_1}} \sum_{s,a\in X} \sqrt{\frac{w(s, a)}{n(s, a)} \sum^{H-1}_{h=0}\tilde{\sigma}^{(d)^2}_{h}(s,a) P^{h-1}\mathbb{I}\{ s = \cdot, a \sim \tilde{\pi}(s, \cdot, h)\} (s_0)}\\
    & = \sqrt{8 |S| \log{4/\delta_1}} \sum_{\kappa, \iota} \sum_{s,a\in X_{\kappa, \iota}} \sqrt{\frac{w(s, a)}{n(s, a)} \sum^{H-1}_{h=0}\tilde{\sigma}^{(d)^2}_{h}(s,a) P^{h-1}\mathbb{I}\{ s = \cdot, a \sim \tilde{\pi}(s, \cdot, h)\} (s_0)}\\
    & \leq \sqrt{8 |S| \log{4/\delta_1}} \sum_{\kappa, \iota} \sqrt{ |X_{\kappa, \iota}| \sum_{s,a \in X_{\kappa, \iota}} \frac{w(s, a)}{n(s, a)} \sum^{H-1}_{h=0}\tilde{\sigma}^{(d)^2}_{h}(s,a) P^{h-1}\mathbb{I}\{ s = \cdot, a \sim \tilde{\pi}(s, \cdot, h)\} (s_0)}\\
    & \leq \sqrt{8 |S| \log{4/\delta_1}} \sum_{\kappa, \iota} \sqrt{\frac{1}{m} \sum_{s,a \in X_{\kappa, \iota}} \sum^{H-1}_{h=0}\tilde{\sigma}^{(d)^2}_{h}(s,a) P^{h-1}\mathbb{I}\{ s = \cdot, a \sim \tilde{\pi}(s, \cdot, h)\} (s_0)}\\
    & \leq \sqrt{\frac{8 |S| \log{4/\delta_1} |\mathcal{K} \times \mathcal{I}|}{m} \sum_{s,a \in X} \sum^{H-1}_{h=0}\tilde{\sigma}^{(d)^2}_{h}(s,a) P^{h-1}\mathbb{I}\{ s = \cdot, a \sim \tilde{\pi}(s, \cdot, h)\} (s_0)}\\
    & \leq \sqrt{\frac{8 |S| \log{4/\delta_1} |\mathcal{K} \times \mathcal{I}|}{m} \sum_{s,a \in S \times A} \sum^{H-1}_{h=0}\tilde{\sigma}^{(d)^2}_{h}(s,a) P^{h-1}\mathbb{I}\{ s = \cdot, a \sim \tilde{\pi}(s, \cdot, h)\} (s_0)}\\
    & = \sqrt{\frac{8 |S| \log{4/\delta_1} |\mathcal{K} \times \mathcal{I}|}{m} \sum^{H-1}_{h=0} P^{h-1} \tilde{\sigma}^{(d)^2}_{h} (s_0)}\\
    & \leq \sqrt{\frac{8 |S| H^{2d+3} \log{4/\delta_1} |\mathcal{K} \times \mathcal{I}|}{m}}.
\end{align*}
In the second line, we applied Cauchy-Scharwz inequality. Then, we used the definition of $w(s, a)$ to get to third step. Next, we split the sum and applied Cauchy-Scharwz again to obtain fifth step. Furthermore, we applied the assumption of $|X_{\kappa, \iota}| \leq \kappa$ to get sixth step. Next, we applied Cauchy-Scharwz inequality to obtain seventh step. And, the final step follows from the facts that $P^{h-1}$ is non-expansive and $\norm{\tilde{\sigma}^{(d)}_h}_{\infty} \leq H^{2d+2}.$ Thus, we have
\begin{align}
    \label{eq:first-bound}
    Z_d(s_0) \leq \sqrt{\frac{8 |S| H^{2d+3} \log{4/\delta_1} |\mathcal{K} \times \mathcal{I}|}{m}}.
\end{align}
However, we can improve this bound as follows
\begin{align*}
    Z_d(s_0) & \leq \sqrt{\frac{8 |S| \log{4/\delta_1} |\mathcal{K} \times \mathcal{I}|}{m} \sum^{H-1}_{h=0} P^{h-1} \tilde{\sigma}^{(d)^2}_{h} (s_0)}\\
    & = \sqrt{\frac{8 |S| \log{4/\delta_1} |\mathcal{K} \times \mathcal{I}|}{m} \sum^{H-1}_{h=0} P^{h-1} \tilde{\sigma}^{(d)^2}_{h} (s_0) - \tilde{P}^{h-1} \tilde{\sigma}^{(d)^2}_{h} (s_0) + \tilde{P}^{h-1} \tilde{\sigma}^{(d)^2}_{h} (s_0)}\\
    & \leq \sqrt{\frac{8 |S| \log{4/\delta_1} |\mathcal{K} \times \mathcal{I}|}{m} \Bigl( H^{2d+2} + \sum^{H-1}_{h=0} P^{h-1} r^{(2d+2)} (s_0) - \tilde{P}^{h-1} r^{(2d+2)} (s_0) \Bigr) }\\
    & = \sqrt{\frac{8 |S| \log{4/\delta_1} |\mathcal{K} \times \mathcal{I}|}{m} \Bigl( H^{2d+2} + V^{(2d+2)}_0(s_0) - \tilde{V}^{(2d+2)}_0(s_0) \Bigr) }\\
    & = \sqrt{\frac{8 |S| \log{4/\delta_1} |\mathcal{K} \times \mathcal{I}|}{m} ( H^{2d+2} + \Delta_{2d+2} ) }\\
    & \leq \sqrt{\frac{8 |S| \log{4/\delta_1} |\mathcal{K} \times \mathcal{I}|}{m}  H^{2d+2}} + \sqrt{\frac{8 |S| \log{4/\delta_1} |\mathcal{K} \times \mathcal{I}|}{m}  \Delta_{2d+2}}.
\end{align*}
In the third step, we used Lemma \ref{lem:variance-bound} and definition of $r^{(2d+2)}.$

Now, if we put all the pieces together, we have
\begin{align*}
    \Delta_d &\leq \frac{\epsilon}{4}H^d + 2\sqrt{2}|S| H^{d+5/4}\log{4/\delta_1}^{3/4} \mathcal{K} \times \mathcal{I} \Bigl(\frac{1}{m}\Bigr)^{3/4} + \frac{3\sqrt{2}|S| H^{d+1}\log{4/\delta_1} |\mathcal{K\times\mathcal{I}}|}{m}\\
    & + \sqrt{\frac{8 |S| \log{4/\delta_1} |\mathcal{K} \times \mathcal{I}|}{m}  H^{2d+2}} + \sqrt{\frac{8 |S| \log{4/\delta_1} |\mathcal{K} \times \mathcal{I}|}{m}  \Delta_{2d+2}}.
\end{align*}
If we choose $m$ sufficiently large which will be shown later, then it is straightforward to show that $U_d(s_0) \leq Z_d(s_0)$ and $Y_d(s_0) \leq Z_d(s_0).$ Hence, if we expand the above inequality up to depth $\beta = \ceil{\frac{\log{H}}{2 \log{2}}}$ with $\mathcal{D} = \{0, 2, 6, 14, \dots, \beta \},$ we get
\begin{align*}
    \Delta_0 &\leq \sum_{d\in \mathcal{D} \backslash {\beta}} \Bigl( \frac{8 |S| \log{4/\delta_1} |\mathcal{K} \times \mathcal{I}|}{m} \Bigr)^{\frac{d}{d+2}} \Bigl[\frac{\epsilon}{4} H^d + 3\sqrt{\frac{8|S| \log{4/\delta_1} |\mathcal{K} \times \mathcal{I}| H^{2d+2}}{m}} \Bigr]^{\frac{2}{d+2}}\\
    & + \Bigl( \frac{8 |S| \log{4/\delta_1} |\mathcal{K} \times \mathcal{I}|}{m} \Bigr)^{\frac{\beta}{\beta+2}} \Bigl[\frac{\epsilon}{4} H^{\beta} + 3\sqrt{\frac{8|S| \log{4/\delta_1} |\mathcal{K} \times \mathcal{I}| H^{2\beta+2}}{m}} \Bigr]^{\frac{2}{\beta+2}}.
\end{align*}
Here, we used inequality \eqref{eq:first-bound} to bound $Z_{\beta}(s_0).$ Finally, the proof completes if we let
\begin{align*}
    m = 1280 \frac{|S| H^2}{\epsilon^2} (\log_2 \log_2 H)^2 \log_2^2 \Bigl( \frac{8|S|^2 H^2}{\epsilon} \Bigr) \log{\frac{6}{\delta_1}}.
\end{align*}

\hfill $\Box.$

\paragraph{Proof of Theorem \ref{thm:online-crl-lp}:} By Lemma \ref{lem:no-of-bad-episodes}, we know that number of episodes where $|X_{\kappa, \iota}| > \kappa$ for some $\kappa, \iota$ is bounded by $6E_{\max}|S||A|m$ with probability at least $1 - \frac{\delta}{2(N+1)}.$ For all other episodes, we have by Lemma \ref{lem:bounded-mismatch} that for any $i \in \{ 1, \dots, N \}$
\begin{align}
\label{eq:thm-value}
    |\tilde{V}^{\tilde{\pi}_k}_0(s_0) - V^{\tilde{\pi}_k}_0(s_0)| \leq \epsilon, ~~~ |\tilde{C}^{\tilde{\pi}_k}_{i, 0}(s_0) - C^{\tilde{\pi}_k}_0(s_0)| \leq \epsilon.
\end{align}

Using Lemma \ref{lem:admis}, we get that $M \in \mathcal{M}_k$ for any episode $k$ w.p. at least $1 - \frac{\delta}{2(N+1)}.$ Further, we know that ELP outputs the policy $\tilde{\pi}_k$ such that
\begin{align}
\label{eq:elp-v}
    \tilde{V}^{\tilde{\pi}_k}_0(s_0) \geq V^{\pi^*}_0(s_0), ~~~ \tilde{C}^{\tilde{\pi}_k}_{i, 0}(s_0) \leq \bar{C}_i~~  i \in \{ 1, \dots, N \}
\end{align}
w.p. at least $1 - \frac{\delta}{2(N+1)}.$ Combining the inequalities \eqref{eq:thm-value} with inequalities \eqref{eq:elp-v}, we get that for all episodes with $|X_{\kappa, \iota}| \leq \kappa$ for all $\kappa, \iota$
\begin{align*}
    V^{\tilde{\pi}_k}_0(s_0) \geq V^{\pi^*}_0(s_0) - \epsilon
\end{align*}
w.p. at least $1 - \frac{\delta}{2(N+1)}$ and for any $i$, $ C^{\tilde{\pi}_k}_{i, 0}(s_0) \leq \bar{C}_i + \epsilon$ w.p. at least $1 - \frac{\delta}{2(N+1)}.$ Applying the union bound we get the desired result, if $m$ satisfies
\begin{align*}
    & m \geq 1280 \frac{|S|H^2}{\epsilon^2} (\log_2 \log_2 H)^2 \log_2^2 \Bigl( \frac{8 H^2 |S|^2}{\epsilon} \Bigr) \log{\frac{4}{\delta_1}} ~~ \text{and}\\
    & m \geq \frac{6 H^2}{\epsilon} \log{\frac{2(N+1) E_{\max}}{\delta}}.
\end{align*}

From the definitions, we get
\begin{align*}
    \log{\frac{4}{\delta_1}} = \log{\frac{4(N+1)|S|U_{\max}}{\delta}} = \log{\frac{4(N+1) |S|^2 |A| m }{\delta}}.
\end{align*}
Thus,
\begin{align*}
    m \geq 1280 \frac{|S|H^2}{\epsilon^2} (\log_2 \log_2 H)^2 \log_2^2 \Bigl( \frac{8 H^2 |S|^2}{\epsilon} \Bigr) \log{\frac{4(N+1) |S|^2 |A| m }{\delta}}.
\end{align*}
It is well-known fact that for any constant $B > 0, \nu \geq 2B \ln{B}$ implies $\nu \geq B \ln{\nu}.$ Using this, we can set
\begin{align*}
    m &\geq 2560 \frac{|S|H^2}{\epsilon^2} (\log_2 \log_2 H)^2 \log_2^2 \Bigl( \frac{8 H^2 |S|^2}{\epsilon} \Bigr)\\
    & \times \log{\Bigl( \frac{2048(N+1)|S|^3|A| H^2}{\epsilon^2 \delta} (\log_2 \log_2 H)^2 \log_2^2 \Bigl( \frac{8 H^2 |S|^2}{\epsilon} \Bigr) \Bigr)}.
\end{align*}

On the other hand,
\begin{align*}
    E_{\max} = \log_2 |S| \log_2 \frac{4|S| H^2}{\epsilon} \leq \log_2^2 \frac{4|S| H^2}{\epsilon}
\end{align*}
and
\begin{align*}
    \log{\frac{2(N+1)E_{\max}}{\delta}} &= \log{\frac{2 (N+1)\log_2|S| \log_2 (4|S| H^2 /\epsilon)}{\delta}} \leq \log{\frac{2(N+1) \log_2^2 (4|S| H^2/\epsilon)}{\delta}}\\
    & \leq \log{\frac{16 (N+1) |S|^4 |A| H^2}{\epsilon \delta}}.
\end{align*}

Setting
\begin{align}
\label{eq:m}
    m & = 2560 \frac{|S|H^2}{\epsilon^2} (\log_2 \log_2 H)^2 \log_2^2 \Bigl( \frac{8 H^2 |S|^2}{\epsilon} \Bigr)\\
    & \times \log{\Bigl( \frac{2048(N+1)|S|^4|A| H^2}{\epsilon^2 \delta} (\log_2 \log_2 H)^2 \log_2^2 \Bigl( \frac{8 H^2 |S|^2}{\epsilon} \Bigr) \Bigr)}. \nonumber
\end{align}
is therefore a valid choice for $m$ to ensure that with probability at least $1 - \frac{\delta}{(N+1)},$ there are at most
\begin{align*}
    6E_{\max}|S||A|m = & 15360 \frac{|S|^2 |A| H^2 }{\epsilon^2} (\log_2 \log_2 H)^2 \log_2^2 \Bigl( \frac{4|S| H^2}{\epsilon} \Bigr) \log_2^2 \Bigl( \frac{8 H^2 |S|^2}{\epsilon} \Bigr)\\
    & \times \log{\Bigl( \frac{2048(N+1)|S|^4|A| H^2}{\epsilon^2 \delta} (\log_2 \log_2 H)^2 \log_2^2 \Bigl( \frac{8 H^2 |S|^2}{\epsilon} \Bigr) \Bigr)}
\end{align*}
sub-optimal episodes. \hfill $\Box$

\end{document}